\providecommand{\customgenericname}{}
\newcommand{\ie}{\emph{i.e.}}
\newcommand{\bO}{\mathcal{O}}
\newcommand{\bG}{\mathcal{G}}
\newcommand{\be}{\begin{equation}}
	\newcommand{\ee}{\end{equation}}
\newcommand{\I}{\mathcal{I}}
\begin{document}

\title{AsySQN: Faster Vertical Federated Learning Algorithms with Better Computation Resource Utilization}

\author{\name Qingsong Zhang \email qszhang1995@gmail.com      \\
\addr Xidian University, Xi’an, China, and also with JD Tech.
\\
\name Bin Gu \email {jsgubin@gmail.com} \\
\addr MBZUAI and also with JD Finance America Corporation
\\
\name Cheng Deng	\email {chdeng.xd@gmail.com} \\
 \addr School of Electronic Engineering, Xidian University, Xi'an, China
 \\
 \name Songxiang Gu	\email {songxiang.gu@jd.com} \\
 \addr JD Tech, Beijing, China
 \\
 \name Liefeng Bo	\email {boliefeng@jd.com} \\
 \addr JD Finance America Corporation, USA
 \\
 \name Jian Pei \email{jian\_pei@sfu.ca} \\
 \addr Simon Fraser University, Canada
 \\
 \name Heng Huang \email {heng.huang@pitt.edu} \\
\addr JD Finance America Corporation and also with University of Pittsburgh, USA\\
}
\editor{}

\maketitle

\setcounter{equation}{11}

	\begin{abstract}
		Vertical federated learning (VFL) is an effective paradigm of training the emerging cross-organizational (\emph{e.g.}, different corporations, companies and organizations) collaborative learning with privacy preserving.
		Stochastic gradient descent (SGD) methods are the popular choices for training VFL models because of the low per-iteration computation.
		However, existing SGD-based VFL algorithms are communication-expensive due to a large number of communication rounds.
		Meanwhile, most existing VFL algorithms use synchronous computation which seriously hamper the computation resource utilization  in real-world applications. To address the challenges of communication and computation resource utilization, we propose an asynchronous stochastic quasi-Newton (AsySQN) framework for VFL, under which three algorithms, {\ie} AsySQN-SGD, -SVRG and -SAGA, are proposed. The proposed AsySQN-type algorithms making descent steps scaled by approximate (without calculating the inverse Hessian matrix explicitly) Hessian information convergence much faster than SGD-based methods in practice and thus can dramatically reduce the number of communication rounds. Moreover, the adopted asynchronous computation can make better use of the computation resource.
		We theoretically prove the convergence rates of our proposed algorithms for strongly convex problems. Extensive numerical experiments on real-word datasets demonstrate the lower communication costs and  better computation resource utilization of our algorithms compared with state-of-the-art VFL algorithms.
	\end{abstract}

	\section{Introduction}
	Federated learning attracts much attention from both  academic and industry \cite{mcmahan2017communication,yang2020federated,kairouz2019advances,gong2016private,zhang2021secure} because it meets the emerging demands of collaboratively-modeling with privacy-preserving. Currently, existing federated learning frameworks can be categorized into two main classes, \ie, horizon federated  learning (HFL) and vertical federated learning (VFL). In HFL, samples sharing the same features are distributed over different parties. While, as for VFL, data owned by different parties have the same sample IDs but disjoint subsets of features. Such scenario is common in the industry applications of collaborative learning, such as medical study, financial risk, and  targeted marketing \cite{zhang2021secure,gong2016private,yang2019federated,cheng2019secureboost,liu2019federated,hu2019fdml}. For example, E-commerce companies owning the online shopping information could collaboratively train joint-models with banks and digital finance companies that own other information of the same people such as the average monthly deposit and online consumption, respectively, to achieve a precise customer profiling.
	In this paper, we focus on VFL.
	
	In the real VFL system, different parties always represent different companies or organizations across different networks, or even in a wireless environment with limited bandwidth \cite{mcmahan2017communication}. The frequent communications with large per-round communication overhead (PRCO) between different parties are thus much expensive, making the communication expense being one of the main bottlenecks for efficiently training VFL models. On the other hand, for VFL applications in industry, different parties always own unbalanced computation resources (CR). For example, it is common that large corporations and small companies collaboratively optimize the joint-model, where the former have better CR while the later have the poorer. In this case, synchronous computation has a poor computation resource utilization (CRU). Because corporations owning better CR have to waste its CR to wait for the stragglers for synchronization, leading to another bottleneck  for efficiently  training VFL models. Thus, it is desired to develop algorithms with lower communication cost (CC) and better CRU to efficiently train VFL models in the real-world applications.
	
	Currently, there are extensive works focusing on VFL. Some works focus on designing different machine learning models for VFL such as linear regression \cite{gascon2016secure}, logistic regression \cite{hardy2017private} and tree model \cite{cheng2019secureboost}. Some works also aim at developing secure optimization algorithms for training VFL models, such as the SGD-based methods \cite{zhang2021secure,liu2019communication,gu2020Privacy}, which are popular due to the per-iteration computation efficiency but are communication-expensive due to the large number of communication rounds (NCR). Especially, there have been several works focusing on addressing the CC and CRU challenges of VFL. The quasi-Newton (QN) based framework \cite{yang2019quasi} is designed to reduce the number of communication rounds (NCR) of SGD-based methods, and the bilevel asynchronous VFL framework (VF$\text{B}^2$) \cite{zhang2021secure} and AFVP algorithms \cite{gu2020Privacy} are developed to achieve better CRU.
	
	However, in the QN-based framework \cite{yang2019quasi}, 1) the gradient differences are transmitted to globally compute the approximate Hessian information, which has expensive PRCO, 2) the synchronous computation is adopted. Thus, QN-based framework still has the large CC and dramatically sacrifices the CRU. Moreover, VF$\text{B}^2$ \cite{zhang2021secure} and AFVP algorithms \cite{gu2020Privacy} are communication-expensive owing to large NCR. Thus, it is still challenging to design VFL algorithms with lower CC and better CRU for real-world  scenarios.
	
	To address this challenge, we propose an asynchronous stochastic quasi-Newton based framework for VFL, \ie, AsySQN. Specifically, AsySQN-type algorithms significantly improve the practical convergence speed by utilizing approximate Hessian information to obtain a better descent direction, and thus reduce the NCR. Especially, the approximate Hessian information is locally computed and only scalars are necessary to be transmitted, which thus has low PRCO. Meanwhile, the AsySQN framework enables all parties update the model asynchronously, and thus keeps the CR being utilized all the time. Moreover,  we consider adopting the vanilla SGD and its variance reduction variants, \ie, SVRG and SAGA, as the stochastic gradient estimator,  due to their  promising performance in practice.
	We summarize the contributions of this paper as follows.
	\begin{itemize}
		\item
		We propose a novel asynchronous stochastic quasi-Newton  framework (AsySQN) for VFL, which has the lower CC and better CRU.
		\item
		Three AsySQN-type algorithms, including AsySQN-SGD and its variance reduction variants AsySQN-SVRG and -SAGA, are proposed under AsySQN. Moreover, we theoretically prove the convergence rates of these three algorithms for strongly convex problems.
	\end{itemize}
	\begin{figure}[!t]
		\centering
		\includegraphics[width=0.8\linewidth]{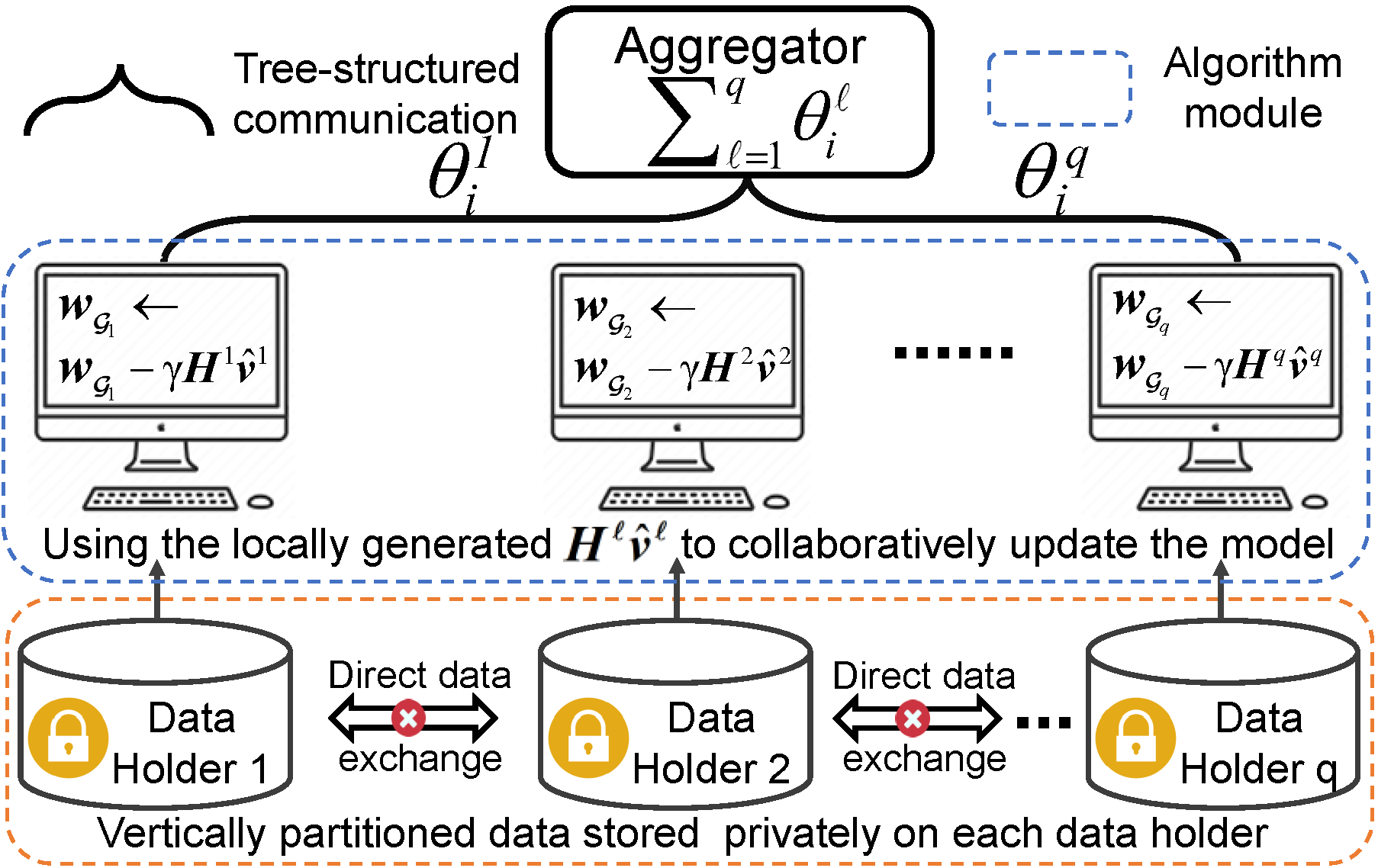}
		\caption{System structure of AsySQN framework.}
		\label{struc}
	\end{figure}
	\section{Methodology}
	In this section, we formulate the problem studied in this paper and propose the AsySQN framework for VFL, which has the lower CC and the better CRU when applied to the industry.
	\subsection{Problem Formulation}
	In this paper, we consider a VFL system with $q$ parties, where each party holds different features of the same samples. Given a training set $\{x_i,y_i\}_{i=1}^n$, where  $x_i \in \mathbb{R}^d $, $y_i \in \{-1, +1\}$ for binary classification task or $y_i \in \mathbb{R}$ for regression problem.  Then $x$  can be represented as a concatenation of all features, \ie, $x=[x_{1}, \cdots, x_{q}]$, where $x_{\ell} \in \mathbb{R}^{d_\ell}$ is stored privately on party $\ell$, and $\sum_{\ell=1}^{q}d_{\ell} = d$.  Similar to previous works \cite{gong2016private,hu2019fdml}, we assume the labels are held by all parties. In this paper, we consider the model in the form of $w^{\mathrm{T}} x$, where $w \in \mathbb{R}^d $ corresponds to the model parameters. Particularly, we focus on the regularized empirical risk minimization problem with following form
	\begin{align}\label{P}
		\min_{w\in \mathbb{R}^{d}} f(w) := \frac{1}{n} \sum_{i=1}^{n} \underbrace{\mathcal{L}\left(\theta_i, y_{i}\right)+\lambda g(w)}_{f_{i}(w)}, \tag{P}
	\end{align}
	where $\theta_i = \sum_{\ell=1}^{q}\theta_i^{\ell}=\sum_{\ell=1}^{q}w_{{\ell }}^{\mathrm{T}} {(x_i)}_{{\ell }}$, $\mathcal{L}$ denotes the loss function, $g(w)=\sum_{\ell=1}^{q} g_\ell(w_{\ell})$ is the regularization term,  $f_i: \mathbb{R}^d \to \mathbb{R}$ is strongly convex. Problem~\ref{P} capsules many machine learning problems such as  the widely used $\ell_2$-regularized logistic regression \cite{conroy2012fast}, least squares support vector machine \cite{suykens1999least} and  ridge regression \cite{shen2013novel}.

	\subsection{AsySQN Framework}
	To address the CC and CRU challenges for VFL application to industry, we propose the AsySQN framework shown in Fig.~\ref{struc}. In AsySQN, the data are vertically distributed over all parties and each party can not directly exchange the data due to privacy concerning. As shown in algorithm module, each party uses the descent direction $d^\ell=H^\ell v^\ell$ generated locally (refer to Algorithms \ref{SdLBFGS}) to update its model and the $\theta_i^\ell$ used for calculating the local $H^\ell$ are aggregated from the other parties through Algorithm \ref{safer_tree}.
	In the following, we present that it is not easy to design AsySQN.

	First, we consider using locally stored information, \ie, the block-coordinate gradient difference and local parameter difference, to calculate the approximate local Hessian information implicitly by SLBFGS \cite{zhang2020faster} to circumvent the large PRCO of transmitting the gradient difference \cite{yang2019quasi}. Then, to improve the CRU of SQN methods when applied to real-world VFL systems with unbalanced computation resource, we consider asynchronously parallelizing the SQN algorithms for VFL. However, such parallelization can be difficult because those $\theta_i^\ell$'s contributed by the other parties for computing the $k$-th ($k$ is current local iteration number) local $H^\ell_k$ are always stale, which may lead to an unstable SLBFGS process and make the generated local $H_k$ (for notation abbreviation, we omit the superscript $\ell$, so do other notations in this section) even not positive semidefinite for (strongly) convex problem. To address this challenge, motivated by damped LBFGS \cite{nocedal2006numerical} for nonconvex problem, we turn to designing the stochastic damped L-BFGS (SdLBFGS) for VFL with painstaking to ensure the generated $H_k$ be positive semidefinite (please refer to the reXiv version for the proof). The corresponding algorithm is shown in Algorithm \ref{SdLBFGS}.
	\begin{algorithm}[!t]
		\caption{{ Stochastic damped L-BFGS on Party $\ell$}.}\label{SdLBFGS}
		\begin{algorithmic}[1]
			\REQUIRE {Let $k$ be current local iteration number, $m$ is the memory size, stochastic gradient estimator $v_{k-1}^{\ell}$, index set $\I_k$ at local iteration $k$ and vector pairs $\{s_i^\ell, \bar{y}_i^\ell, \rho_{i}^\ell\}$ $i=k-m,\ldots,k-2$ stored on party $\ell$, and let $u_0=v_k^\ell$}
			\STATE Calculate $s_{k-1}^\ell $, $\bar{y}_{k-1}^\ell$ and $\gamma_k^\ell$
			\STATE  Calculate $\hat{y}_{k-1}^\ell$ through Eq.~\ref{2.1} and $\rho_{k-1}^\ell=((s_{k-1}^\ell)^\top {\hat{y}_{k-1}^\ell})^{-1}$
			\FOR {$i=0,\ldots,\min\{m,k-1\}-1$}
			\STATE  Calculate $\mu_i^\ell=\rho_{k-i-1}^\ell(u_i^\ell)^\top s_{k-i-1}^\ell$
			\STATE  Calculate $u_{i+1}^\ell = u_i^\ell - \mu_i^\ell {\hat{y}_{k-i-1}^\ell}$
			\ENDFOR
			\STATE  Calculate $v_0^\ell=(\gamma_k^\ell)^{-1}u_p^\ell$
			\FOR {$i=0,\ldots,\min\{m,k-1\}-1$}
			\STATE  Calculate $\nu_i^\ell=\rho_{k-m+i}^\ell(v_i^\ell)^\top {\hat{y}_{k-m+i}^\ell}$
			\STATE  Calculate $
			\bar{v}_{i+1} = \bar{v}_i^\ell + (\mu_{m-i-1}^\ell-\nu_i^\ell)s_{k-m+i}^\ell$.
			\ENDFOR
			\ENSURE {$d^\ell=H_k^\ell v_k^\ell=\bar{v}_p^\ell$.}
		\end{algorithmic}
	\end{algorithm}
	
	Instead of using gradient difference transmitted from the other parties, Algorithm \ref{SdLBFGS} uses the history information stored locally on party $\ell$ to generate a local descent direction $d_k = H_kv_k$ without calculating inverse matrix $H_k$ explicitly. In Algorithm \ref{SdLBFGS},  $s_{k-1} = x_{k}-x_{k-1}$ and $\bar{y}_{k-1} = v_{k} - v_{k-1}$, and $\gamma_{k} = {\mathrm{max}}\{\frac{\bar{y}_{k-1}^\top \bar{y}_{k-1}}{s_{k-1}^\top \bar{y}_{k-1}}, \delta \}$, where $\delta$ is a positive constant. Different from traditional SLBFGS for convex problem \cite{byrd2016stochastic}, algorithm \ref{SdLBFGS} introduces a new vector {$\hat{y}_{k-1}$}.
	\be {\label{2.1}}
	\hat{y}_{k-1} = \theta_{k-1} \bar{y}_{k-1} + (1-\theta_{k-1})H_{{k-1},0}^{-1}s_{k-1}, k\geq1,
	\ee
	where $H_{k,0} = \gamma_{k}^{-1}I_{d_\ell \times d_\ell} $, $k\geq0$, and $\theta_{k-1}$ is defined~as
	\be \label{2.2}
	\theta_{k-1}=\left\{\begin{array}{ll}{\frac{0.7 \sigma_{k-1}}{\sigma_{k-1}-s_{k-1}^{\mathrm{T}} \bar{y}_{k-1}},} & {\text { if } s_{k-1}^{\mathrm{T}} \bar{y}_{k-1}<0.3 \sigma_{k-1}} \\ {1,} & {\text { otherwise }}\end{array}\right.,
	\ee
	where $\sigma_{k-1} = s_{k-1}^{\mathrm{T}} H_{k, 0}^{-1} s_{k-1}$.  Since vector pairs $\{s_{i}, \hat{y}_{i}\}_{i=0}^{k-1}$ are obtained, and then $H_kv_k$ can be approximated through the two-loop recursion \ie, steps 3 to 10. Importantly,  the local $H_k$ implicitly generated is positive semidefinite despite that the history information is stale (Please refer to arXiv version for the proof).
	
	In our AsySQN framework, each party needs to compute the corresponding stochastic (block-coordinate) gradient estimator $v_k$ for generating the local $H_k$. Given $f_i(w)$ defined in Problem \ref{P}, the block-coordinate gradient can be represented as
	\begin{equation}\label{gradient}
		\nabla_{\bG_\ell} f_i(w) =H(\theta_i,y_i)(x_i)_{\ell} + \lambda \nabla g_\ell(w_{\bG_\ell}),
	\end{equation}
	where $H(\theta_i,y_i)=\frac{\partial \mathcal{L} (\theta_i,y_i)}{\partial \theta_i}$. Thus, a party need obtain $\theta_i=\sum_{\ell=1}^{q} w_{{\ell}}^{\mathrm{T}} {(x_i)}_{{\ell}}$ for computing the block-coordinate gradient. To avoid the large communication overhead of directly transmitting $ w_{{\ell}}^{\mathrm{T}}$ and ${(x_i)}_{{\ell}}$ and prevent the directly leaking of them, we consider transmitting the  computational results of $\theta_i^\ell$.  Many recent works achieved this in different manners \cite{zhang2021secure,liu2019communication,hu2019fdml,gu2020federated}. In this paper, we use the efficient tree-structured communication scheme \cite{zhang2018feature}.
	
	{\noindent{\bf{Aggregating $\theta_i$ with Privacy-Preserving:}}} The details are summarized in Algorithm~~\ref{safer_tree}. Specifically, at step~2, $\theta_i^\ell=w_{\ell}^{\mathrm{T}} {(x_i)}_{\ell}$ is computed locally on the $\ell$-th party to prevent the direct leakage of $w_{\ell}$ and ${(x_i)}_{\ell}$. Especially, a random number $\delta_{\ell}$ is added to $\theta_i^\ell$ to mask the value of $\theta_i^\ell$, which can enhance the security during aggregation process. At steps~4 and 5, $\theta_1$ and $\theta_2$ are aggregated through tree structures $T_1$ and $T_2$, respectively. Note that $T_2$ is totally different (refer to \cite{gu2020federated} for definition) from $T_1$  that can prevent the random value being removed under threat model 1 (defined in Section \ref{sec_security}).
	Finally, value of $\theta_i$ is recovered by removing term $\sum_{\ell =1}^{q} \delta_{\ell }$ from $\sum_{\ell =1}^{q} (\theta_i^\ell +\delta_{\ell})$ at the output step. Using such aggregation strategy, ${(x_i)}_{{\ell }}$ and $w_{{\ell }} $ are prevented from leaking during the aggregation, the data and model securities are thus guaranteed
	\begin{algorithm}[!t]
		\caption{Safe algorithm of obtaining $\theta_i$ }\label{safer_tree}
		\begin{algorithmic}[1]
			\REQUIRE { $w$, index $i$ } \\
			{ \bf{Do this in parallel}}
			\FOR {$\ell '=1, \cdots, q$}
			\STATE Generate a  random number $\delta_{\ell '}$ and calculate $\theta_i^{\ell '}+ \delta_{\ell '}$,
			\ENDFOR
			\STATE Obtain $\varphi_1 = \sum_{\ell '=1}^{q} (\theta_i^{\ell '} +\delta_{\ell '})$ based on tree structure $T_1$.
			\STATE Obatin $\varphi_2 = \sum_{\ell '=1}^{q} \delta_{\ell '} $ based on significally different tree structure (please refer to \cite{gu2020federated}) $T_2\neq T_1$.
			\ENSURE  {$\theta_i =\varphi_1-\varphi_2$}
		\end{algorithmic}
	\end{algorithm}
	\subsection{AsySQN-Type Algorithms for VFL}
Vanilia SGD \cite{bottou2010large}  and its variance reduction variants \cite{huang2019faster,huang2020accelerated,huang2019nonconvex} are  popular methods for learning machine learning (ML) models \cite{dang2020large,yang2020adversarial}. In this paper, we thus propose  three SGD-based AsySQN algorithms.

	\noindent{\bf AsySQN-SGD:}
	First, we propose the vanilla AsySQN-SGD summarized in Algorithm~\ref{AsySQNSGD}. At each iteration,  AsySQN-SGD randomly samples a batch of samples $\I$ with replacement, and then obtain the vector $\theta_{\I}$ asynchronously based on tree-structured communication. Based on the received  vector $\theta_{\I}$,  $\nabla_{\ell}f_{{i}}(\widehat{w})$ is computed as Eq.~\ref{gradient} and the stochastic gradient estimator used for generating $d_k$ is computed as $\widehat{v}^{\ell}= \nabla_{\ell}f_{{\I}}(\widehat{w}) =\frac{1}{\left|\I\right|} \sum_{i \in \I} \nabla_{\ell} f_{i}(\widehat{w})$.\\
	\begin{algorithm}[!t]
		\caption{AsySQN-SGD algorithm on the $\ell$-th  worker}\label{AsySQNSGD}
		\begin{algorithmic}[1]
			\REQUIRE {Data $\{D^{\ell '}\}_{\ell '=1}^{q}$ and learning rate $\gamma$}
			\STATE Initialize $w_{\ell}\in \mathbb{R}^d$.\\
			{ \bf{Keep doing in parallel}}
			\STATE \quad  Sample $\I \overset{\text{Unif}}{\sim} \{1,\ldots,n\}$.
			\STATE \quad Compute $\theta_{\I}=\{\theta_i\}_{i\in\I}$  asynchronously based on\\
			\quad Algorithm~\ref{safer_tree}.
			\STATE  \quad Compute $\widehat{v}^{\ell}=\nabla_{{\ell}} f_{\I}(\widehat{w})$.
			\STATE \quad Compute $d^\ell = {H^{\ell}}\widehat{v}^{\ell}$ through Algorithm \ref{SdLBFGS},
			\STATE \quad Update $w_{{\ell}} \leftarrow w_{{\ell}}-\gamma d^\ell$. \\
			{ \bf{End parallel }}
			\ENSURE {$w_{\ell}$}
		\end{algorithmic}
	\end{algorithm}
	\noindent{\bf AsySQN-SVRG:}
	The proposed AsySQN-SVRG with an improved convergence rate than AsySQN-SGD is shown in Algorithm~\ref{AsySQNSVRG}. Different from AsySQN-SGD directly using the stochastic gradient for updating, AsySQN-SVRG adopts the variance reduction technique to control the intrinsic variance of the stochastic gradient estimator. In this case,
	$\widehat{v}^{\ell}=\nabla_{{\ell}} f_{\I}(\widehat{w})-\nabla_{{\ell}} f_{\I}\left(w^{s}\right)+\nabla_{{\ell}} f\left(w^{s}\right)$.
	\begin{algorithm}[!t]
		\caption{AsySQN-SVRG algorithm on the $\ell$-th worker}\label{AsySQNSVRG}
		\begin{algorithmic}[1]
			\REQUIRE {Data $\{D^{\ell '}\}_{\ell '=1}^{q}$  and learning rate $\gamma$}
			\STATE Initialize $w_{\ell}^0\in \mathbb{R}^{d_\ell}$.
			\FOR {$s=0, 1, \ldots, S-1$}
			\STATE Compute the full local gradient $\nabla_{{\ell}} f\left(w^{s}\right)=\frac{1}{n} \sum_{i=1}^{n} \nabla_{{\ell}} f_{i}\left( {w}^{s}\right)$ through tree-structured communication.
			\STATE $w_{{\ell}}=w_{{\ell}}^{s}$.\\
			\textbf{Keep doing in parallel}
			\STATE \  Sample $\I \overset{\text{Unif}}{\sim} \{1,\ldots,n\}$.
			\STATE \ Compute $\theta_{\I}=\{\theta_i\}_{i\in\I}$
			and $\theta(w^s)_{\I}=\{\theta_i(w^s)\}_{i\in\I}$ \\
			\ based on Algorithm~\ref{safer_tree}.
			\STATE \  Compute $\widehat{v}^{\ell}=\nabla_{{\ell}} f_{\I}(\widehat{w})-\nabla_{{\ell}} f_{\I}\left(w^{s}\right)+\nabla_{{\ell}} f\left(w^{s}\right)$
			\STATE \  Compute $d^\ell = {H^{\ell}}\widehat{v}^{\ell}$ through Algorithm \ref{SdLBFGS},
			\STATE \  Update $w_{{\ell}} \leftarrow w_{{\ell}}-\gamma d^\ell$.\\
			\textbf{End parallel loop}
			\STATE $w_{{\ell}}^{s+1}=w_{{\ell}}$.
			\ENDFOR
			\ENSURE {$w_{\ell}$}
		\end{algorithmic}
	\end{algorithm}
	
	\noindent{\bf AsySQN-SAGA:}
	AsySQN-SAGA enjoying the same convergence rate with AsySQN-SVRG is shown in Algorithm~\ref{AsySQNSAGA}. Different from Algorithm~\ref{AsySQNSVRG} using ${w}^s$ as the reference gradient, AsySQN-SAGA uses the average of history gradients stored in a table. The corresponding $\widehat{v}^{\ell}$ is  computed as $\widehat{v}^{\ell}=\nabla_{{\ell}} f_{\I}(\widehat{w})-\widehat{\alpha}_{\I}^{\ell}+\frac{1}{n} \sum_{i=1}^{n} \widehat{\alpha}_{i}^{\ell}$.
		\begin{algorithm}[!t]
			\caption{AsySQN-SAGA algorithm on the $\ell$-th  worker}\label{AsySQNSAGA}
			\begin{algorithmic}[1]
				\REQUIRE {Data $\{D^{\ell '}\}_{\ell '=1}^{q}$ and learning rate $\gamma$}
				\STATE Initialize $w_{\ell}\in \mathbb{R}^{d_\ell}$.
				\STATE Compute the local gradient $\widehat{\alpha}_{i}^{\ell}=\nabla_{{\ell}} f_{i}(\widehat{w})$, for $\forall i \in\{1, \ldots, n\}$ through tree-structured communication.\\
				\textbf{Keep doing in parallel}
				\STATE \quad  Sample $\I \overset{\text{Unif}}{\sim} \{1,\ldots,n\}$.
				\STATE \quad Compute $\theta_{\I}=\{\theta_i\}_{i\in\I}$  asynchronously based on \\
				\quad Algorithm~\ref{safer_tree}.
				\STATE \quad Compute $\widehat{v}^{\ell}=\nabla_{{\ell}} f_{\I}(\widehat{w})-\widehat{\alpha}_{\I}^{\ell}+\frac{1}{n} \sum_{i=1}^{n} \widehat{\alpha}_{i}^{\ell}$.
				\STATE \quad  Compute $d^\ell = {H^{\ell}}\widehat{v}^{\ell}$ through Algorithm \ref{SdLBFGS},
				\STATE \quad Update $w_{{\ell}} \leftarrow w_{{\ell}}-\gamma d^\ell$.
				\STATE \quad Update $\widehat{\alpha}_{\I}^{\ell} \leftarrow \nabla_{{\ell}} f_{\I}(\widehat{w})$.\\
				\textbf{End parallel loop}
				\ENSURE {$w_{\ell}$}
			\end{algorithmic}
		\end{algorithm}
	\section{Convergence Analysis}
	In this section, the convergence analysis is presented. Please refer to the arXiv verison for details.
	\subsection{Preliminaries}
	\begin{assumption}\label{assum1}
		Each  function $f_i$, $i=1,\ldots,n$, is $\mu$-strongly convex, i.e., $\forall \ w,\  w'\in \mathbb{R}^d$ there exists a $\mu>0$ such that
		\begin{equation}
			f_i(w)\geq f_i(w') +  \langle \nabla f_i(w'), w- w' \rangle + \frac{\mu}{2}\|w-w'\|^2.
		\end{equation}
	\end{assumption}
	\begin{assumption}\label{assum2}
		For each  function $f_i$, $i=1,\ldots,n$, we assume the following conditions hold:\\
		\noindent{\bf 2.1 Lipschitz Gradient:}
		There exists  $L>0$ such that
		\begin{equation}
			\|\nabla f_i (w) - \nabla f_i (w')\| \le L\|w-w'\|, \quad \forall \ w,w'\in \mathbb{R}^d
		\end{equation}
		\noindent{\bf 2.2 Block-Coordinate Lipschitz Gradient:}
		There exists an $L_\ell>0$ for the $\ell$-th block, where $\ell\in[q]$ such that
		\begin{equation}
			\|\nabla_{\ell} f_i (w+U_\ell\Delta_\ell) - \nabla_{\ell} f_i (w)\| \le L_\ell\|\Delta_\ell\|,
		\end{equation}
		where $\Delta_\ell \in \mathbb{R}^{d_\ell}$, $U_\ell \in \mathbb{R}^{d\times d_\ell}$ and $[U_1, \cdots, U_q] = I_d$.
		
		\noindent{\bf 2.3 Bounded Block-Coordinate Gradient:}
		There exists a constant $G$ such that $\|\nabla_{\ell} f_i(w)\|^2\leq G $ for $\ell$, $\ell =1,\cdots,q$.
	\end{assumption}
	Assumptions \ref{assum2}.1 to \ref{assum2}.3  are standard for convergence analysis in previous works \cite{gu2020Privacy,zhang2021secure}.
	\begin{assumption}\label{assum3}
		We introduce the following assumptions necessary for the analysis of stochastic quasi-Newton methods.\\
		\noindent{\bf 3.1}
		For $ i\in [n]$, function $f_i(w)$ is twice continuously differentiable w.r.t. $w$ and for the $\ell$-th block, where $\ell\in [q]$,
		there exists two positive constant $\kappa_1$ and $\kappa_2$ such that for $\forall$ $w\in\mathbb{R}^{d_\ell}$ there is
		\begin{equation}\label{hess}
			\kappa_1 I \preceq \nabla^2_{\ell} f_i(w) \preceq  \kappa_2I
		\end{equation}
		where notation $A \preceq B $ with $A, B\in \mathbb{R}^{d_\ell \times d_\ell}$ means that $A-B$ is positive semidefinite.\\
		\noindent{\bf 3.2}
		There exist two positive constants $\sigma_{1}$, and $\sigma_{2}$ such that
		\begin{align}
			\sigma_{1}I \preceq H_k^\ell \preceq \sigma_{2}I, \quad \ell \in[q]
		\end{align}
		where {$H_k^\ell$ is the inverse Hessian approximation matrix} on party $\ell$.\\
		\noindent{\bf 3.3}
		For any local iteration $k\ge 2$, the random variable $H_k^\ell$ ($k\ge 2$) depends only on $v_{k-1}$ and $\mathcal{I}_{k}$
		\begin{align}
			\mathbb{E}[H_kv_k|\mathcal{I}_{k}, v_{k-1}] = H_kv_k,
		\end{align}
		where the expectation is taken with respect to $|\mathcal{I}_{k}|$ samples generated for calculation of $\nabla f_{\mathcal{I}_k}$.
	\end{assumption}
	
	Assumptions \ref{assum3}.1 to \ref{assum3}.3 are standard assumptions for SQN methods \cite{zhang2020faster}. Specifically, Assumption \ref{assum3}.2 shows that the matrix norm of $H_k$ is bounded. Assumption \ref{assum3}.3 means that given $v_{k-1}$ and $\I_{k}$ the $H_kv_k$ is determined. Similar to previous asynchronous work \cite{zhang2021secure,gu2020Privacy}, we introduce the following definition and assumption.
	\begin{definition}\label{defin1}
		$D(t)$ is defined as a set of  iterations, such that:
		\begin{equation}
			\widehat{w}_t-w_t = \gamma \sum_{u\in D(t)}U_{\psi(u)}\widehat{v}_u^{\psi(u)},
		\end{equation}
		where $u \le t$ for $\forall u \in D(t)$.
	\end{definition}
	\begin{assumption}\label{assum4} {\bf{Bounded Overlap:}}
		We assume that there exists a constant $\tau_1$ which bounds the maximum number of iterations that can overlap together, i.e., for $\forall t$ there is  $\tau_1 \geq t-{\text{min}}\{u|u\in D(t)\}$.
	\end{assumption}
	
	To track the behavior of the global model in the convergence analysis, it is necessary to introduce $K(t)$.
	\begin{definition}\label{defin2} {\bf{$K(t)$:}}
		The minimum set of successive iterations that fully visit all coordinates from  global iteration number~$t$.
	\end{definition}
	\begin{assumption}\label{assum5}
		We assume that the size of $K(t)$ is upper bounded by $\eta_1$, \ie, $|K(t)|\leq \eta_1$.
	\end{assumption}
	Based on Definition \ref{defin2} and Assumption \ref{assum5}, we introduce the epoch number $v(t)$, which our convergence analyses are built on.
	\begin{definition}\label{defin3}
		Let $P(t)$ be a partition of $\{0,1,\cdots, t-\sigma'\}$, where $\sigma'\geq0$. For any $\kappa\subseteq P(t)$ we have that there exists $t'\leq t$ such that $K(t')=\kappa$, and $\kappa_1 \subseteq P(t)$ such that $K(0)=\kappa_1$. The epoch number for the global $t$-th iteration, i.e., $v(t)$ is defined as the maximum cardinality of $P(t)$.
	\end{definition}
	\subsection{Convergence Analyses}
	\begin{theorem}\label{thm-sgdconvex}
		Under Assumptions \ref{assum1}-\ref{assum5}, to achieve the accuracy $\epsilon$ of (\ref{P}) for AsySQN-SGD, \emph{i.e.}, $\mathbb{E} f (w_{t}) -f(w^*) \leq \epsilon$, we set $\gamma =  \frac{-   L_{\max}   + \sqrt{  L_{\max}^2  + \frac{{2 \mu \sigma_1 b \epsilon  (\sigma_1L^2 \eta_1^2  + \sigma_2\tau_1  L^2 \tau )}}{G\eta_1 \sigma_2^2}} }{2 L^2 (\sigma_1\eta_1^2  + \sigma_2\tau_1^2 )}$ and the epoch  number $\upsilon(t)$ should satisfy the following condition.
		\begin{align}\label{equ_theorem2_o.1}
			\upsilon(t) \geq \frac{2}{ \mu \gamma\sigma_1}  \log\left ( \frac{2 \left (  f(w_0)-f(w^*) \right )}{\epsilon} \right )
		\end{align}
	\end{theorem}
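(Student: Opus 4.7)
The plan is to combine a descent-lemma argument with the strong-convexity contraction trick, while carefully tracking the two sources of asynchrony in AsySQN-SGD: the stale parameters $\widehat{w}_t$ used to form the stochastic gradient $\widehat{v}_t^\ell$, and the stale information used inside Algorithm~\ref{SdLBFGS} to build $H_t^\ell$. Throughout, Assumption~\ref{assum3}.3 lets me treat $H_t^{\psi(t)} v_t^{\psi(t)}$ as measurable with respect to the sampling at step $t$, so conditional expectations pull $H_t^{\psi(t)}$ outside $\mathbb{E}[\widehat{v}_t^{\psi(t)}]$ and give $\mathbb{E}[H_t^{\psi(t)}\widehat{v}_t^{\psi(t)}] = H_t^{\psi(t)}\nabla_{\psi(t)} f(\widehat{w}_t)$.

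First I would apply the $L$-Lipschitz descent inequality from Assumption~\ref{assum2}.1 to the one-coordinate step $w_{t+1} = w_t - \gamma U_{\psi(t)} H_t^{\psi(t)} \widehat{v}_t^{\psi(t)}$ and take conditional expectation, producing the one-step bound
\begin{equation*}
\mathbb{E} f(w_{t+1}) \le f(w_t) - \gamma \langle \nabla_{\psi(t)} f(w_t), H_t^{\psi(t)} \nabla_{\psi(t)} f(\widehat{w}_t)\rangle + \tfrac{\gamma^2 L_{\max}}{2}\, \mathbb{E}\|H_t^{\psi(t)}\widehat{v}_t^{\psi(t)}\|^2 .
\end{equation*}
Next I would control the bias $\nabla f(\widehat w_t)-\nabla f(w_t)$ generated by the delay. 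Definition~\ref{defin1} and Assumption~\ref{assum4} write $\widehat w_t - w_t$ as a sum of at most $\tau_1$ stale block updates, so Assumptions~\ref{assum2}.3 and \ref{assum3}.2 give $\|\widehat w_t - w_t\|^2 \le \gamma^2 \tau_1 \sigma_2^2 G$, and Lipschitzness converts this into a quadratic error on the inner product. The second-moment term I bound by $\sigma_2^2 G/b$ using Assumption~\ref{assum3}.2 and the batch size $b$. A careful Young's inequality split on the cross term then yields a one-step inequality of the form $\mathbb{E}[f(w_{t+1})]-f(w_t) \le -\tfrac{\gamma\sigma_1}{2}\|\nabla_{\psi(t)} f(w_t)\|^2 + C_1 \gamma^2 + C_2\gamma^3$, where the higher-order coefficients collect the $L^2, L_{\max}, G, \tau_1, \sigma_1, \sigma_2$ constants exactly as they appear under the square root of the stated $\gamma$.

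To pass from progress on a single block to progress on $f(w_t)-f(w^*)$, I would sum the one-step inequality over one $K(t)$-window (Definition~\ref{defin2}), which by construction visits every coordinate; Assumption~\ref{assum5} gives $|K(t)|\le \eta_1$, so $\sum_{\ell=1}^q \|\nabla_\ell f(w_t)\|^2 \ge \|\nabla f(w_t)\|^2$ and the progress term becomes at least $\tfrac{\gamma\sigma_1}{2\eta_1}\|\nabla f(w_t)\|^2$. Applying strong convexity (Assumption~\ref{assum1}) through $\|\nabla f(w_t)\|^2 \ge 2\mu(f(w_t)-f(w^*))$ produces a per-epoch linear contraction $\mathbb{E}[f(w_{t+1})-f(w^*)] \le (1-\mu\gamma\sigma_1)[f(w_t)-f(w^*)] + \Theta$ on the epoch grid indexed by $\upsilon(t)$, with noise floor $\Theta$ proportional to $\gamma^2 G \eta_1 \sigma_2^2(\sigma_1 L^2\eta_1^2 + \sigma_2\tau_1 L^2 \tau + L_{\max})/b$. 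Unrolling over $\upsilon(t)$ epochs gives the standard decomposition $\mathbb{E}[f(w_t)-f(w^*)] \le (1-\mu\gamma\sigma_1)^{\upsilon(t)}(f(w_0)-f(w^*)) + \tfrac{\Theta}{\mu\gamma\sigma_1}$. Setting $\tfrac{\Theta}{\mu\gamma\sigma_1} \le \tfrac{\epsilon}{2}$ yields a quadratic in $\gamma$ whose positive root is exactly the expression prescribed by the theorem, and imposing $(1-\mu\gamma\sigma_1)^{\upsilon(t)}(f(w_0)-f(w^*)) \le \tfrac{\epsilon}{2}$ together with $\log(1/(1-x))\ge x$ gives the stated lower bound on $\upsilon(t)$.

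The hard part will be the second step: propagating the delay simultaneously through $\nabla f(\widehat w_t)$ and through the SdLBFGS matrix $H_t^\ell$, and splitting the resulting cross-terms via Young's inequality with precisely the right weights so that (i) half of the progress is preserved as $-\tfrac{\gamma\sigma_1}{2}\|\nabla_{\psi(t)} f(w_t)\|^2$, and (ii) the residual $\tau_1$- and $\eta_1$-dependent terms assemble into the coefficients $\sigma_1L^2\eta_1^2 + \sigma_2\tau_1 L^2\tau$ and $L_{\max}$ appearing in the theorem's $\gamma$. A secondary subtlety is justifying the pull-out of $H_t^{\psi(t)}$ from the conditional expectation via Assumption~\ref{assum3}.3 at iterations where $\widehat w_t \neq w_t$; this is why Algorithm~\ref{SdLBFGS} was designed so that $H_t^\ell$ depends only on locally stored $\{s_i^\ell, \widehat y_i^\ell\}$ and on $v_{k-1}^\ell$, and the positive semidefiniteness of $H_t^\ell$ (guaranteed by the damped update (\ref{2.1})–(\ref{2.2})) is what secures the $\sigma_1 I \preceq H_t^\ell$ lower bound used in the progress term.
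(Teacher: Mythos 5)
Your overall route is the paper's: block-descent lemma, Young's inequality to peel off $-\tfrac{\gamma\sigma_1}{2}\|\nabla_{\psi(u)}f\|^2$ while absorbing the staleness cross-term via $\|w_u-\widehat w_u\|^2\le\gamma^2\tau_1\sigma_2^2\sum_{u'\in D(u)}\|\widehat v_{u'}\|^2$, summation over a $K(t)$ window, strong convexity, geometric unrolling, and finally solving a quadratic in $\gamma$ for the noise floor. However, there is one concrete gap in the middle. The descent lemma at iteration $u\in K(t)$ gives you progress measured by $\|\nabla_{\mathcal G_{\psi(u)}} f(w_u)\|^2$, i.e.\ the gradient at the \emph{current, drifting} iterate $w_u$, whereas your one-step inequality is already written with $\|\nabla_{\psi(t)} f(w_t)\|^2$, the gradient at the window anchor $w_t$. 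These are not the same object, and summing over the window only covers all coordinates of $\nabla f$ \emph{evaluated at $w_t$}. The paper bridges this with a separate lemma (its Lemma \ref{AsySPSAGA_lemma3}) showing
\begin{equation*}
\sum_{u\in K(t)} \|\nabla_{\mathcal G_{\psi(u)}} f(w_u)\|^2 \;\geq\; \tfrac{1}{2}\sum_{u\in K(t)} \|\nabla_{\mathcal G_{\psi(u)}} f(w_t)\|^2 \;-\; \eta_1\sigma_2^2\gamma^2 L^2 \sum_{u\in K(t)}\sum_{v\in\{t,\ldots,u\}}\|\widehat v^{\psi(v)}_v\|^2,
\end{equation*}
which both halves the progress constant (whence the paper's $-\tfrac{\gamma\sigma_1}{4}\|\nabla f(w_t)\|^2$ and the $\tfrac{2}{\mu\gamma\sigma_1}$ in the $\upsilon(t)$ bound) and generates the dominant $\sigma_1 L^2\eta_1^2$ term inside the square root of the prescribed $\gamma$. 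You clearly sense that an $\eta_1^2$ coefficient must appear, but your sketch gives no mechanism that produces it; without this lemma the within-window drift of the gradient is simply unaccounted for.

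A secondary inconsistency: after summing you claim the progress term is at least $\tfrac{\gamma\sigma_1}{2\eta_1}\|\nabla f(w_t)\|^2$. The $1/\eta_1$ should not be there — since $K(t)$ visits every coordinate and the summands are nonnegative, $\sum_{u\in K(t)}\|\nabla_{\mathcal G_{\psi(u)}}f(w_t)\|^2\ge\|\nabla f(w_t)\|^2$ with no loss — and it contradicts the contraction factor $(1-\mu\gamma\sigma_1)$ you then write down. With the $1/\eta_1$ retained, the resulting $\gamma$ and $\upsilon(t)$ would not match the theorem's stated constants. Fixing the drift issue via the lemma above and dropping the spurious $1/\eta_1$ turns your sketch into the paper's proof.
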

	\begin{theorem} \label{thm-svrgconvex}
		Under Assumptions \ref{assum1}-\ref{assum5}, to achieve the accuracy $\epsilon$ of (\ref{P}) for AsySQN-SVRG,  let $C=\left (\sigma_1\gamma L^2 \eta_1^2 + L_{\max}  \right )\frac{\gamma^2\sigma_2^2}{2}$ and $\rho = \frac{\gamma \mu\sigma_1}{2} -  \frac{16 L^2 \eta_1 C}{\mu} $, we  choose $\gamma$ such that
		\begin{align}
			&&1) \ \rho < 0; \quad 2) \  \frac{8 L^2 \eta_1 C}{\rho \mu} \leq 0.5
			\\ && 3) \ \gamma^3 \left (  \left ( \frac{ \sigma_1  }{2} +     \frac{2C}{\gamma} \right )   \tau_1^2    + 4 \frac{C}{\gamma}     \eta_1^2 \right )   \frac{9\eta_1  L^2 \sigma_2^2 G}{b\rho} \leq \frac{\epsilon}{8}
		\end{align}
		the inner epoch  number should satisfy $\upsilon(t) \geq \frac{\log 0.25}{\log (1 - \rho)}$, and the outer loop number  should satisfy $S \geq \frac{\log \frac{2 ( f (w_{0}) -f(w^*) )}{\epsilon }}{\log \frac{4}{3}} $.
	\end{theorem}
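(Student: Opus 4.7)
The plan is to mirror the standard SVRG analysis but carefully track three sources of perturbation that interact nontrivially: the stochastic-quasi-Newton preconditioner $H_k^\ell$, the asynchronous delay encoded by $\widehat{w}_t - w_t$, and the SVRG variance. First, starting from the block-coordinate Lipschitz gradient bound (Assumption 2.2) applied to the update $w_{t+1} = w_t - \gamma U_{\psi(t)} H_t^{\psi(t)} \widehat{v}_t^{\psi(t)}$, I would expand
\begin{align*}
f(w_{t+1}) \le f(w_t) - \gamma \langle \nabla_{\psi(t)} f(w_t), H_t^{\psi(t)}\widehat{v}_t^{\psi(t)}\rangle + \tfrac{\gamma^2 L_{\max}}{2}\|H_t^{\psi(t)}\widehat{v}_t^{\psi(t)}\|^2,
\end{align*}
then take conditional expectation. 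Using Assumption 3.3 to pull $H_t$ outside, unbiasedness of the SVRG estimator $\mathbb{E}[\widehat{v}_t^{\psi(t)}] = \nabla_{\psi(t)} f(\widehat{w}_t)$, together with the lower bound $H_t \succeq \sigma_1 I$, gives a descent term $-\gamma \sigma_1 \langle \nabla_{\psi(t)} f(w_t), \nabla_{\psi(t)} f(\widehat{w}_t)\rangle$ and a curvature term controlled by $\sigma_2^2$.

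Second, I would split the descent term using $\nabla_{\psi(t)} f(\widehat{w}_t) = \nabla_{\psi(t)} f(w_t) + (\nabla_{\psi(t)} f(\widehat{w}_t) - \nabla_{\psi(t)} f(w_t))$ and absorb the delay residual via Young's inequality, producing a term of order $\gamma^3 L^2 \|\widehat{w}_t - w_t\|^2$ controlled by $\tau_1$ through Assumption 4 and Definition 1. In parallel, the SVRG variance bound
\begin{align*}
\mathbb{E}\|\widehat{v}_t^{\psi(t)}\|^2 \le 2\|\nabla_{\psi(t)} f(\widehat{w}_t)\|^2 + 4L^2\bigl(\|\widehat{w}_t - w^*\|^2 + \|w^s - w^*\|^2\bigr)
\end{align*}
is the standard device that turns the variance into a quantity comparable to $f(\widehat w_t)-f(w^*)$ and $f(w^s)-f(w^*)$ via strong convexity (Assumption 1). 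Combining these yields a one-step recursion whose coefficients are exactly the constant $C = (\sigma_1 \gamma L^2 \eta_1^2 + L_{\max}) \gamma^2\sigma_2^2/2$ appearing in the statement.

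Third, to pass from the block-coordinate one-step bound to a bound on the full gradient $\nabla f(w_t)$, I would aggregate over a window $K(t)$ that visits every coordinate (Definition 2, Assumption 5 giving $|K(t)|\le\eta_1$), then over the $\upsilon(t)$ epochs of Definition 3. Summing, telescoping, and using strong convexity $\|\nabla f(w_t)\|^2\ge 2\mu(f(w_t)-f(w^*))$ produces the linear contraction
\begin{align*}
\mathbb{E}[f(w_{t+1}) - f(w^*)] \le (1-\rho)\,\mathbb{E}[f(w_t) - f(w^*)] + \frac{8L^2\eta_1 C}{\rho\mu}\,\mathbb{E}[f(w^s)-f(w^*)] + R,
\end{align*}
where $R$ collects the staleness residuals bounded by the third condition on $\gamma$ via Assumption 2.3. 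Iterating over the inner loop, the choice $\upsilon(t)\ge \log 0.25 / \log(1-\rho)$ shrinks the leading factor to at most $1/4$, and condition~2 ensures the coupling to $w^s$ is also at most $1/2$. The outer recursion then satisfies $\mathbb{E}[f(w^{s+1})-f(w^*)] \le \tfrac34 \mathbb{E}[f(w^s)-f(w^*)] + \epsilon/4$, and the stated $S$ finishes the argument.

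The main obstacle will be the third step: jointly disentangling the preconditioner $H_t$, the delay $\widehat w_t - w_t$, and the SVRG anchor $w^s$ inside a single one-step inequality with clean coefficients matching $C$ and $\rho$. In particular, bounding $\|\widehat w_t - w_t\|^2$ requires expanding it through Definition 1 into a sum of up to $\tau_1$ past $\|H\widehat v\|^2$ terms, each of which must itself be bounded by the SVRG variance expression without introducing a circular dependence; carefully scheduling the Young's inequality weights so that the resulting recursion is contractive under the three stated conditions on $\gamma$ is the delicate part.
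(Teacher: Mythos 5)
Your proposal follows essentially the same route as the paper's proof: a block-Lipschitz one-step descent with the preconditioner bounded via $\sigma_1 I\preceq H\preceq\sigma_2 I$, the delay absorbed through $D(t)$ and $\tau_1$, an SVRG variance lemma converting $\mathbb{E}\|\widehat{v}\|^2$ into function-value gaps at $w_t^s$ and $w^s$ via strong convexity, the circularity in the stale-gradient sums cut by the bounded-gradient Assumption 2.3 (giving the $G/b$ residual), aggregation over $K(t)$ and the $\upsilon(t)$ epochs, and the two-level $0.75$-contraction. The only discrepancy is a trivial constant: the paper's conditions yield an outer recursion $e^{s+1}\le \tfrac34 e^s + \tfrac{\epsilon}{8}$ (so the geometric tail sums to $\tfrac{\epsilon}{2}$), whereas you wrote $+\tfrac{\epsilon}{4}$, which would overshoot the target accuracy after summing.
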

	\begin{theorem}\label{thm-sagaconvex}
		Under Assumptions \ref{assum1}-\ref{assum5}, to achieve the accuracy $\epsilon$ of (\ref{P}) for AsySQN-SAGA, \emph{i.e.}, $\mathbb{E} f (w_{t}) -f(w^*) \leq \epsilon$, let ${c_0} =  \left ( \frac{\tau_1^2 }{2}   + (9\tau_1^2+8\eta_1^2)\sigma_2\gamma\left ( \gamma L^2 \sigma_1\eta_1^2 + L_{\max} \right )  \right ) \sigma_2^3\gamma^3 L^2 \eta_1\frac{G}{b}$, \\
		$c_1 = \left ( \gamma L^2 \sigma_1\eta_1^2 + L_{\max} \right ) \sigma_2^2\gamma^2\eta_1\frac{L^2}{b}$,  ${c_2}  = 4\left ( \gamma L^2 \sigma_1\eta_1^2 + L_{\max} \right ) *$\\
$   \frac{L^2 \eta_1^2\sigma_2^2\gamma^2}{nb } $, and let $\rho \in (1 -\frac{1}{n},1)$, we  choose $\gamma$ such that
		\begin{eqnarray}
			\nonumber
			&1)	\frac{4 c_0}{ \gamma \sigma_1 \mu (1-\rho) \left ( \frac{\gamma \sigma_1 \mu^2}{4} -2 c_1-c_2 \right )} \leq \frac{\epsilon}{2}, \quad 2)
			0<1 -  \frac{\gamma \sigma_1 \mu}{4}  <1
			\\ \nonumber
			&3)-  \frac{\gamma \sigma_1 \mu^2}{4} +2 c_1+c_2 \left ( 1+   \frac{1}{1- \frac{ 1 -\frac{1}{n}}{\rho}}  \right ) \leq 0
			\\
			&4)-  \frac{\gamma \sigma_1 \mu^2}{4} +c_2+  c_1 \left ( 2+ \frac{1}{1- \frac{ 1 -\frac{1}{n}}{\rho}} \right )   \leq 0
		\end{eqnarray}
		the epoch  number should satisfy {$ \upsilon(t)  \geq \frac{\log \frac{2 \left ({{2\rho- 1 +  \frac{\gamma \sigma_1 \mu}{4} }} \right ) ( f (w_{0}) -f(w^*) ) }{\epsilon {(\rho- 1 +  \frac{\gamma\sigma_1 \mu}{4} )\left ( \frac{\gamma \sigma_1 \mu^2}{4} -2 c_1-c_2 \right )} }}{\log \frac{1}{\rho}}$}.
	\end{theorem}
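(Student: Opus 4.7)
\textbf{Proof proposal for Theorem~\ref{thm-sagaconvex}.}
My plan is to build a Lyapunov function that couples the suboptimality $f(w_t)-f(w^*)$ with a weighted measure of the staleness of the SAGA table entries, and then derive a one-step contraction that exploits the SQN bounds $\sigma_1 I\preceq H_k\preceq \sigma_2 I$ and the bounded overlap $\tau_1$. Concretely, I would set
\begin{equation*}
\Phi_t \;=\; \mathbb{E}\bigl[f(w_t)-f(w^*)\bigr] + \beta\cdot \frac{1}{n}\sum_{i=1}^n \mathbb{E}\bigl\|\widehat{\alpha}_i^{(t)}-\nabla f_i(w^*)\bigr\|^2,
\end{equation*}
for a positive weight $\beta=\beta(\gamma,\mu,L,\sigma_1,\sigma_2)$ to be calibrated, and try to prove $\Phi_{t+|K(t)|}\le \rho\,\Phi_t+\text{err}$ for each ``epoch'' window prescribed by Definition~\ref{defin2}.

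First I would apply block-coordinate Lipschitz smoothness (Assumption~\ref{assum2}.2) on the update $w_{t+1}^{\psi(t)}=w_t^{\psi(t)}-\gamma H^{\psi(t)}\widehat{v}^{\psi(t)}$ to obtain a descent inequality featuring $\langle \nabla_{\psi(t)} f(w_t), H^{\psi(t)}\widehat{v}^{\psi(t)}\rangle$ and $\|H^{\psi(t)}\widehat{v}^{\psi(t)}\|^2$. Taking conditional expectation and invoking Assumption~\ref{assum3}.3 turns the cross term into $\langle \nabla_{\psi(t)} f(w_t), H^{\psi(t)}\nabla_{\psi(t)} f(\widehat{w}_t)\rangle$, which I then split into the ``clean'' part $\langle \nabla f(w_t), H\nabla f(w_t)\rangle\ge \sigma_1\|\nabla f(w_t)\|^2$ and a delay defect bounded by $L\|\widehat{w}_t-w_t\|$. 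The delay defect is controlled by Definition~\ref{defin1} and Assumption~\ref{assum4}: $\|\widehat{w}_t-w_t\|^2\le \gamma^2\sigma_2^2\tau_1\sum_{u\in D(t)}\|\widehat{v}_u\|^2$. This is where the $\tau_1^2$ and $\eta_1^2$ factors in $c_0,c_1,c_2$ will emerge.

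Next I would bound the second moment of the SAGA estimator $\widehat{v}^\ell=\nabla_\ell f_\mathcal{I}(\widehat{w})-\widehat{\alpha}_\mathcal{I}^\ell+\frac{1}{n}\sum_i\widehat{\alpha}_i^\ell$ in the standard way, producing terms of the form $\frac{L^2}{b}\mathbb{E}\|\widehat{w}-w^*\|^2$ together with the average table-error $\frac{1}{n}\sum_i\mathbb{E}\|\widehat{\alpha}_i-\nabla f_i(w^*)\|^2$; the first is converted into $\Phi_t$ by strong convexity ($\mu$-strong convexity gives $\|w-w^*\|^2\le \tfrac{2}{\mu}(f(w)-f(w^*))$), and the second is absorbed by the $\beta$-weighted table term. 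In parallel, I would track how $\widehat{\alpha}_i$ evolves: only one block entry is refreshed per iteration, so $\frac{1}{n}\sum_i\mathbb{E}\|\widehat{\alpha}_i^{(t+1)}-\nabla f_i(w^*)\|^2\le (1-\tfrac{1}{n})\cdot(\text{old})+\tfrac{1}{n}\cdot O(L^2)\cdot\mathbb{E}\|\widehat{w}_t-w^*\|^2$; this is where the constraint $\rho\in(1-\tfrac{1}{n},1)$ enters, so that choosing $\beta$ with denominator $1-\tfrac{1-1/n}{\rho}$ gives a clean geometric recursion. Conditions~(3) and~(4) of the theorem are precisely what is needed to kill the positive contributions of $c_1$ and $c_2$ in this linear combination, while condition~(1) equates the residual error to $\epsilon/2$ and condition~(2) guarantees a legitimate contraction factor $1-\gamma\sigma_1\mu/4$.

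Finally, telescoping the recursion $\Phi_{t+1}\le (1-\tfrac{\gamma\sigma_1\mu}{4})\Phi_t+\text{err}$ over successive $K(t)$-windows and summing to a full epoch as in Definition~\ref{defin3} yields $\Phi_{t}\le \rho^{\upsilon(t)}\Phi_0+\tfrac{c_0}{\text{(gap)}}$, and solving for the smallest $\upsilon(t)$ that makes the first term $\le \epsilon/2$ produces the stated lower bound. The main obstacle I anticipate is the joint accounting of three sources of slack: the asynchronous delay $\widehat{w}_t-w_t$, the SAGA table staleness, and the SQN preconditioning error $H\nabla f(\widehat{w}_t)-H\nabla f(w_t)$; each injects quadratic terms in $\gamma$ and polynomial factors in $\tau_1,\eta_1$, and the algebra of choosing $\beta$ so that conditions~(3)--(4) simultaneously hold (without degrading the effective contraction $\rho-1+\gamma\sigma_1\mu/4$ below zero) is the delicate part, but it is essentially a careful bookkeeping extension of the synchronous SQN-SAGA argument of Zhang et~al.\ combined with the asynchronous block-coordinate machinery already used in~\cite{zhang2021secure,gu2020Privacy}.
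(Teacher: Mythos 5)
Your plan is sound in its ingredients but takes a genuinely different route from the paper at the one place where the SAGA analysis has real freedom: the choice of Lyapunov function. You propose the classical coupled potential $\Phi_t=\mathbb{E}[f(w_t)-f(w^*)]+\beta\cdot\frac{1}{n}\sum_i\mathbb{E}\|\widehat{\alpha}_i^{(t)}-\nabla f_i(w^*)\|^2$ and track the table energy as an explicit state variable. The paper instead never carries the table error as a state: its Lemma on $\mathbb{E}\|\alpha_{\I_u}^{u,\ell}-\nabla_{\mathcal{G}_\ell}f_{\I_u}(w^*)\|^2$ unrolls each table entry back to the iteration at which it was last refreshed, converting the table error into a geometric sum $\frac{L^2}{nb}\sum_{u'}(1-\frac{1}{n})^{\xi(u,\ell)-u'-1}\sigma(w_{\xi^{-1}(u',\ell)})$ over past squared distances, and then works with the history-weighted potential $\mathcal{L}_t=\sum_{k=0}^{\upsilon(t)}\rho^{\upsilon(t)-k}\,e(w_{\overline{u}_k})$ indexed by the epoch-start counters $\overline{u}_k$. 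The factor $\frac{1}{1-\frac{1-1/n}{\rho}}$ in conditions (3)--(4), which you correctly anticipated would appear through your choice of $\beta$, arises in the paper as the geometric series $\sum_k\bigl(\frac{1-1/n}{\rho}\bigr)^k$ obtained when collecting the coefficient of each $\sigma(w_{\overline{u}_k})$ in the $\mathcal{L}_t$ recursion. Your route buys a shorter, more familiar argument and avoids the combinatorial bookkeeping over last-update times; the paper's route buys direct control in terms of the epoch counter $\upsilon(t)$ (which is what the theorem is stated in) and produces exactly the prefactor $\frac{2\rho-1+\gamma\sigma_1\mu/4}{(\rho-1+\gamma\sigma_1\mu/4)(\frac{\gamma\sigma_1\mu^2}{4}-2c_1-c_2)}$ in the epoch bound, which is an artifact of telescoping $\mathcal{L}_t$ against the ratio $(1-\gamma\sigma_1\mu/4)/\rho$. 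Two cautions if you execute your plan: (i) your per-iteration table recursion with contraction $1-\frac{1}{n}$ must be re-expressed at the granularity of the $K(t)$ windows before you can conclude anything about $\upsilon(t)$, and (ii) because entries are refreshed with stale gradients $\nabla f_{\I}(\widehat{w})$, you need a separate delay bound on $\widehat{\alpha}-\alpha$ (the paper's $Q_2$, $Q_3$ terms) in addition to the delay bound on $\widehat{w}_t-w_t$; with those in place your argument should close, though it will yield conditions and an epoch bound of the same order but with different explicit constants than the ones stated.
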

	\begin{remark}
		For strongly convex problems, given the assumptions and  parameters in corresponding theorems, the convergence rate of AsySQN-SGD is $\mathcal{O} (\frac{1}{{\epsilon}}\text{log}(\frac{1}{\epsilon}))$, and those of AsySQN-SVRG and AsySQN-SAGA are $\mathcal{O} (\text{log}(\frac{1}{\epsilon}))$.
	\end{remark}
	\section{Complexity Analyses}
	In this section, we present the computation and communication complexity analyses of our framework.
	\subsection{Computation Complexity Analysis}
	SQN methods incorporated with approximate Hessian  information indeed converge faster than SGD methods in practice, however, from the perspective of its applications to industry, there is a concern that it may introduce much extra computation cost. In the following, we will show that the extra computation cost introduced by the approximate second-order information is negligible.
	
	First, we analyze the computation complexity of Algorithm~\ref{SdLBFGS}.  At step 1,  two inner products take $2d_\ell$  multiplications. At step 2, two inner products and one scalar-vector product take $3d_\ell$ multiplications. The first recursive loop (\ie, Steps 3 to 5) involves $2m$ scalar-vector multiplications and $m$ vector inner products, which takes $3md_\ell$ multiplications. So does the second loop (\ie, Steps 8 to 10). At step 7, the scalar-vector product takes $d_\ell$ multiplications. Therefore, Algorithm \ref{SdLBFGS} takes $(6m+d)d_\ell$ multiplications totally. As for Algorithm~\ref{safer_tree}, all multiplications are performed at steps 1-3, which takes $\sum_{\ell=1}^{q}d_\ell=d$ multiplications.
	
	Then we turn to Algorithm~\ref{AsySQNSGD}. At step 3, it takes $bd$ multiplications to compute the vector $\theta_i$. Step 4 takes $bd_\ell$ multiplications to compute the mini-batch stochastic gradient. Step 5 calls Algorithm~\ref{SdLBFGS} and thus takes $(6m+d)d_\ell$ multiplications. The scalar-vector product at step 6 takes $d_\ell$ multiplications. Compared with Algorithm~\ref{AsySQNSGD}, Algorithm~\ref{AsySQNSVRG} need compute the full gradient at the beginning of each epoch, which takes $nd$ multiplications. Moreover, step 6 takes $2bd$ multiplications and step 7 takes $2bd_\ell$ multiplications. As for other steps, the analyses are similar to Algorithm~\ref{AsySQNSGD}. In terms of Algorithm~\ref{AsySQNSAGA}, the initialization of the gradient matrix takes $nd$ multiplications and the analyses of other steps are similar to Algorithm~\ref{AsySQNSGD}.
	
	We summarize the detailed computation complexity of Algorithms~\ref{SdLBFGS} to \ref{AsySQNSAGA} in Table~\ref{complexity table}. Based on the results in Table~\ref{complexity table},  it is obvious that, for Algorithm \ref{AsySQNSGD}, the extra computation cost of computing the approximate second-order information takes up $r=\frac{md_\ell}{bd+md_\ell}< \frac{md_\ell}{bd}$ in the whole procedure. The extra computation cost is negligible because 1) as in previous work \cite{ghadimi2016mini} we can choose $b=n^{-1/2}$ and $n$ is sufficiently large in big data situation, 2) $m$ ranges from 5 to 20 as suggested in \cite{nocedal2006numerical}, 3) generally, $d_\ell$ is much smaller than $d$.  As for Algorithms \ref{AsySQNSVRG} and \ref{AsySQNSAGA}, there are $r=\frac{TSmd_\ell}{ndS+TS(bd+md_\ell)}$ and $r=\frac{Tmd_\ell}{nd+T(bd+md_\ell)}$, respectively, which are also negligible.
	\begin{table*}[!t]
		\centering
		\caption{Total computational complexities (TCC) of going through Algorithms~\ref{SdLBFGS} to \ref{AsySQNSAGA}, where $t$ denotes the total iteration number for Algorithms \ref{AsySQNSGD} and \ref{AsySQNSAGA}, and the number of iterations in an epoch for Algorithm \ref{AsySQNSVRG}, and $S$ is the epoch number for Algorithm~\ref{AsySQNSVRG}.}
		\label{complexity table}
		\setlength{\tabcolsep}{1.4mm}{{\small\begin{tabular}{llllllllll}
				\toprule
				\multicolumn{2}{c}{Algorithm~\ref{SdLBFGS} }& \multicolumn{2}{c}{Algorithm~\ref{safer_tree}}  & \multicolumn{2}{c}{Algorithm~\ref{AsySQNSGD}} & \multicolumn{2}{c}{Algorithm~\ref{AsySQNSVRG}} & \multicolumn{2}{c}{Algorithm~\ref{AsySQNSAGA}} \\ \midrule
				steps          & TCC                 & steps         & TCC                     & steps         & TCC                      & steps        & TCC                             & steps        & TCC                              \\
				1  & $\bO(d_\ell)$          & 1-3   & $\bO(d)$                   &3   & $\bO(bd )$
				& 3    & $\bO(nd)$      & 2            & $\bO(nd)$
				\\
				2  & $\bO(d_\ell)$           & -   & -                  &4   & $\bO(bd_\ell)$
				& 6  & $\bO(bd)$   & 4            & $\bO(bd)$
				\\
				3-6    & $\bO(md_\ell)$           & -   & -                  &5   & $\bO(md_\ell)$
				& 7   & $\bO(bd_\ell)$   & 5            & $\bO(bd_\ell)$
				\\
				7       & $\bO(d_\ell)$      & -   & -                    &6   & $\bO(d_\ell)$
				& 8     & $\bO(md_\ell)$               & 6    & $\bO(d_\ell)$
				\\
				8-11      & $\bO(md_\ell)$        & -   & -                   &4   & $\bO(d_\ell)$
				& 9        & $\bO(d_\ell)$   & -            & -
				\\ \hline
				\textbf{total}          & $\bO(md_\ell)$
				& \textbf{total}           & $\bO(d)$
				& \textbf{total}            &   {$\bO[T(bd +md_\ell)]$}
				& \textbf{total}          &   {$\bO[ndS+TS(bd+md_\ell)]$}
				& \textbf{total}          & {$\bO[nd +T(bd+md_\ell)]$}
				\\ \bottomrule
		\end{tabular}}}
	\end{table*}
	\subsection{Communication Complexity Analysis}
	The communication  of Algorithm \ref{safer_tree} is $\bO(q)$. For Algorithm \ref{AsySQNSGD}, step 3  has a communication complexity of $\bO(bq)$ due to calling Algorithm \ref{safer_tree} $b$ times. Similarly, for Algorithm \ref{AsySQNSVRG}, communication complexity of steps 3 and 6 are $\bO(nq)$ and $\bO(bq)$, respectively. For Algorithm \ref{AsySQNSAGA}, communication complexity of steps 2 and 4 are $\bO(nq)$ and $\bO(bq)$, respectively. Given $T$ and $S$ defined in Table \ref{complexity table}, the total communication complexities of Algorithms \ref{AsySQNSGD} to \ref{AsySQNSAGA} are $\bO(Tbq)$, $\bO(S(n+T)bq)$ and $\bO((n+T)bq)$, respectively.
	\section{Privacy Security Analysis}\label{sec_security}
	In this section, we analyze the data security and model security of AsySQN framework under two semi-honest threat models commonly used in previous works \cite{cheng2019secureboost,xu2019hybridalpha,gu2020federated}. Especially, adversaries under threat model 2 can collude with each other, thus they have the stronger attack ability than those under threat model 1.
	
	\noindent{\bf{Honest-but-Curious}} (Threat Model 1): All workers will follow the federated learning protocol to perform the correct computations. However, they may use their own retained records of the intermediate computation result to infer other worker's data and model.
	
	\noindent {\bf{Honest-but-Colluding}} (Threat Model 2): All workers will follow the federated learning protocol to perform the correct computations. However, some workers may collude to infer other worker's data and model by sharing their retained records of the intermediate computation result.
	
	Similar to \cite{gu2020federated,gu2020Privacy}, we analyze the security of AsySQN by analyzing its ability to prevent following inference attacks.
	\begin{definition}[{\bf{Exact Inference Attack}}]\label{definatt1}
		The adversary perform the inference attack by inferring $(x_i)_{\ell}$ (or $w_{\ell}$) belonging to other parties without directly accessing them.
	\end{definition}
	\begin{definition}[{\bf{$\epsilon$-Approximate Exact Inference Attack}}]\label{definatt2}
		The adversary perform the $\epsilon$-approximate exact inference attack by inferring  $(x_i)_{\ell}$ (or $w_{\ell}$) belonging to other parties  as $(\widetilde{{x}}_i)_{\ell}$ (or $\widetilde{w}_{\ell}$) with accuracy of $\epsilon$ (\emph{i.e.}, $\|({x_i})_{\ell}-(\widetilde{{x}}_i)_{\ell}\|_{\infty}\leq \epsilon$, $\|{w}_{\ell}-\widetilde{w}_{\ell}\|_{\infty}\leq \epsilon$, or $\|y_i-\widetilde{y_i}\|_{\infty}\leq \epsilon$ ) without directly accessing them.
	\end{definition}
	\begin{lemma}\label{infinite}
		Given equations $\theta_i^\ell = w_{{\ell }}^{\mathrm{T}} {(x_i)}_{{\ell }}$   with only observing $\theta_i^\ell$, there are infinite different solutions to both equations.
	\end{lemma}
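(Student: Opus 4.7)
The plan is a degrees-of-freedom count made constructive. The observation $\theta_i^\ell$ is a single scalar, whereas the unknown pair $(w_\ell, (x_i)_\ell)$ lives in $\mathbb{R}^{d_\ell}\times\mathbb{R}^{d_\ell}$, a space of dimension $2d_\ell\ge 2$. The single bilinear constraint $w_\ell^{\mathrm T}(x_i)_\ell = \theta_i^\ell$ can cut out at most one dimension, so the solution set is at least $(2d_\ell-1)$-dimensional, hence infinite. I would turn this intuition into an explicit family of solutions rather than leaving it at a dimension count, since the lemma is stated as ``there are infinite different solutions.''

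For the case $\theta_i^\ell\neq 0$, I would use a scaling argument. Pick any unit vector $\bar w\in\mathbb{R}^{d_\ell}$ and set $\bar x = \theta_i^\ell\,\bar w$, so that $\bar w^{\mathrm T}\bar x=\theta_i^\ell$. Then for every $t>0$ the pair $(t\bar w,\, t^{-1}\bar x)$ still satisfies the equation because $(t\bar w)^{\mathrm T}(t^{-1}\bar x)=\bar w^{\mathrm T}\bar x=\theta_i^\ell$, and distinct values of $t$ give distinct first components since $\bar w\neq 0$. This already produces an uncountable family of valid $(w_\ell,(x_i)_\ell)$ pairs consistent with the adversary's observation.

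For the boundary case $\theta_i^\ell=0$, the scaling trick collapses, so I would argue differently: when $d_\ell\ge 2$, choose any nonzero $\bar w$ and any nonzero $\bar x$ in the hyperplane $\bar w^{\mathrm T} u =0$ (which has dimension $d_\ell-1\ge 1$); the pairs $(t\bar w,\bar x)$ for $t\in\mathbb{R}$ are then pairwise distinct solutions. When $d_\ell=1$, take $\bar w=0$ and let $\bar x$ range over $\mathbb{R}$. The same two constructions, read coordinate-wise, also show the dual statement implicit in the lemma: fixing either $w_\ell$ or $(x_i)_\ell$ still leaves a $(d_\ell-1)$-dimensional affine subspace of consistent choices for the other, so the adversary cannot uniquely recover either factor from $\theta_i^\ell$ alone.

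There is essentially no hard step here; the lemma is a direct linear-algebra fact, and the only mild subtlety is handling $\theta_i^\ell=0$ and $d_\ell=1$ uniformly, which the constructions above accomplish. The proof should therefore be very short: state the scaling family, verify the bilinear identity, and remark that distinct scalars yield distinct solution pairs.
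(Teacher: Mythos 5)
Your proof is correct, and it takes a mildly but genuinely different route from the paper's. The paper also argues by exhibiting an explicit infinite family of solutions, but it splits into two cases with two different constructions: for $d_\ell\geq 2$ it takes the orbit of a solution under arbitrary non-identity orthogonal matrices $U$, using $(w_\ell^{\mathrm T}U^{\mathrm T})(U(x_i)_\ell)=w_\ell^{\mathrm T}(x_i)_\ell$; for $d_\ell=1$ it uses exactly your scalar scaling $(uw_\ell,\tfrac1u(x_i)_\ell)$. You instead apply the scaling construction uniformly in all dimensions and, unlike the paper, explicitly treat the degenerate case $\theta_i^\ell=0$ (where your family $(t\bar w,0)$ is still infinite as a set of pairs, but you rightly note that a separate construction is needed to show the $x$-factor itself is not identifiable). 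What your approach buys is uniformity and care at the boundary cases the paper skips (e.g.\ the paper's orthogonal-orbit family degenerates if $w_\ell=0$). What the paper's orthogonal-matrix family buys is norm preservation: it shows non-identifiability persists even if the adversary knows $\|w_\ell\|$ and $\|(x_i)_\ell\|$, which is the structural reason behind the paper's later observation (in the security theorem) that $d_\ell=1$ admits approximate inference while zero-padding to $d_\ell\geq 2$ does not --- your one-parameter scaling family, which rescales norms, cannot by itself support that finer distinction. Both proofs establish the lemma as stated.
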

	\begin{proof}
		First, we consider the equation $\theta_i^\ell = w_{{\ell }}^{\mathrm{T}} {(x_i)}_{{\ell }}$ with two cases, \ie, $d_{\ell}\geq 2$ and $d_{\ell}=1$. For $\forall d_{\ell}\geq 2$, given an arbitrary non-identity orthogonal matrix $U\in \mathbb{R}^{d_\ell\times d_\ell}$, we have
		\begin{equation}\label{proofinfinite1}
			(w_{{\ell }}^{\mathrm{T}}U^{\mathrm{T}})(U {(x_i)}_{{\ell }}) = w_{{\ell }}^{\mathrm{T}}(U^{\mathrm{T}}U) {(x_i)}_{{\ell }} = w_{{\ell }}^{\mathrm{T}} {(x_i)}_{{\ell }}=\theta_i^\ell
		\end{equation}
		From Eq.~\ref{proofinfinite1}, we have that given an equation $\theta_i^\ell = w_{{\ell }}^{\mathrm{T}} {(x_i)}_{{\ell }}$ with only $\theta_i^\ell$ being known, the solutions corresponding to  $w_{{\ell }}$ and  ${(x_i)}_{{\ell }}$ can be represented as $w_{{\ell}}^{\mathrm{T}}U^{\mathrm{T}}$ and  $U {(x_i)}_{{\ell}}$, respectively. $U$ can be arbitrary different non-identity orthogonal matrices, the solutions are thus infinite. If $d_\ell=1$, give an arbitrary real number $u\neq1$, we have
		\begin{equation}\label{proofinfinite2}
			(w_{\ell}^{\mathrm{T}}u)(\frac{1}{u} {(x_i)}_{\ell}) = w_{\ell}^{\mathrm{T}}(u\frac{1}{u}) {(x_i)}_{\ell} =  w_{\ell}^{\mathrm{T}} {(x_i)}_{\ell}=\theta_i^\ell
		\end{equation}
		Similarly, we have that the solutions of equation  $\theta_i^\ell = w_{\ell}^{\mathrm{T}} {(x_i)}_{\ell}$ are infinite when $d_\ell=1$.
		This completes the proof.
	\end{proof}
	Based on lemma~\ref{infinite}, we obtain the following theorem.
	\begin{theorem}\label{security}
		AsySQN can prevent the exact and the $\epsilon$-approximate exact inference attacks under semi-honest threat models.
	\end{theorem}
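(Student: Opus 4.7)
The plan is to show that, in the worst case for the defender, any semi-honest adversary obtains at most the scalar $\theta_i^\ell = w_\ell^{\mathrm{T}}(x_i)_\ell$ of a victim party $\ell$, and then to invoke Lemma~\ref{infinite} to rule out exact recovery of $w_\ell$ or $(x_i)_\ell$, with a continuity refinement handling the $\epsilon$-approximate case.

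First I would catalogue the quantities a semi-honest adversary can observe during one execution of Algorithm~\ref{safer_tree}: the masked share $\theta_i^{\ell'} + \delta_{\ell'}$ that an honest child sends along $T_1$, the intermediate partial sums it computes along $T_2$, and the final recovered $\theta_i = \varphi_1-\varphi_2$. Under Threat Model~1, because every honest party masks its $\theta_i^{\ell'}$ by an independent $\delta_{\ell'}$ and $T_2\neq T_1$ interleaves these masks at different internal nodes, an individual adversary cannot isolate any victim's $\theta_i^{\ell'}$ from its view. Under Threat Model~2, even a colluding coalition, in the worst case, learns at most the individual $\theta_i^\ell$ of a single honest victim, because every honest branch still contributes at least one mask that an honest aggregation node in either $T_1$ or $T_2$ absorbs without exposing it. From this worst-case vantage point, Lemma~\ref{infinite} immediately yields infinitely many pairs $(w_\ell,(x_i)_\ell)$ consistent with the observed scalar, which rules out the exact inference attack of Definition~\ref{definatt1}.

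For the $\epsilon$-approximate attack of Definition~\ref{definatt2}, I would exploit the continuous parametrization inside the proof of Lemma~\ref{infinite}. For $d_\ell\geq 2$, the family $\{(w_\ell^{\mathrm{T}}U^{\mathrm{T}},\,U(x_i)_\ell):U\in\mathrm{O}(d_\ell)\}$ sweeps out a continuum of solutions that is unbounded in $\|\cdot\|_\infty$ as $U$ ranges over the orthogonal group; for $d_\ell=1$, the same holds for $\{(u w_\ell,\,u^{-1}(x_i)_\ell):u\in\mathbb{R}\setminus\{0\}\}$ as $u\to 0$ or $u\to\infty$. Hence, for any $\epsilon>0$ and any candidate $(\widetilde w_\ell,(\widetilde x_i)_\ell)$ the adversary might produce, another valid solution exists whose $\|\cdot\|_\infty$-distance to the candidate exceeds $\epsilon$; since the adversary has no way to distinguish which point of this continuum corresponds to the true underlying data, the $\epsilon$-approximate attack fails as well.

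The main obstacle is the first step: rigorously certifying that, even under Threat Model~2, a colluding coalition cannot extract more than one isolated $\theta_i^\ell$ from the combined transcripts of $T_1$ and $T_2$. This relies on the \emph{significantly different} structural property of the two trees imported from~\cite{gu2020federated}, which must be used to show that for every honest party $\ell'$, the mask $\delta_{\ell'}$ meets some honest aggregation node not under the coalition's control in at least one of the two trees, so that the subtraction $\varphi_1-\varphi_2$ can only reveal honest-party sums rather than isolated terms. Once this masking claim is formalized, the remaining steps follow directly from Lemma~\ref{infinite} together with the dimensional and continuity argument above.
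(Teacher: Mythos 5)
Your overall route (reduce to the worst case where the adversary holds the bare scalar $\theta_i^\ell$, then invoke Lemma~\ref{infinite}) matches the paper's treatment of Threat Model 2, and your Threat Model 1 discussion is if anything more elaborate than the paper's, which simply observes that the adversary never sees the unmasked $\theta_i^\ell$ at all. The "main obstacle" you flag about certifying what a coalition can extract is not actually needed: the paper concedes that under Threat Model 2 the coalition can strip the masks and recover $\theta_i^\ell$ exactly, and argues security from Lemma~\ref{infinite} alone.

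There is, however, a genuine gap in your handling of the $\epsilon$-approximate attack. First, your key claim that the family $\{(w_\ell^{\mathrm{T}}U^{\mathrm{T}},\,U(x_i)_\ell):U\in\mathrm{O}(d_\ell)\}$ is \emph{unbounded} in $\|\cdot\|_\infty$ is false: orthogonal matrices preserve the Euclidean norm, so the orbit $\{U(x_i)_\ell\}$ lies on the sphere of radius $\|(x_i)_\ell\|_2$ and is bounded. (One can still argue the orbit is a continuum of positive diameter, but that only defeats the attack for $\epsilon$ below that diameter, and the stated justification is wrong.) Second, and more importantly, your indistinguishability argument silently assumes the adversary has no prior information beyond $\theta_i^\ell$. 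The paper explicitly identifies the failure mode this hides: when $d_\ell=1$ and the adversary knows the range $\mathcal{R}$ of $(x_i)_\ell$ (e.g., from z-score normalization), the relation $\theta_i^\ell/w_\ell\in\mathcal{R}$ pins $w_\ell$ down to a narrow interval, so the $\epsilon$-approximate attack \emph{can} succeed; the paper's theorem is rescued only by the practical mitigation of zero-padding to force $d_\ell\geq 2$, and it accordingly claims only that AsySQN is "secure in practice" under Threat Model 2. Your proposal proves a strictly stronger unconditional statement than is actually true, so the $\epsilon$-approximate branch of the argument cannot stand as written.
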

	\begin{proof}
		We prove above Theorem~\ref{security} under following two threat models.\\
		\noindent{\bf Threats Model 1:}
		During the aggregation, the value of $\theta_i^\ell = w_{\ell}^{\mathrm{T}} {(x_i)}_{\ell}$ is masked by $\delta_{\ell }$ and  just the value of  $\theta_i^\ell + \delta_{\ell }$ is transmitted. In this case, one cannot even access the true value of $\theta_i^\ell$, let alone using relation $\theta_i^\ell = w_{\ell}^{\mathrm{T}} {(x_i)}_{\ell}$ to refer $w_{\ell}^{\mathrm{T}}$ and ${(x_i)}_{\ell}$. Thus, AsySQN can prevent both exact inference attack and the $\epsilon$-approximate exact inference attack under Threat Model 1.
		
		\noindent{\bf Threats Model 2:}
		Under threat model 2, the adversary can remove the random value $\delta_{\ell}$ from  term $\theta_i^\ell+ \delta_{\ell }$ by colluding with other adversaries. Applying Lemma~\ref{infinite} to this circumstance, and we have that even if the random value is removed  it is still impossible to exactly refer $w_{\ell}^{\mathrm{T}}$ and ${(x_i)}_{\ell}$.  However, it is possible to approximately infer $w_{\ell}$ when $d_\ell =1$.
		Specifically, if one knows the region of ${(x_i)}_{\ell}$ as $\mathcal{R}$ (e.g., applying z-score
		normalization to $\{(x_i)_{\ell}\}_{i=1}^{n}$), one has $\theta_i^\ell/w_{\ell}^{\mathrm{T}} \in \mathcal{R}$, thus can infer $w_{\ell}^{\mathrm{T}}$ approximately, so does infer ${(x_i)}_{\ell}$ approximately.  Importantly, one can avoid this attack easily by zero-padding to make $d_\ell\geq2$.  Thus, under this threat model, AsySQN is secure in practice.
	\end{proof}
	\section{Experiments}
	\begin{table*}[!t]
		\centering
		\caption{Datasets used in the experiments.}
		\label{dataset}
		\begin{tabular}{@{}ccccccccc@{}}
			\toprule
			& $D_1$ & $D_2$ & $D_3$ & $D_4$&  $D_5$  & $D_6$ & $D_7$ &$D_8$ \\
			\midrule
			\#Samples & 24,000 & 96,257 & 17,996 & 49,749 & 677,399 & 32,561 &  400,000  & 60,000 \\
			\#Features & 90 & 92 & 1,355,191  & 300 & 47,236 & 127 & 2,000 & 780 \\ \bottomrule
		\end{tabular}
	\end{table*}
	In this section, we implement extensive  experiments on real-world datasets to demonstrate the lower communication cost and better CRU of AsySQN. The results concerning convergence speed also consistent to the corresponding theoretical results.
	\subsection{Experiment Settings}
	All experiments are performed on a machine with four sockets, and each sockets has 12 cores. We use OpenMPI to implement the communication scheme. In the experiments, there are $q=8$ parties and each party owns nearly equal number of features.
	$\delta$ is fixed for specific SGD-type of algorithms. As for the learning rate $\gamma$, we choose a suitable one from $\{ \cdots, 1e^{-1}, 5e^{-2}, 2e^{-2}, 1e^{-2},\cdots\}$ for all experiments. Moreover, to simulate the industry application scenarios, we set a synthetic party which only has 30\% to 60\% computation resource compared with the faster party. This means that, as for synchronous algorithms, the faster party has an poor CRU around  only 30\% to 60\% due to waiting for the straggler for synchronization.
	\begin{figure*}[!t]
		\centering
		\ \begin{subfigure}{0.22\linewidth}
			\includegraphics[width=\linewidth]{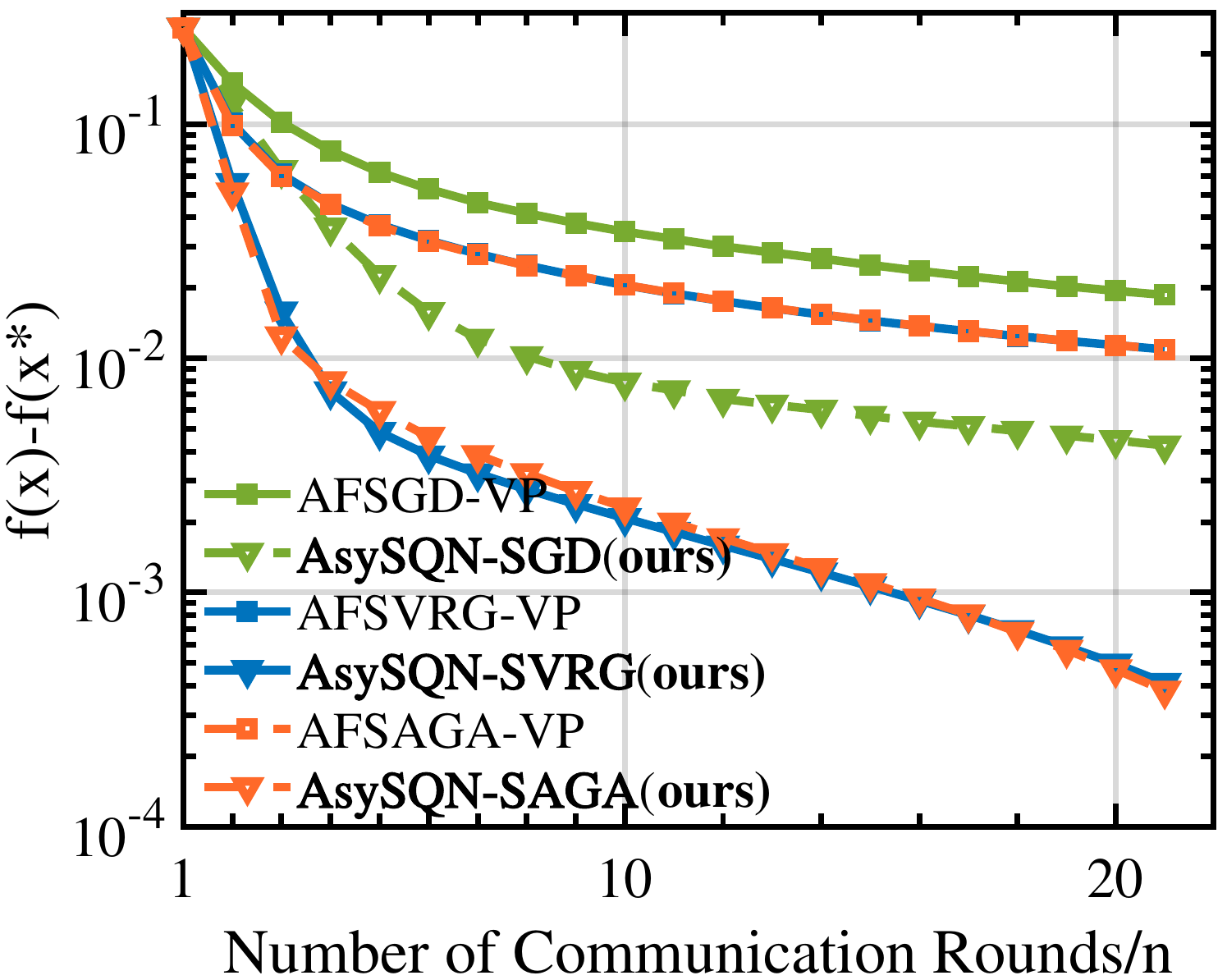}
			\caption{Results on $D_1$}
		\end{subfigure}
		\quad \begin{subfigure}{0.22\linewidth}
			\includegraphics[width=\linewidth]{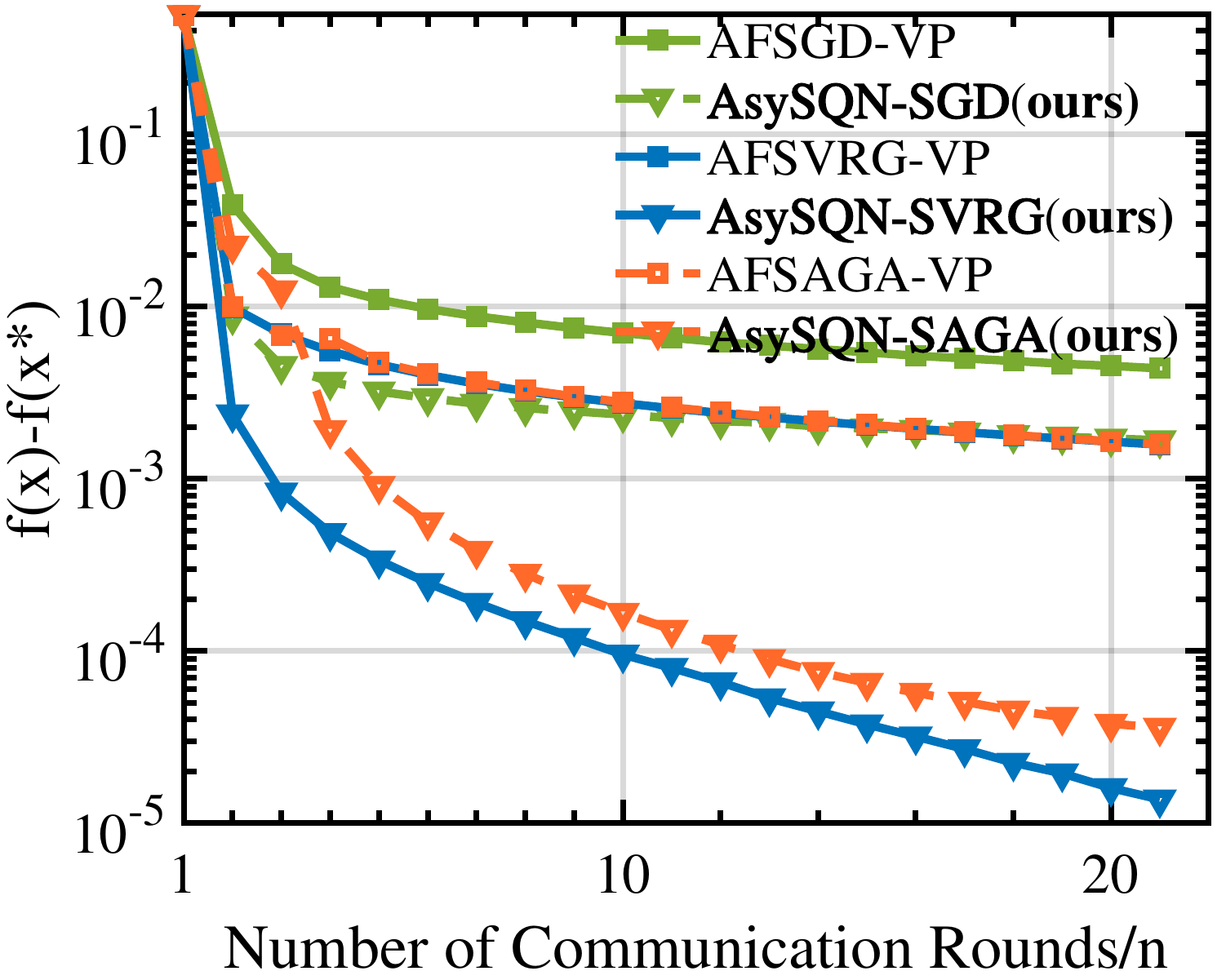}
			\caption{Results on $D_2$}
		\end{subfigure}
		\quad \begin{subfigure}{0.22\linewidth}
			\includegraphics[width=\linewidth]{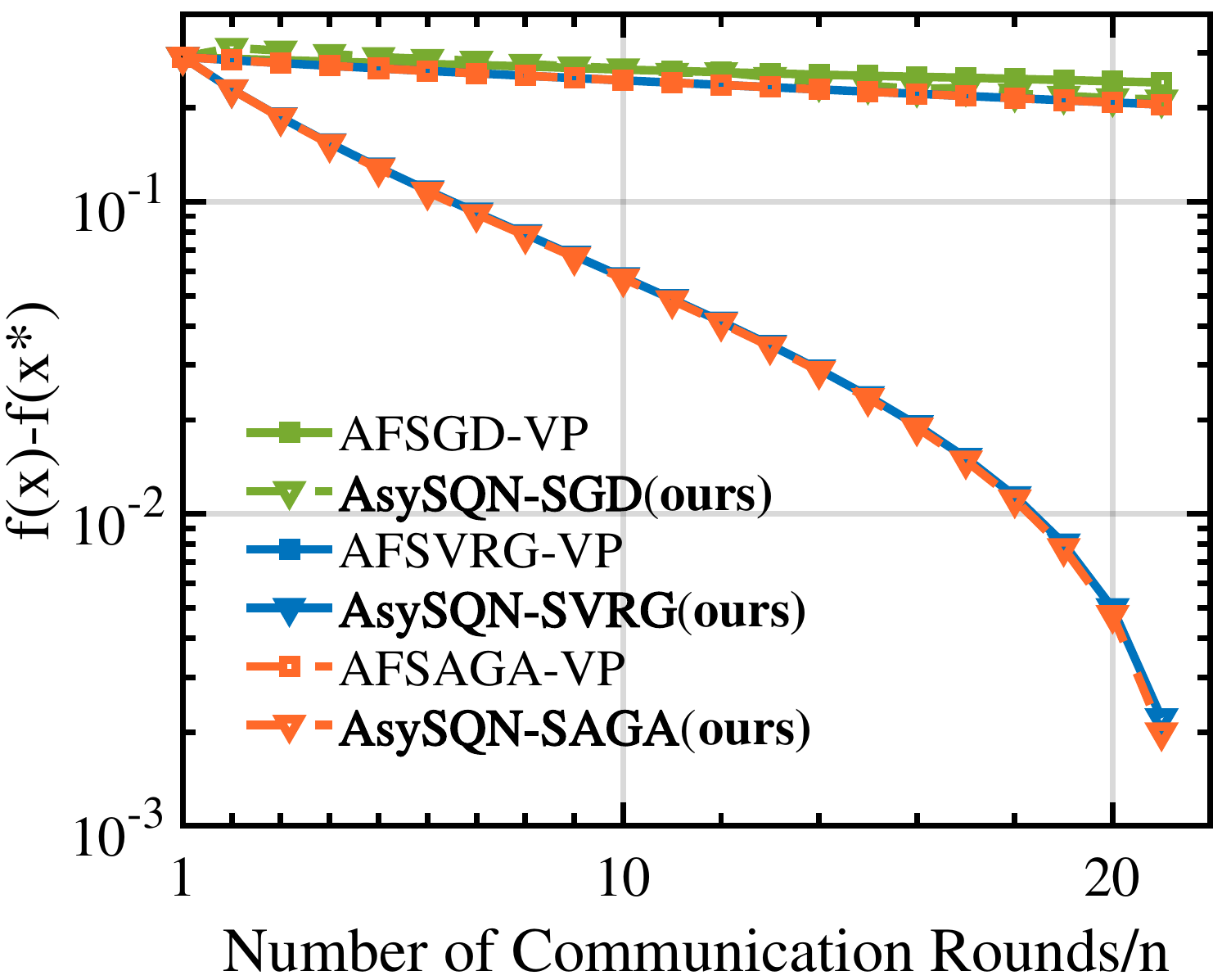}
			\caption{Results on $D_3$}
		\end{subfigure}%
		\quad \begin{subfigure}{0.22\linewidth}
			\includegraphics[width=\linewidth]{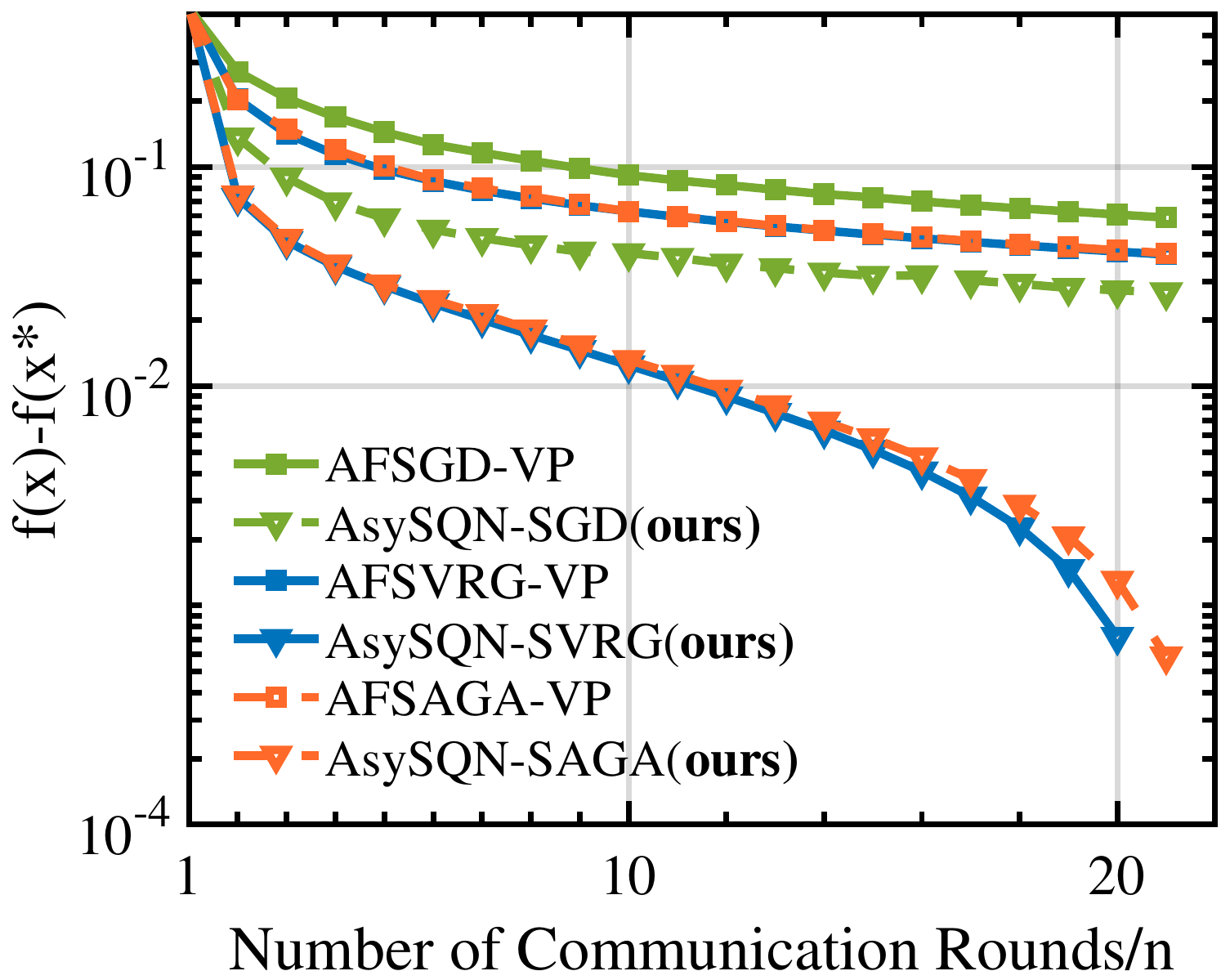}
			\caption{Results on $D_4$}
		\end{subfigure}%
		
		\begin{subfigure}{0.22\linewidth}
			\includegraphics[width=\linewidth]{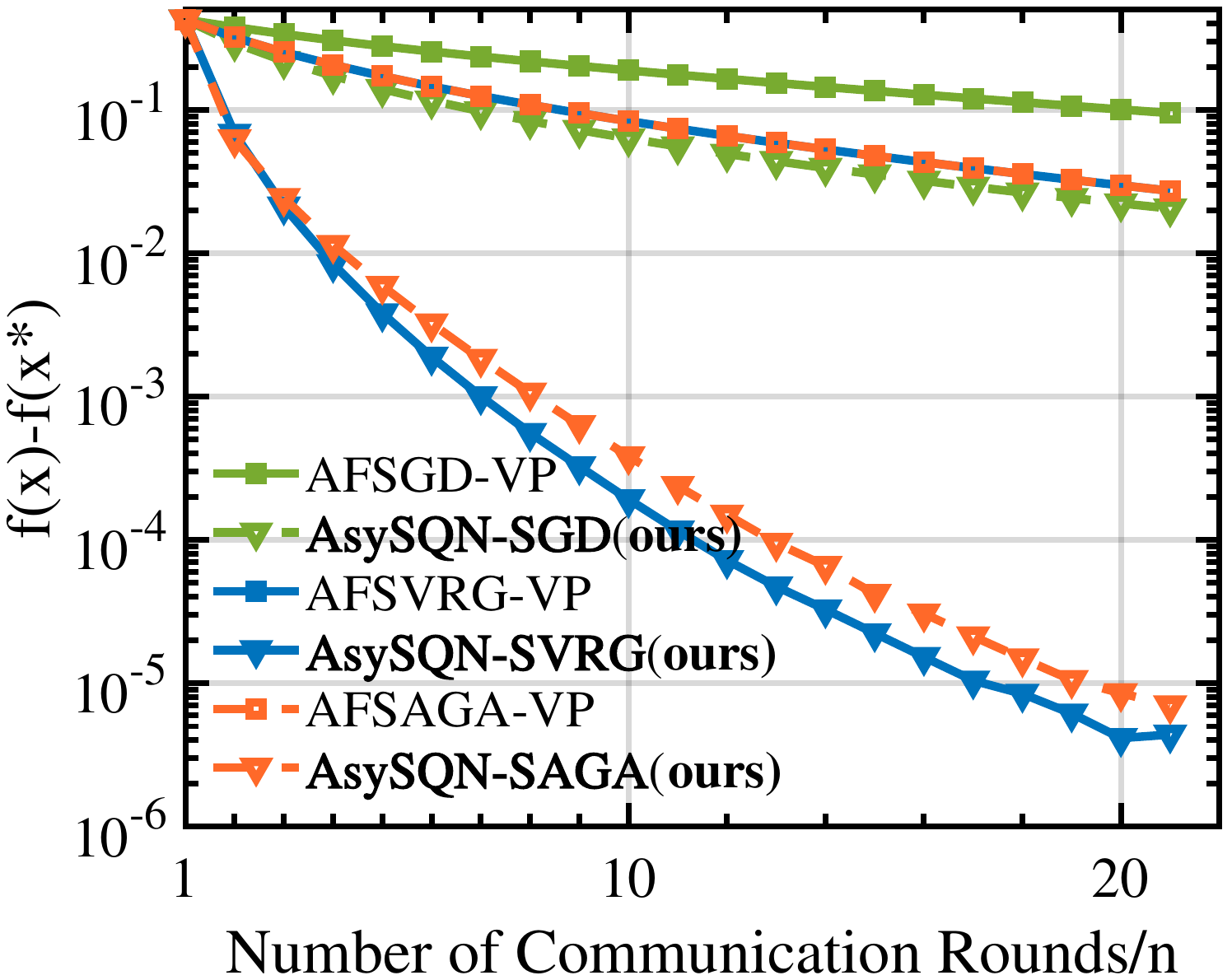}
			\caption{Results on $D_5$}
		\end{subfigure}%
		\quad 	\begin{subfigure}{0.22\linewidth}
			\includegraphics[width=\linewidth]{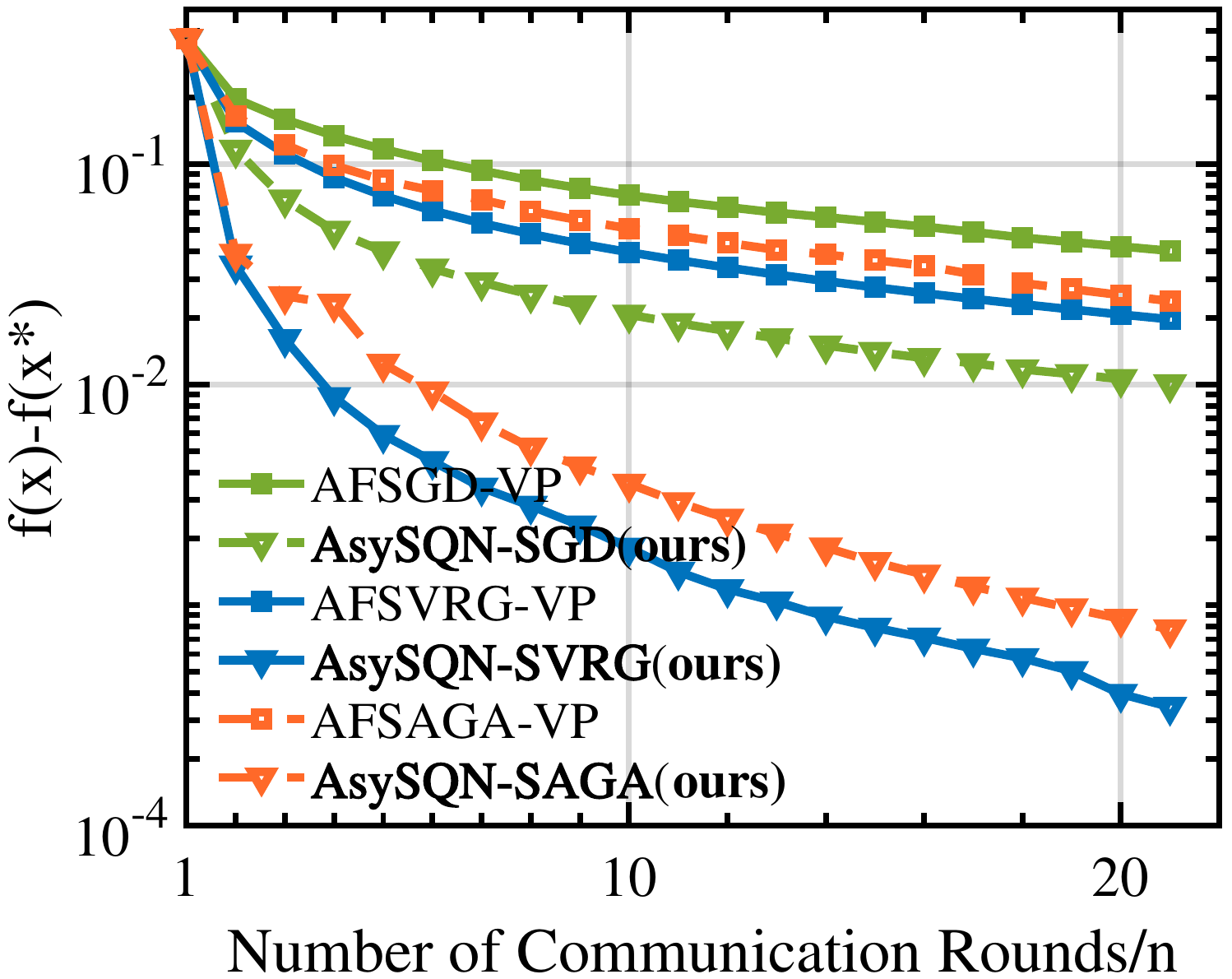}
			\caption{Results on $D_6$}
		\end{subfigure}%
		\quad	\ \begin{subfigure}{0.22\linewidth}
			\includegraphics[width=\linewidth]{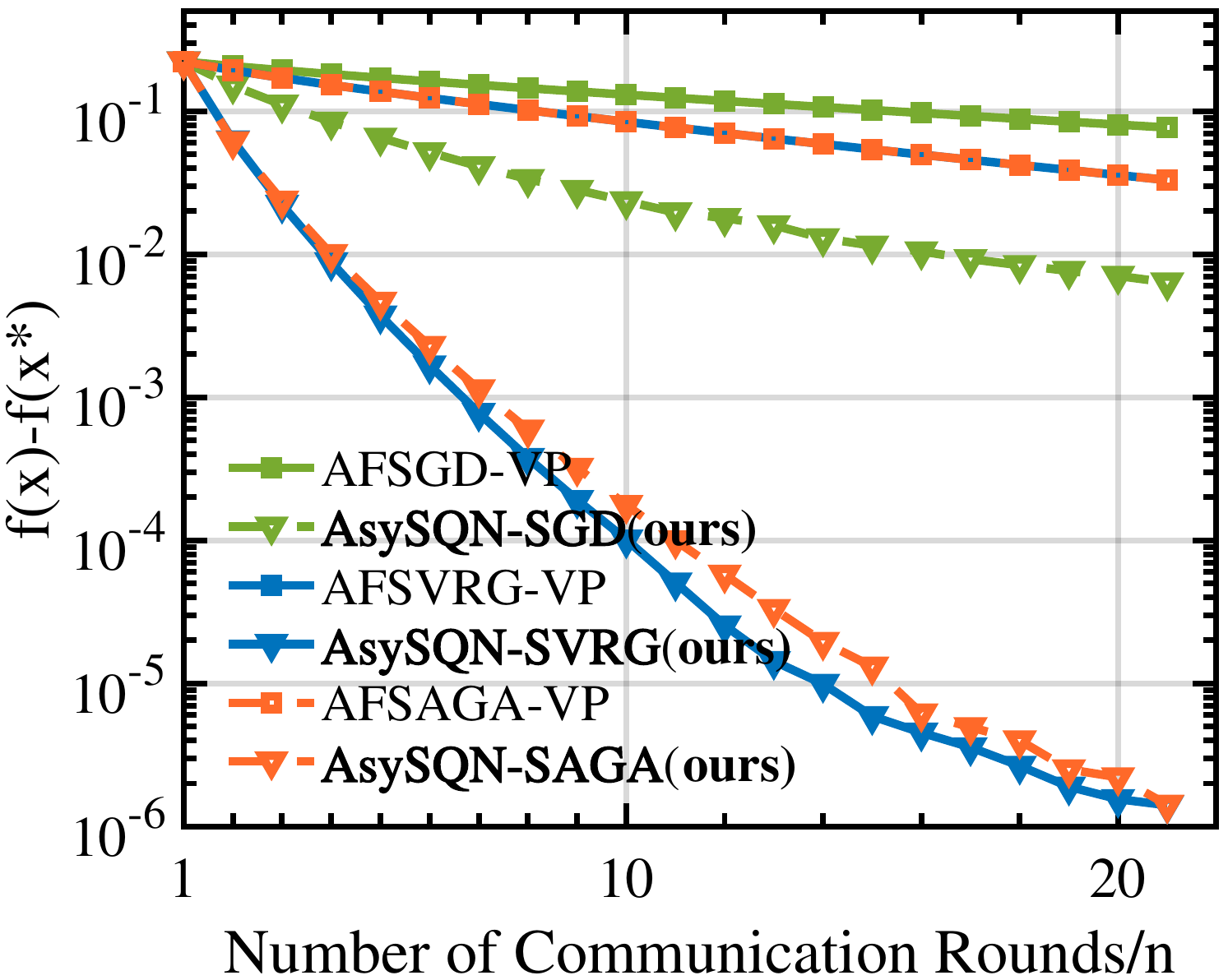}
			\caption{Results on $D_7$}
		\end{subfigure}%
		\quad \begin{subfigure}{0.22\linewidth}
			\includegraphics[width=\linewidth]{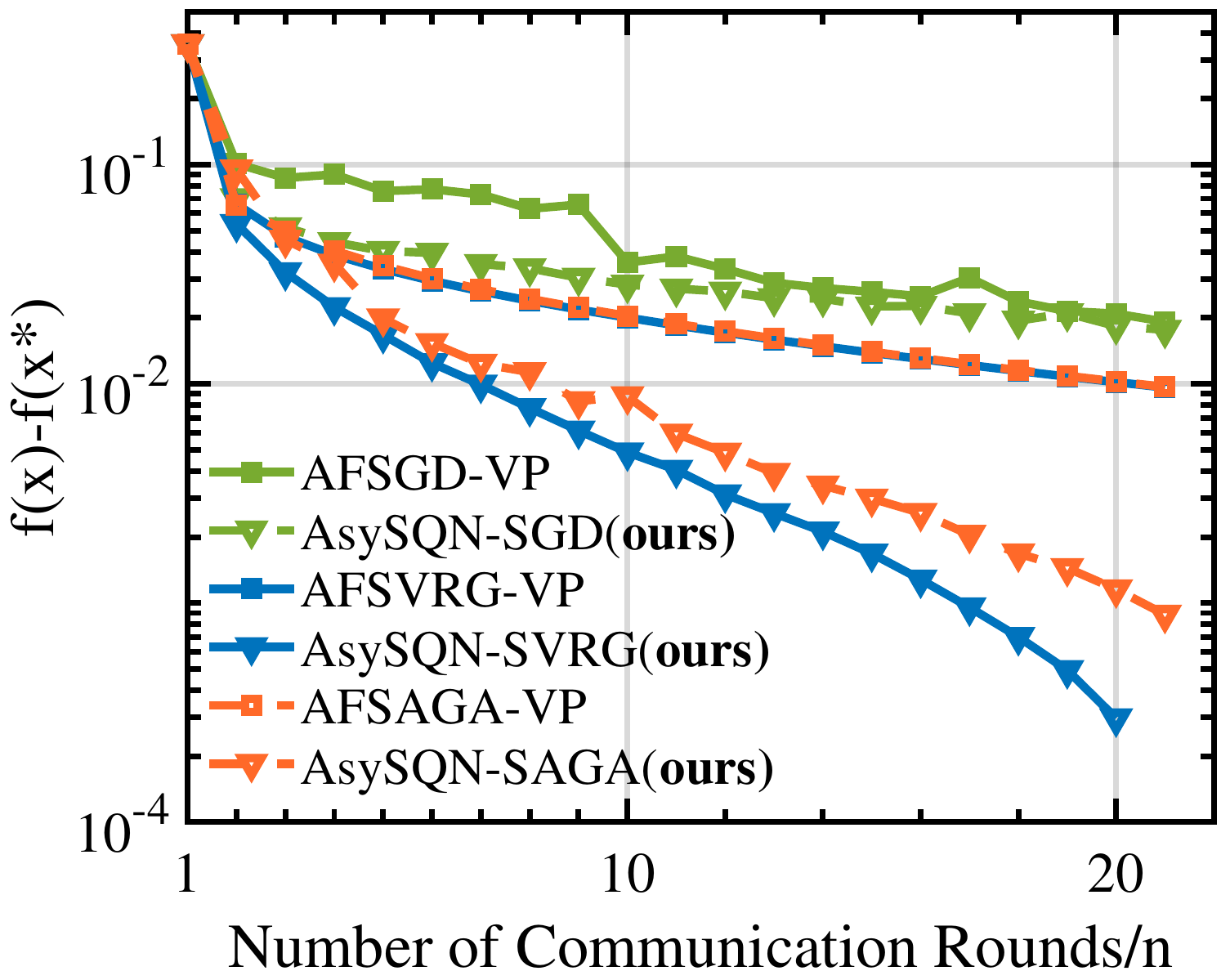}
			\caption{Results on $D_8$}
		\end{subfigure}%
		\caption{Sub-optimality v.s. the NCR on all datasets for solving $\mu$-strongly convex VFL problems.}
		\label{fig-epoch}
	\end{figure*}

	\noindent{\bf Problem for Evaluation:}
	In this paper, we use the popular $\ell_2$-norm regularized logistic regression problem for evaluation.
	\begin{equation}\label{P1}
		\min_{w \in \mathbb{R}^d} f(w):=\frac{1}{n} \sum_{i=1}^{n}  {\text{log}}(1+e^{-y_iw^{\mathrm{T}} x_i}) + \frac{\lambda}{2} \|w\|^2,
	\end{equation}
	where $\lambda$ is set as $1e^{-4}$ for all experiments.

{\begin{table*}[!t]
		\centering
		\caption{Results of CTI on different datasets, which are obtained during the training process of 21$n$ iterations (10 trials).}
		\label{table:communication}
		\setlength{\tabcolsep}{0.75mm}{{
			\scalebox{0.96}{	\small \begin{tabular}{@{}cccccccccc@{}}
					\toprule
					& & $D_1$($d_\ell=12$) & $D_2$($d_\ell=12$) & $D_3$($d_\ell=169,398$) & $D_4$($d_\ell=37$)
					& $D_5$($d_\ell=5,904$) & $D_6$($d_\ell=16$) & $D_7$($d_\ell=250$) & $D_8$($d_\ell=98$)
					\\ \midrule
					&CTI &1.057 & 1.059 & 89.265 & 1.187
					& 5.688 & 1.083 & 1.771 & 1.333 \\
					\bottomrule
		\end{tabular}}}}
	\end{table*}}
	
	\begin{figure*}[!t]
		\centering
		\ \begin{subfigure}{0.22\linewidth}
			\includegraphics[width=\linewidth]{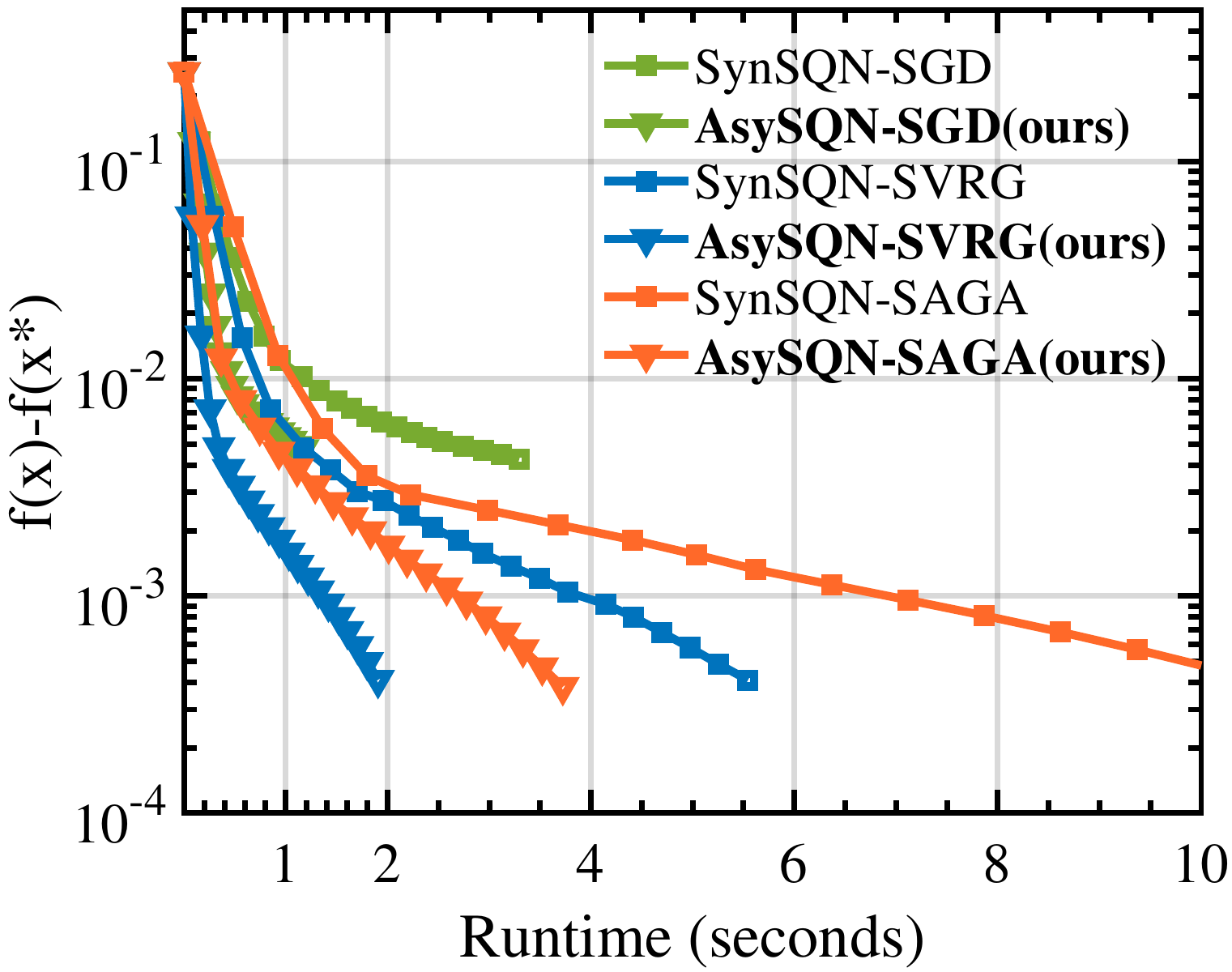}
			\caption{Results on $D_1$}
		\end{subfigure}
		\quad \begin{subfigure}{0.22\linewidth}
			\includegraphics[width=\linewidth]{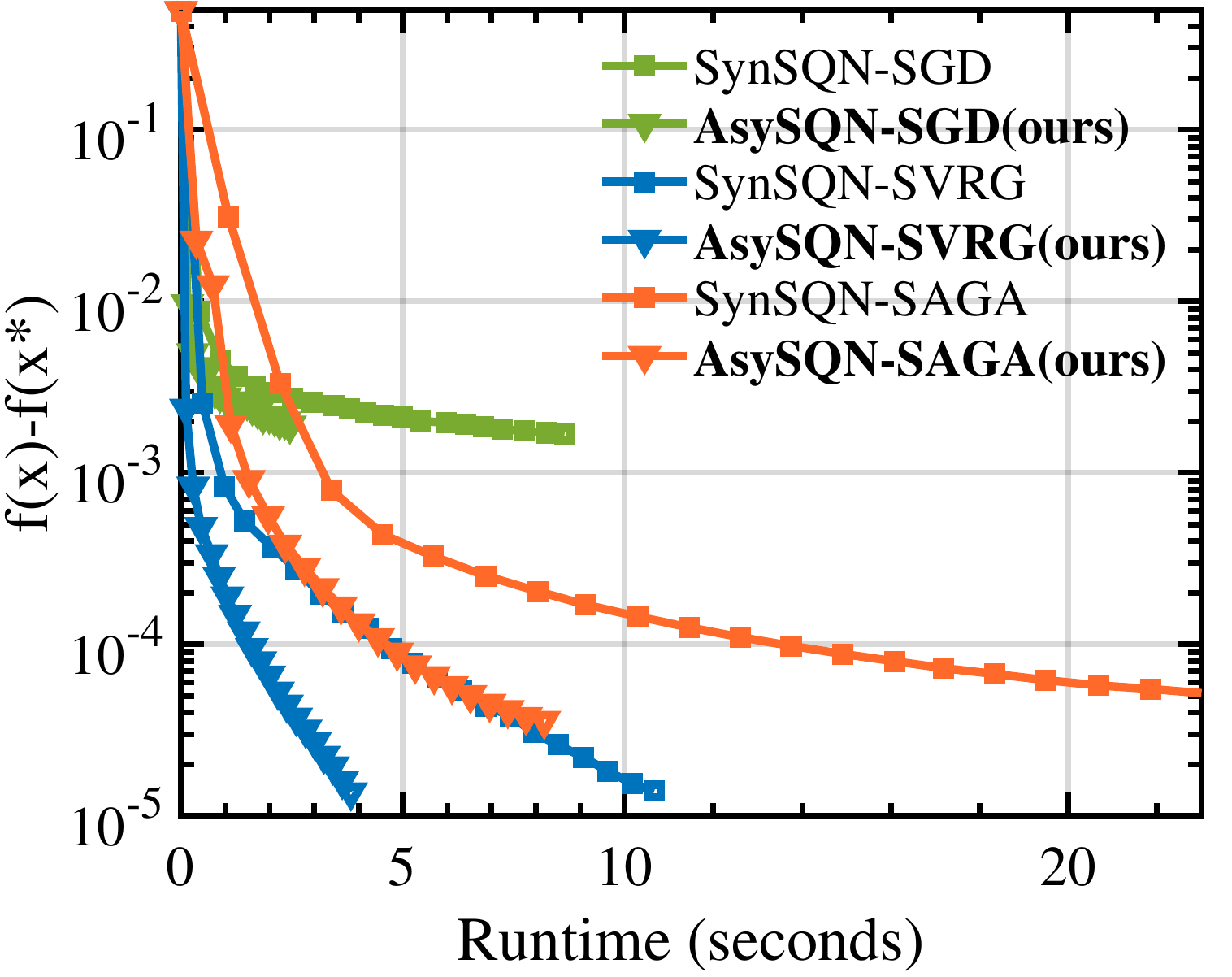}
			\caption{Results on $D_2$}
		\end{subfigure}
		\quad \begin{subfigure}{0.22\linewidth}
			\includegraphics[width=\linewidth]{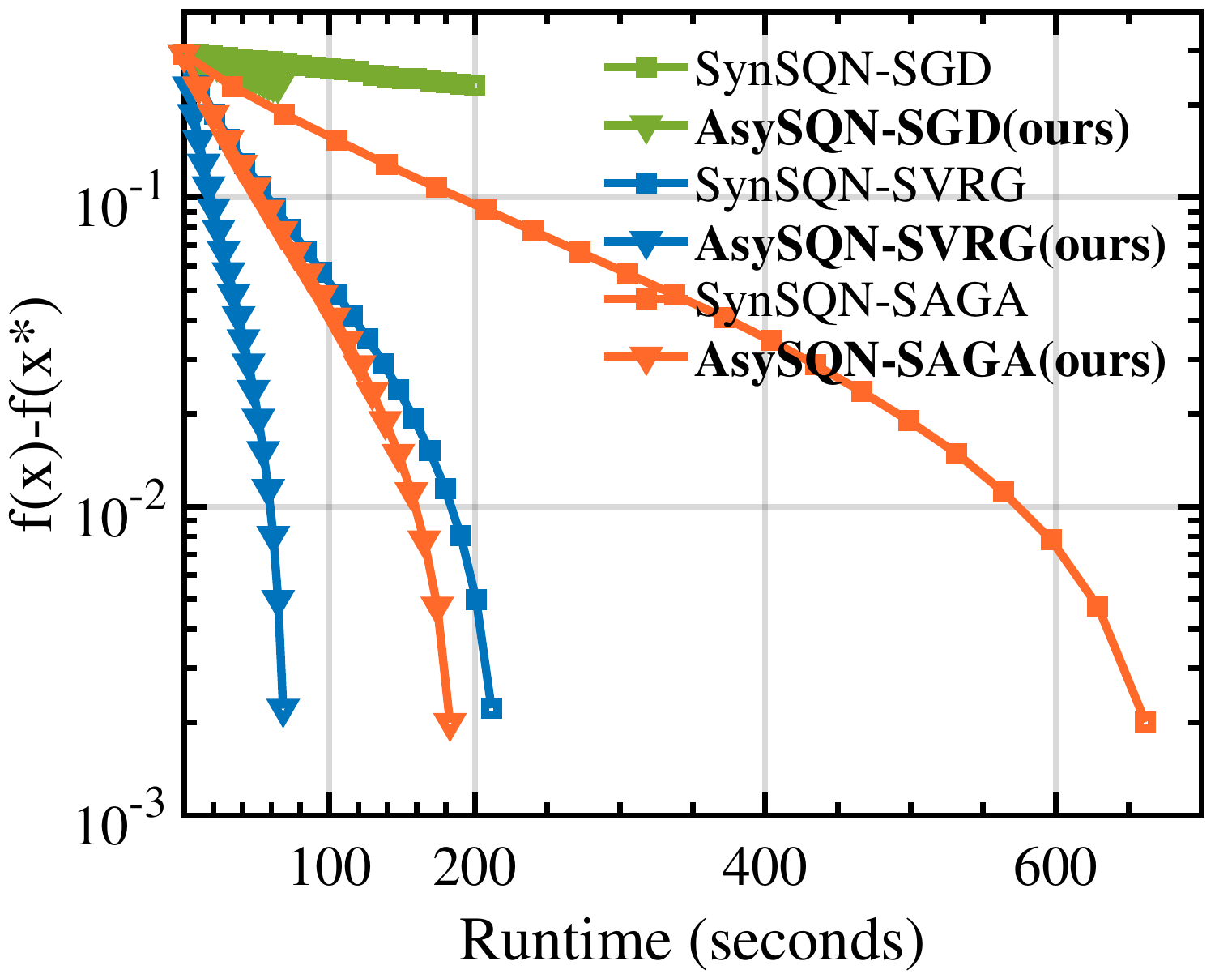}
			\caption{Results on $D_3$}
		\end{subfigure}%
		\quad \begin{subfigure}{0.22\linewidth}
			\includegraphics[width=\linewidth]{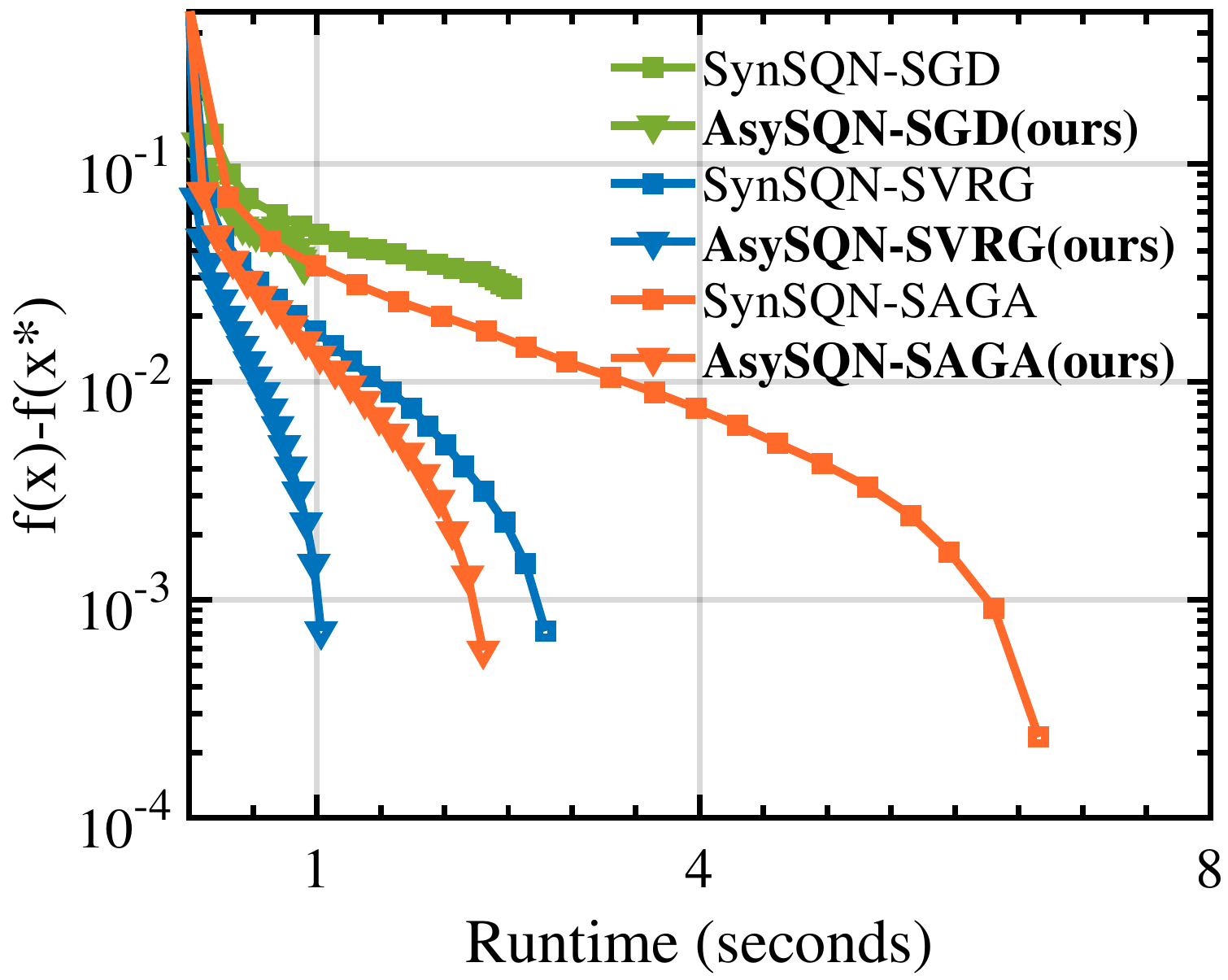}
			\caption{Results on $D_4$}
		\end{subfigure}%
		
		\begin{subfigure}{0.22\linewidth}
			\includegraphics[width=\linewidth]{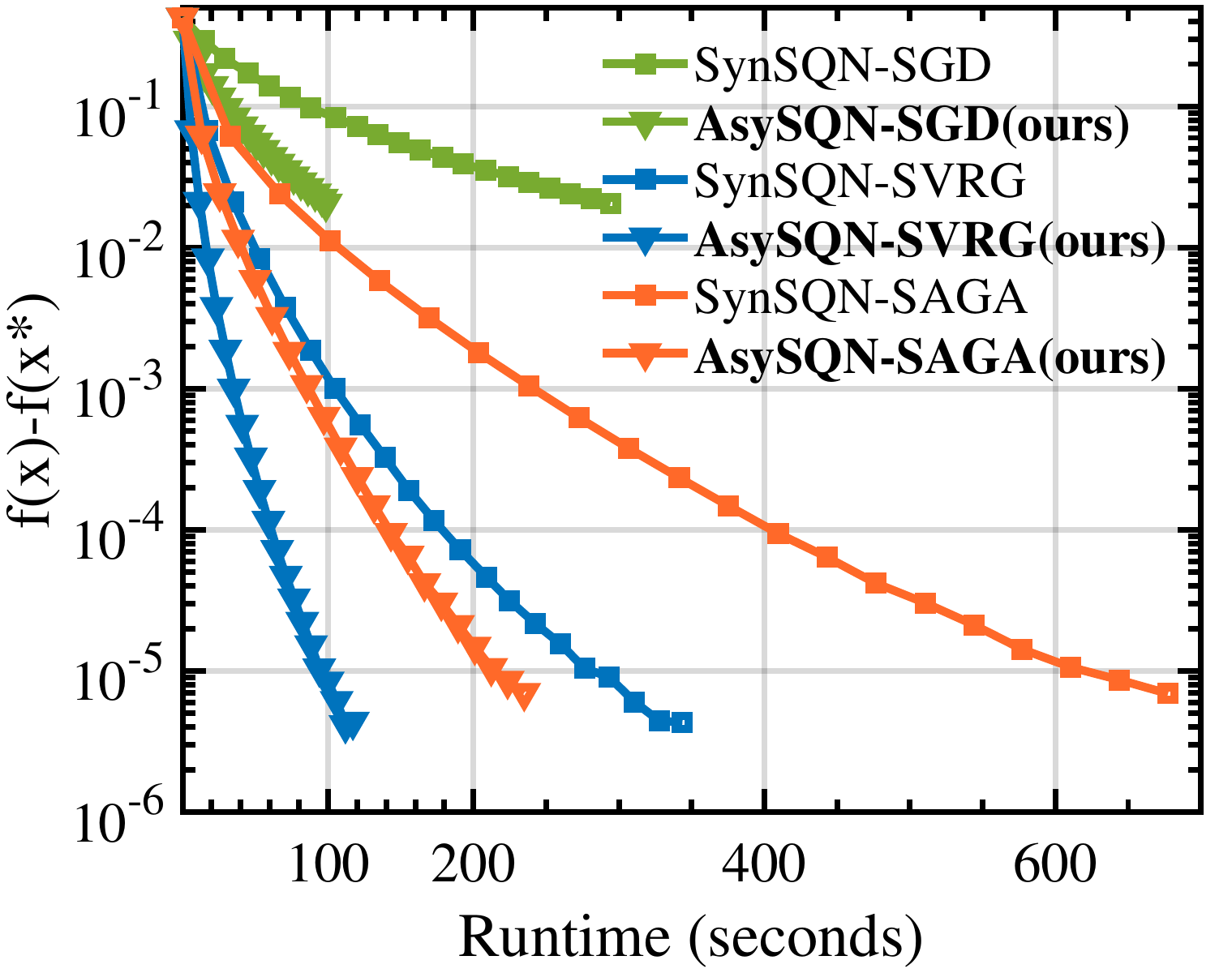}
			\caption{Results on $D_5$}
		\end{subfigure}%
		\quad 	\begin{subfigure}{0.22\linewidth}
			\includegraphics[width=\linewidth]{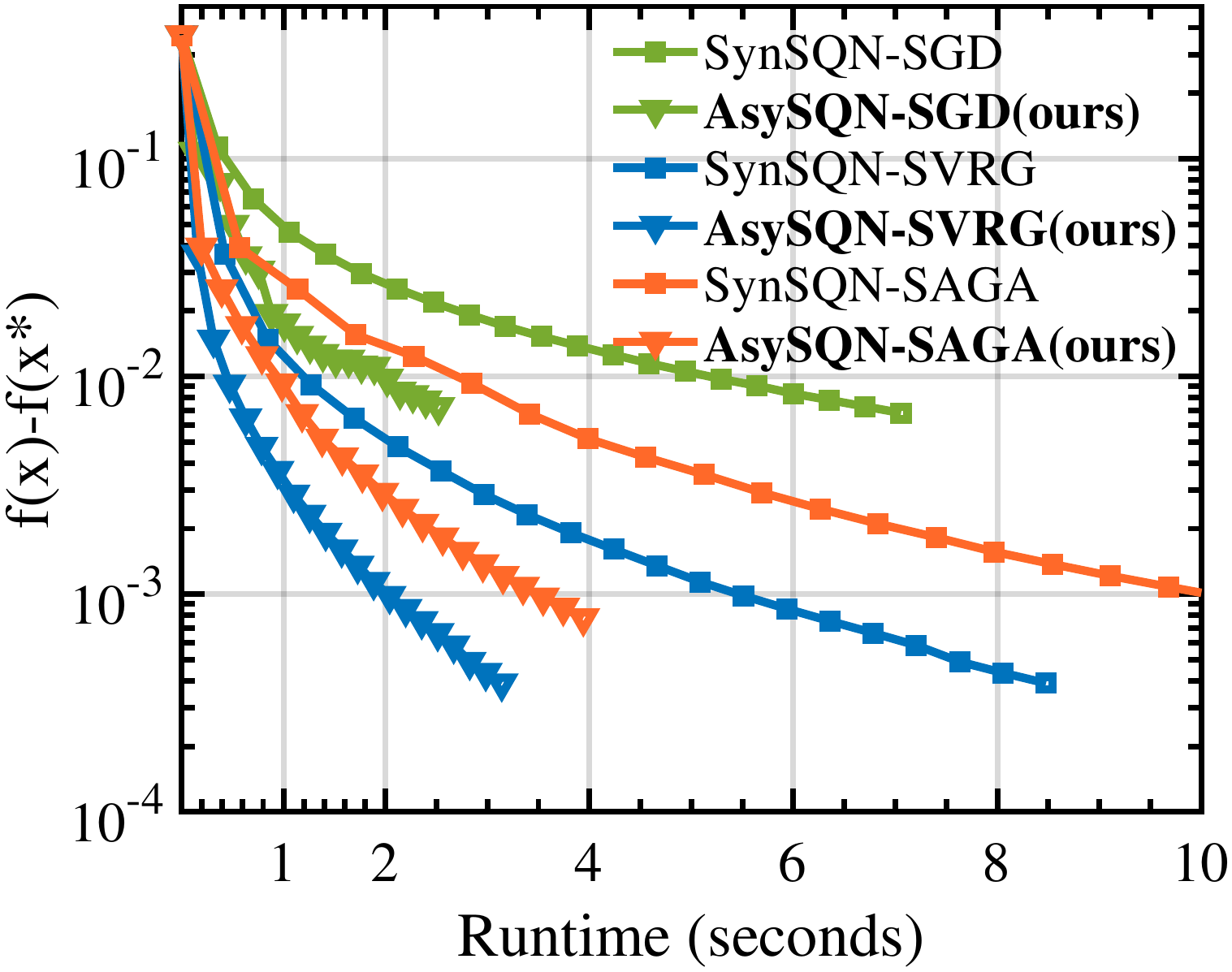}
			\caption{Results on $D_6$}
		\end{subfigure}%
		\quad	\ \begin{subfigure}{0.22\linewidth}
			\includegraphics[width=\linewidth]{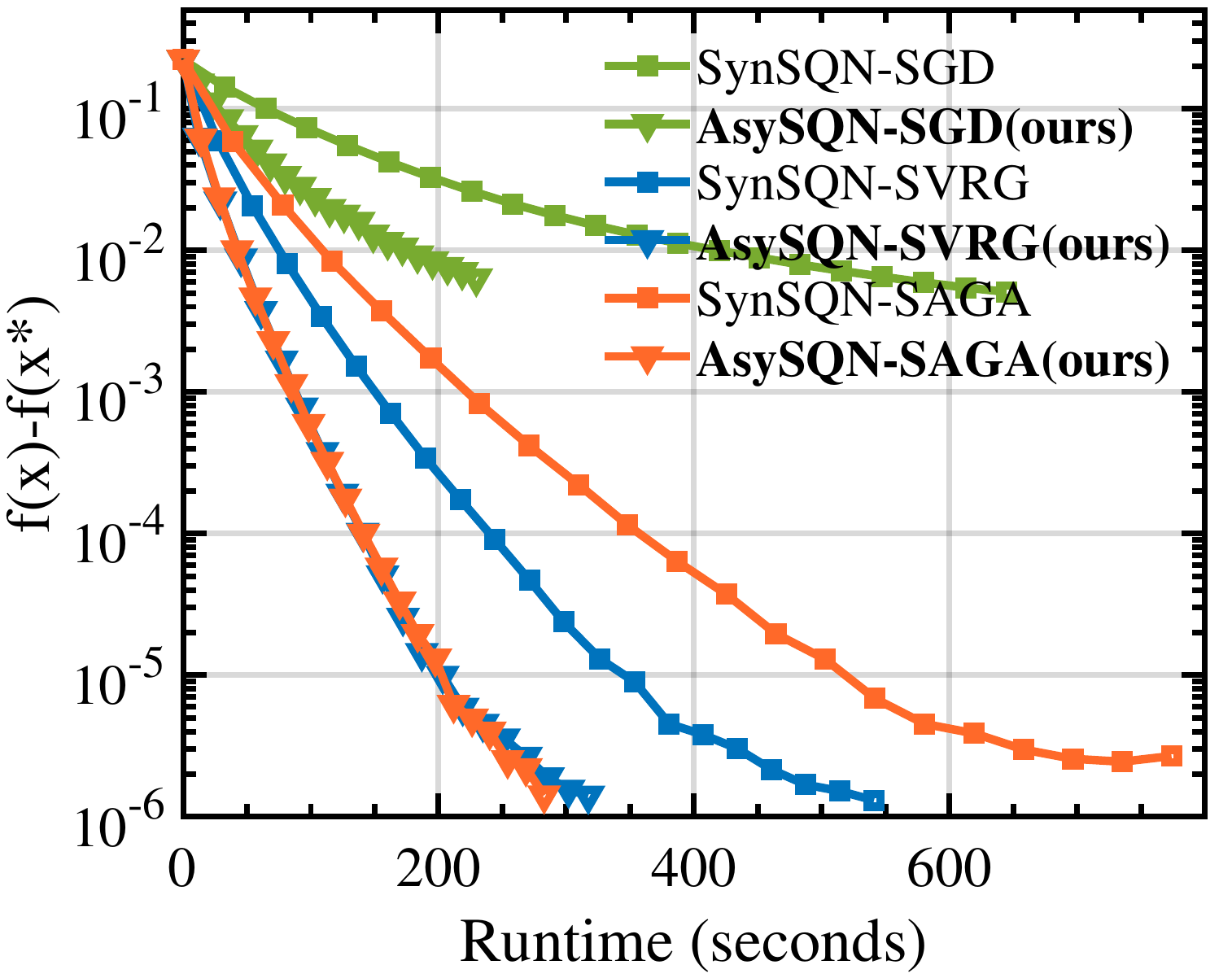}
			\caption{Results on $D_7$}
		\end{subfigure}%
		\quad	\ \begin{subfigure}{0.22\linewidth}
			\includegraphics[width=\linewidth]{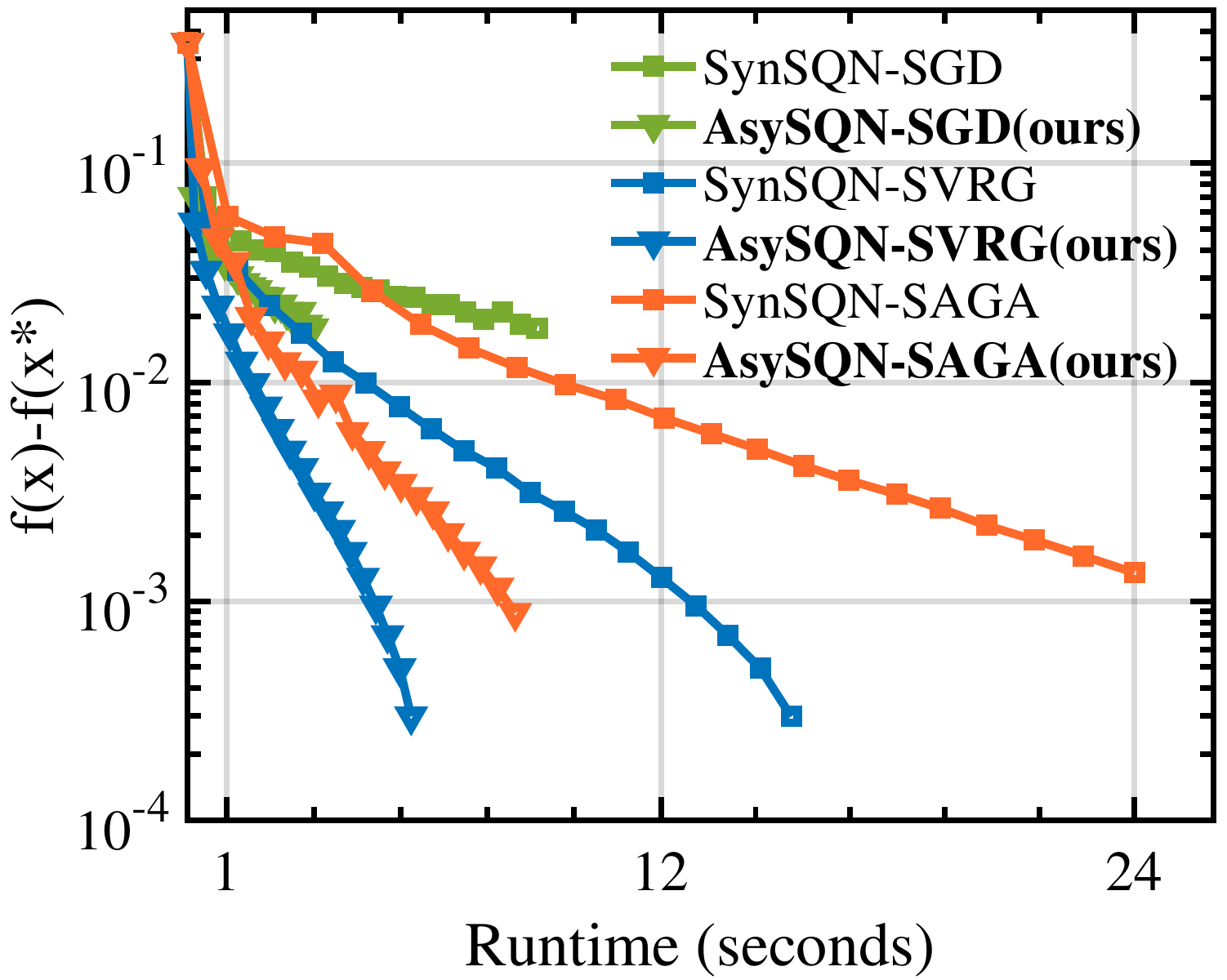}
			\caption{Results on $D_8$}
		\end{subfigure}%
		\caption{Sub-optimality v.s. training time on all datasets for solving $\mu$-strongly convex VFL problems.}
		\label{fig-trainingtime}
	\end{figure*}
	
	\begin{table*}[!t]
		\centering
		\caption{Accuracy of different algorithms to evaluate the losslessness  of our algorithms (10 trials).}
		\label{exp-lossless}
		\setlength{\tabcolsep}{1.4mm}{\small{\begin{tabular}{@{}cccccccccc@{}}
			\toprule
			&Algorithm& $D_1$(\%) & $D_2$(\%) & $D_3$(\%) & $D_4$(\%)
			& $D_5$(\%) & $D_6$(\%) & $D_7$(\%) & $D_8$(\%)
			\\ \midrule
			& NonF & 81.96$\pm$0.02 & 93.56$\pm$0.03 & 98.29$\pm$0.02 & 90.21$\pm$0.02
			& 96.02$\pm$0.03 & 85.03$\pm$0.02 & 87.43$\pm$0.04 & 87.01$\pm$0.04 \\
			&{\bf{ Ours}} & 81.96$\pm$0.03 & 93.56$\pm$0.06 & 98.29$\pm$0.03 & 90.21$\pm$0.03
			& 96.02$\pm$0.03 & 85.03$\pm$0.04 & 87.43$\pm$0.06 & 87.01$\pm$0.05 \\
			\bottomrule
		\end{tabular}}}
	\end{table*}

	\begin{table}
		\centering
		\caption{Improvements of CRU on all datasets.}
		\setlength{\tabcolsep}{1.3mm}{
			\begin{tabular}{@{}ccccccccc@{}}
				\toprule
				& $D_1$ & $D_2$ & $D_3$ &$D_4$ & $D_5$ & $D_6$ & $D_7$ & $D_8$ \\
				\midrule
				SGD-based  & 2.90 & 2.77 & 2.98 &2.84 & 2.82 & 2.80 & 2.75 &2.86\\
				SVRG-based & 2.96 & 2.80 & 3.01 &2.95 & 2.86. & 2.72 & 2.84 & 2.88 \\
				SAGA-based & 2.99 & 2.92 & 3.09 &2.96 & 2.91 & 2.79 & 2.74 & 2.92 \\ \bottomrule
		\end{tabular}}
		\label{CRU}
	\end{table}
	
	\begin{figure}
		\centering
		\begin{subfigure}{0.3\linewidth}
			\includegraphics[width=\linewidth]{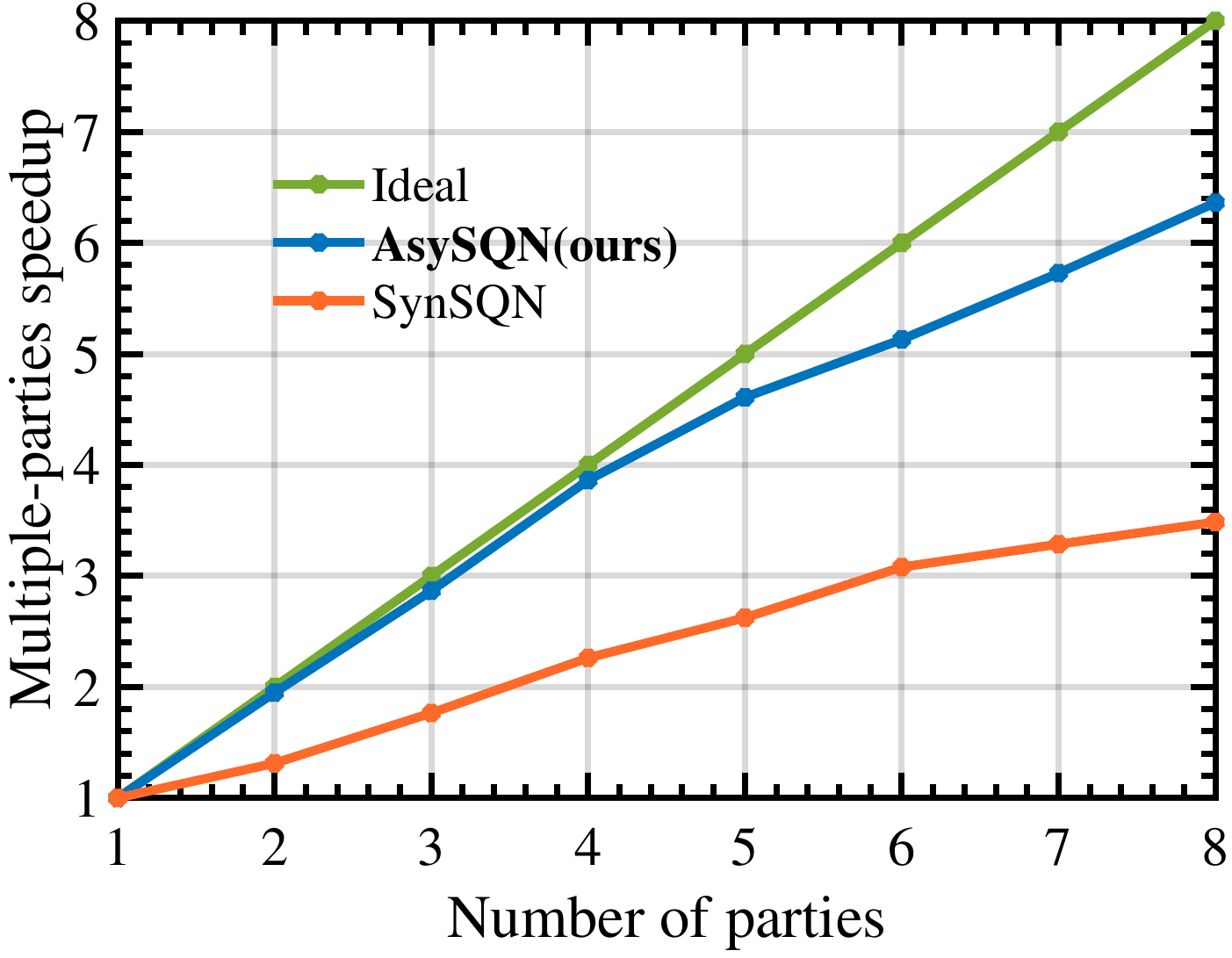}
			\caption{SAGA-based}
		\end{subfigure}
		\quad
		\begin{subfigure}{0.3\linewidth}
			\includegraphics[width=\linewidth]{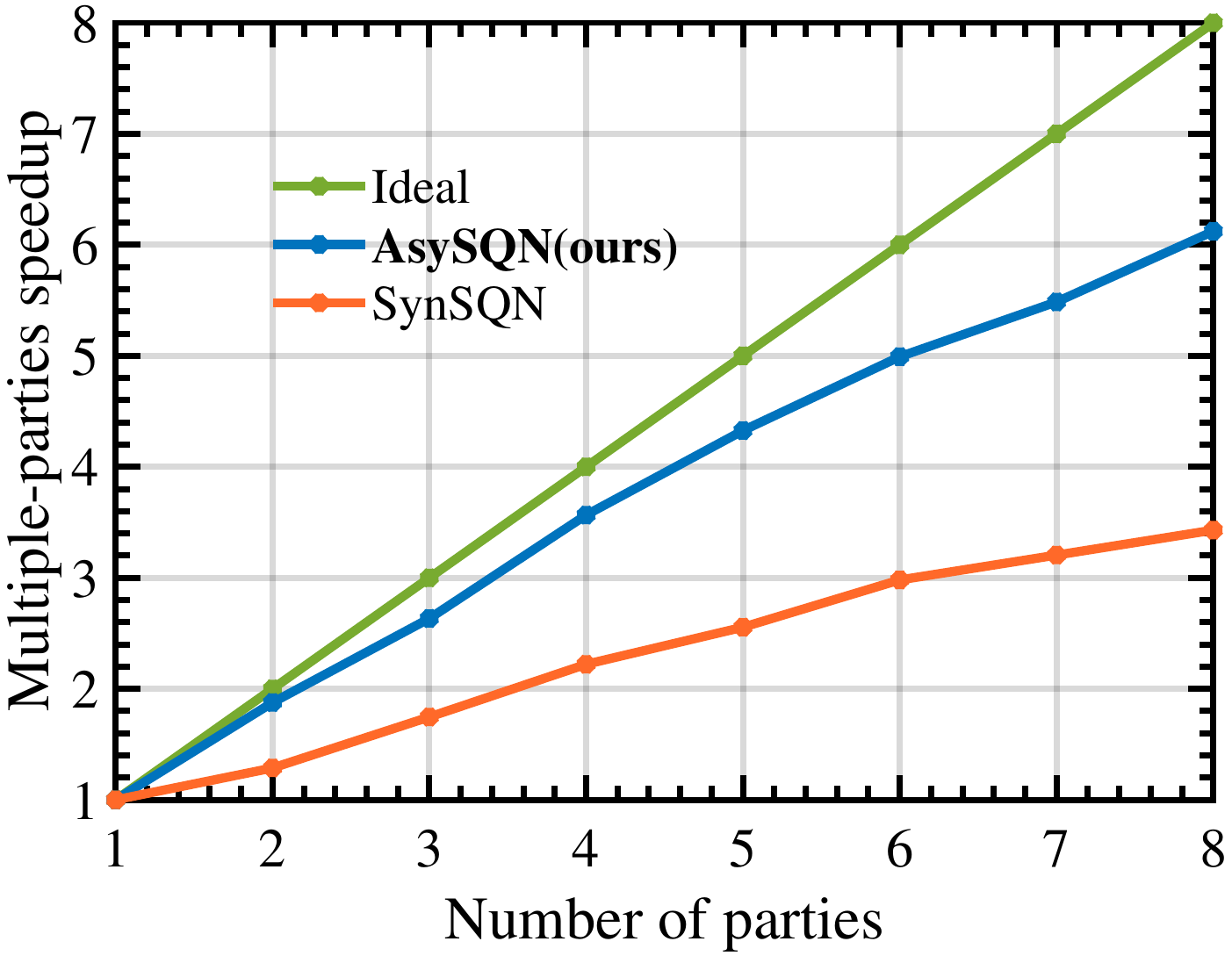}
			\caption{SVRG-based}
		\end{subfigure}
		\quad
		\begin{subfigure}{0.3\linewidth}
			\includegraphics[width=\linewidth]{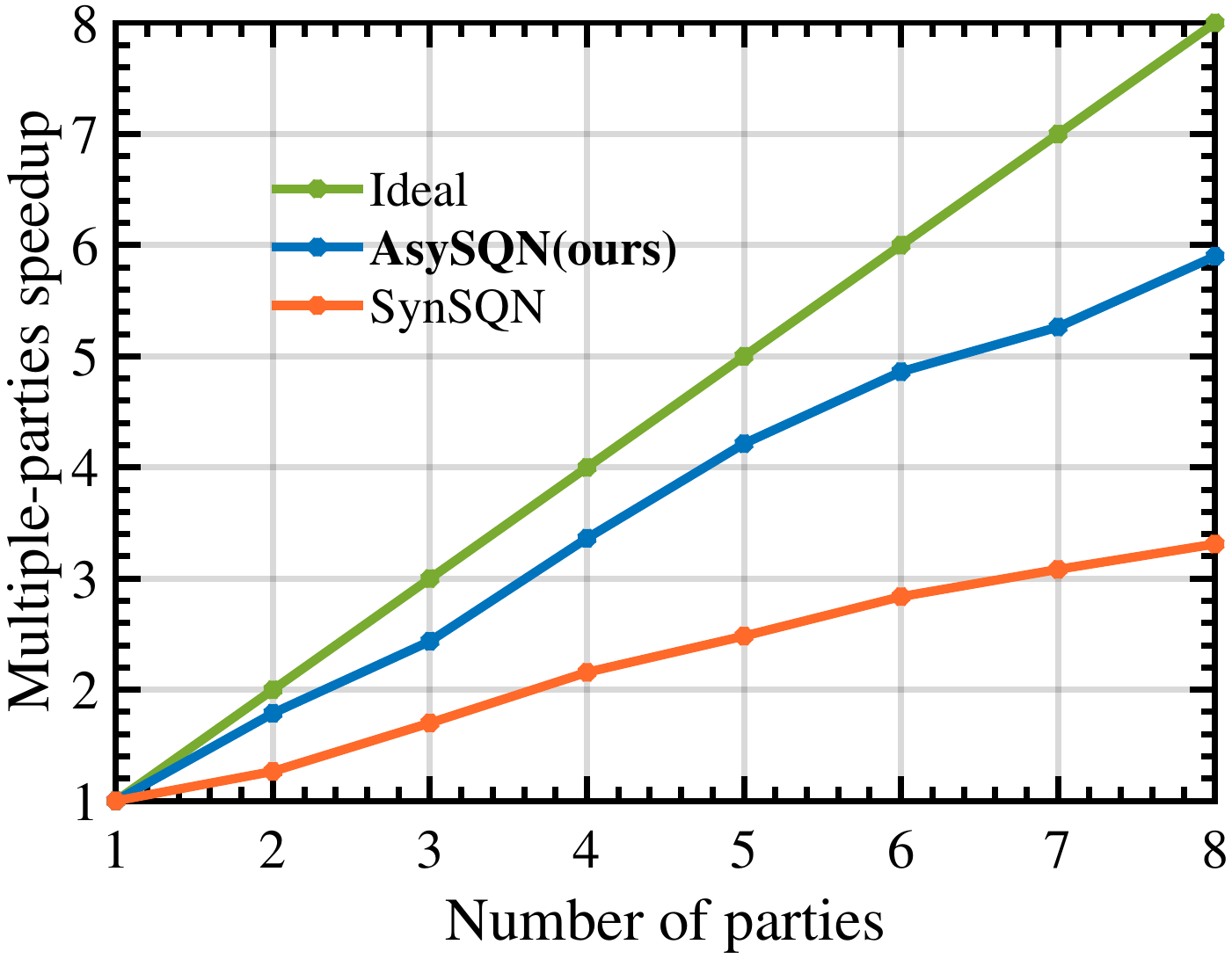}
			\caption{SGD-based}
		\end{subfigure}%
		\setlength{\abovecaptionskip}{0.2cm}
		\caption{Multiple-workers speedup scalability on $D_7$.}
		\label{Exp2}
	\end{figure}
	
	\noindent{\bf Datasets:}
	We use eight datasets for evaluation, which are summarized in Table~\ref{dataset}.
	Among them, $D_1$ (UCICreditCard) and $D_2$ (GiveMeSomeCredit) are from the Kaggle\footnote{https://www.kaggle.com/datasets}, and $D_3$ (news20), $D_4$(w8a), $D_5$ (rcv1), $D_6$ (a9a), $D_7$ (epsilon) and $D_8$ (mnist) are from the LIBSVM\footnote{https://www.csie.ntu.edu.tw/~cjlin/libsvmtools/datasets/}. Especially, $D_1$ and $D_2$ are the financial datasets, which are used to demonstrate the ability to address real applications.
	Following previous works, we apply one-hot encoding to categorical features of $D_1$ and $D_2$ , thus the number of features become 90 and 92, respectively, for $D_1$ and $D_2$.
	
	\subsection{Evaluation of Lower Communication Cost}
	We demonstrate that our proposed AsySQN-type algorithms have the lower communication costs by showing that AsySQN reduces the NCR and has a low per-round communication overhead.
	
	\noindent{\bf Reducing the Number of Communication Rounds:} To demonstrate that our AsySQN-type algorithms can significantly reduce the NCR, we compare them with the corresponding asynchronous stochastic first-order methods for VFL, e.g., compare AsySQN-SGD with AFSGD-VP \cite{gu2020Privacy}. The  experimental results are presented in Fig.~\ref{fig-epoch}. As depicted in Fig.~\ref{fig-epoch}, to achieve the same sub-optimality our AsySQN-type algorithms  needs much lower NCR than the corresponding first-order algorithms, which is consistent to our claim that AsySQN-type algorithms incorporated with approximate second-order information can dramatically reduce the number of communication rounds in practice.

	\noindent{\bf Low Per-Round Communication Overhead:} Note that, the quasi-Newton (QN)-based framework has already been proposed  in \cite{yang2019quasi}, which, however, is not communication-efficient due to large per-round communication overhead of transmitting the gradient difference. It is unnecessary to compare  AsySQN framework  with that QN-based because they have totally different structures. But it is necessary to demonstrate that our AsySQN is more communication-efficient than that transmitting the gradient difference. Thus, we compare the time spending of transmitting just the $\theta^\ell$ with that of transmitting the gradient (difference).  The results of communication time improvement (CTI) are presented in Table \ref{table:communication}, where
	\begin{equation}
		\text{CTI}= \frac{\text{$CT_1$ on transmitting a vector} \in \mathbb{R}^{d_\ell}}{\text{$CT_1$  on transmitting a scalar}}, \nonumber
	\end{equation}
	where $CT_1$ denotes communication time.  All experiments are implemented with $q=8$ parties and  the results are obtained in 10 trials.
	The results in Table \ref{table:communication} demonstrate that AsySQN indeed has a lower per-round communication overhead than transmitting the gradient. Specifically, when $d_\ell$ is small, the intrinsic communication time spending (\ie, time spending on transmitting nothing) dominates, thus the CRI is not significant. Importantly, when $d_\ell$ is significantly large (\emph{e.g.}, $D_5$, and even $D_3$) the CRI is very remarkable.

	\subsection{Evaluation of Better CRU}
	To demonstrate that our AsySQN-type algorithms have a better CRU, we compare them with the corresponding synchronous algorithms (SynSQN-type). We use CRUI to denote the CRU improvement of our asynchronous algorithms relative to the corresponding synchronous algorithms
	\begin{equation}\label{utilization}
		\text{CRUI}= \frac{\text{$CT_2$ of SynSQN-type algorithm}}
		{\text{$CT_2$ of AsySQN-type algorithm}}, \nonumber
	\end{equation}
	where $CT_2$ (computation time) means the overall training time subtracts the communication time during a fixed number of iterations, \ie, $21n$ for each datasets in our experiments. The results of CRUI are summarized in Table \ref{CRU}. As shown in  Table \ref{CRU}, our asynchronous algorithms have much better CRU than the corresponding synchronous algorithms for real-world VFL systems with parties owning unbalanced computation resources. Moreover, the more unbalanced computation resources the slowest and the fastest parties have the higher CRUI will be.
	\subsection{Evaluation of Training Efficiency and Scalability}
	To directly show the efficiency for training VFL models, we compared them with the corresponding synchronous ones and depict the loss v.s. training time curves in Fig.~\ref{fig-trainingtime}.
	
	We also consider the multiple-workers speedup scalability in terms of the number of parties and report the results in Fig.~\ref{Exp2}. Given $q$ parties, there is
	\[\text{$q$-workers speedup} =\frac{\text{TT for the serial computation}}{\text{TT of using $q$ parties}},\]
	where the training time (TT) is the time spending on reaching a certain precision of sub-optimality, e.g., $5e^{-5}$ for $D_6$. As shown in Fig.~\ref{Exp2}, asynchronous algorithms have a much better multiple-parties speedup scalability than synchronous ones and can achieve near-linear speedup.
	
	\subsection{Evaluation of Losslessness}
	To demonstrate the losslessness of our algorithms, we compare AsySQN-type algorithms with its non-federated (NonF) counterparts (the only difference to the AsySQN-type algorithms is that all data are integrated together for modeling). For datasets without testing data, we split the data set into $10$ parts, and use one of them for testing. Each comparison is repeated 10 times with $q=8$, and a same stop criterion, \emph{e.g.}, $5e^{-5}$ for $D_6$. As shown  in Table~\ref{exp-lossless}, the accuracy of our algorithms are the same with those of NonF algorithms, which demonstrate that our VFL algorithms are lossless.
	
	\section{Conclusion}
	In this paper, we proposed a novel AsySQN framework for the VFL applications to industry, where communication costs between different parties (\emph{e.g.}, different corporations, companies and organizations) are expensive and different parties owning unbalanced computation resources. Our AsySQN framework with slight per-round communication overhead utilizes approximate second-order information can dramatically reduces the number of communication rounds, and thus has lower communication cost. Moreover, AsySQN enables parties with unbalanced computation resources asynchronously update the model, which can achieve better computation resource utilization. Three SGD-type algorithms with different stochastic gradient estimators were also proposed under AsySQN, {\ie} AsySQN-SGD, -SVRG, SAGA, with theoretical guarantee for strongly convex problems.
\section{Acknowledgments}
Q.S. Zhang and C. Deng were supported in part by the National Natural Science Foundation of China under Grant 62071361, Key Research and Development Program of Shaanxi under Grant 2021ZDLGY01-03, and the Fundamental Research Funds for the Central Universities ZDRC2102.

	\bibliographystyle{ACM-Reference-Format}
	\bibliography{ref}
	
\newpage
\onecolumn
\appendix	
In the Appendix, we prove that given Algorithm 1 and Assumption 3, $H_k$ generated is positive semidefinite even that the infromation used for computing $H_k$ is stale. The proof of three theorems are presented in the next sections. Moreover, we also give the detailed convergence analyses of Theorems\ref{thm-sgdconvex} to \ref{thm-sagaconvex}. Given global iteration number $u$, $\xi(u)$ (or $\xi(u),\ell$) denotes its local iteration number on party $\psi(u)$ or $\ell$. And given local iteration number $k$, $\xi^{-1}(k)$ denotes its corresponding global iteration number. Note that, in the sequel, given global iteration number $u$, $H_{\xi(u)}$ denotes $H_{\xi(u)}^{\psi(u)}$ for notation abbreviation.
\section*{Appendix A: Proof of Theorem  \ref{thm-sgdconvex}}
Before proving Theorem \ref{thm-sgdconvex}, we first provide several basic inequalities in Lemma \ref{AsySPSAGA_lemma3}.
\begin{lemma} \label{AsySPSAGA_lemma3} For  AFSGD-VP, under Assumptions \ref{assum1}.1 and \ref{assum5},   we have that
\begin{eqnarray}\label{AsySPSAGA_lem3_0}
 \sum_{u\in K(t)} \| \nabla_{\mathcal{G}_{\psi(u)}} f(w_u) \|^2
 \geq \frac{1}{2} \sum_{u\in K(t)} \| \nabla_{\mathcal{G}_{\psi(u)}} f(w_t) \|^2 - \eta_1 \gamma^2 L^2  \sum_{u\in K(t)}   \sum_{v \in \{t,\ldots,u \}}\|  \widehat{v}^{\psi(v)}_v \|^2
\end{eqnarray}
\end{lemma}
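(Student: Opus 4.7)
The plan is to establish the inequality by relating each gradient $\nabla_{\mathcal{G}_{\psi(u)}} f(w_u)$ for $u\in K(t)$ to the reference gradient $\nabla_{\mathcal{G}_{\psi(u)}} f(w_t)$ via a standard two-norm splitting, then control the residual using the Lipschitz gradient property together with the telescoping structure of the asynchronous updates between iterations $t$ and $u$. Concretely, I would first apply the elementary inequality $\|a\|^2 \ge \tfrac{1}{2}\|b\|^2 - \|a-b\|^2$ (itself a direct consequence of $\|b\|^2 = \|a+(b-a)\|^2 \le 2\|a\|^2 + 2\|a-b\|^2$) with $a=\nabla_{\mathcal{G}_{\psi(u)}} f(w_u)$ and $b=\nabla_{\mathcal{G}_{\psi(u)}} f(w_t)$, summed over $u \in K(t)$. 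This isolates the desired leading term and reduces the remaining work to bounding $\sum_{u\in K(t)} \|\nabla_{\mathcal{G}_{\psi(u)}} f(w_u) - \nabla_{\mathcal{G}_{\psi(u)}} f(w_t)\|^2$.

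Next, I would invoke the Lipschitz continuity of $\nabla f$ (the global form subsumes its block restriction) to get $\|\nabla_{\mathcal{G}_{\psi(u)}} f(w_u) - \nabla_{\mathcal{G}_{\psi(u)}} f(w_t)\|^2 \le L^2 \|w_u - w_t\|^2$. The key structural observation is that, under the asynchronous update rule of AFSGD-VP, $w_u$ is produced from $w_t$ by a sequence of coordinate-block updates of the form $-\gamma\, U_{\psi(v)} \widehat{v}^{\psi(v)}_v$, so
\begin{equation*}
w_u - w_t = -\gamma \sum_{v\in\{t,\ldots,u\}} U_{\psi(v)} \widehat{v}^{\psi(v)}_v .
\end{equation*}
Because the $U_{\psi(v)}$ are block-selection operators with $[U_1,\ldots,U_q]=I_d$, taking norms and applying Cauchy–Schwarz (equivalently Jensen for the squared norm) together with $|K(t)| \le \eta_1$ from Assumption \ref{assum5} yields
\begin{equation*}
\|w_u - w_t\|^2 \le \eta_1 \gamma^2 \sum_{v\in\{t,\ldots,u\}} \|\widehat{v}^{\psi(v)}_v\|^2 .
\end{equation*}

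Finally, I would combine these pieces: sum the per-$u$ bound over $u\in K(t)$, multiply by $L^2$, and absorb the cardinality factor using $|K(t)| \le \eta_1$ once more to arrive at the stated bound $\eta_1 \gamma^2 L^2 \sum_{u\in K(t)} \sum_{v\in\{t,\ldots,u\}} \|\widehat{v}^{\psi(v)}_v\|^2$. The main subtlety, and the step I would pay the most attention to, is bookkeeping the index range $\{t,\ldots,u\}$ so that exactly those asynchronous updates contributing to $w_u - w_t$ are accounted for and the cardinality of $K(t)$ is used consistently; the analytic ingredients (the splitting inequality, Lipschitzness, and Jensen) are otherwise routine.
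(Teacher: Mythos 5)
Your argument is essentially the paper's own proof. The paper starts from $\|\nabla_{\mathcal{G}_{\psi(u)}} f(w_t)\|^2 \le 2\|\nabla_{\mathcal{G}_{\psi(u)}} f(w_t)-\nabla_{\mathcal{G}_{\psi(u)}} f(w_u)\|^2 + 2\|\nabla_{\mathcal{G}_{\psi(u)}} f(w_u)\|^2$ (the rearranged form of your splitting inequality), passes to the full gradient, applies the Lipschitz condition, expands $w_t-w_u$ as the telescoping sum of asynchronous block updates over $\{t,\ldots,u\}$, and uses $|\{t,\ldots,u\}|\le |K(t)|\le\eta_1$ exactly as you do; note that the single factor $\eta_1$ from Jensen on the inner sum already yields the stated bound, so no second application of $|K(t)|\le\eta_1$ is needed at the end. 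The one substantive discrepancy is the update rule: in the AsySQN algorithms each step is $-\gamma\, U_{\psi(v)} H_{\xi(v)}\widehat{v}^{\psi(v)}_v$ rather than $-\gamma\, U_{\psi(v)}\widehat{v}^{\psi(v)}_v$, so the paper's proof additionally invokes Assumption \ref{assum3}.2 to bound $\|H_{\xi(v)}\|\le\sigma_2$ and arrives at the residual coefficient $\eta_1\sigma_2^2\gamma^2L^2$, which is the form actually used later in the proofs of Theorems \ref{thm-sgdconvex}--\ref{thm-sagaconvex}; your derivation (like the displayed lemma statement, which appears to carry a typo) omits this $\sigma_2^2$ factor, and would need the extra step for the quasi-Newton-scaled iterates.
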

\begin{proof} For any global iteration number $u \in K(t)$, we have that
\begin{eqnarray}\label{eqLem1_1}
&&  \| \nabla_{\mathcal{G}_{\psi(u)}} f(w_t) \|^2
\\ \nonumber &\leq& \left ( \| \nabla_{\mathcal{G}_{\psi(u)}} f(w_t) - \nabla_{\mathcal{G}_{\psi(u)}} f(w_u) \| +  \| \nabla_{\mathcal{G}_{\psi(u)}} f(w_u) \| \right)^2
\\ \nonumber &\leq& 2 \| \nabla_{\mathcal{G}_{\psi(u)}} f(w_t) - \nabla_{\mathcal{G}_{\psi(u)}} f(w_u) \|^2 +  2 \| \nabla_{\mathcal{G}_{\psi(u)}} f(w_u) \|^2
\\ \nonumber &\leq& 2  \| \nabla f(w_t) - \nabla f(w_u) \|^2
+  2 \| \nabla_{\mathcal{G}_{\psi(u)}} f(w_u) \|^2
\\ \nonumber &\stackrel{ (a) }{\leq} & 2 L^2 \| w_t - w_u \|^2
+  2 \| \nabla_{\mathcal{G}_{\psi(u)}} f(w_u) \|^2
\\ \nonumber &=&
2 L^2 \gamma^2 \| \sum_{v \in \{t,\ldots,u \}} \textbf{U}_{\psi(v)} H_{\xi(v)}\widehat{v}^{\psi(v)}_v  \|^2
+  2 \| \nabla_{\mathcal{G}_{\psi(u)}} f(w_u) \|^2
\\ \nonumber &\leq &
2 \eta_1 \sigma_2^2L^2 \gamma^2 \sum_{v \in \{t,\ldots,u \}} \|\widehat{v}^{\psi(v)}_v \|^2
+  2 \| \nabla_{\mathcal{G}_{\psi(u)}} f(w_u) \|^2
\end{eqnarray}
where the inequality (a) uses Assumption \ref{assum2}.1,  the last inequality uses Assumption \ref{assum3}.2
According to (\ref{eqLem1_1}), we have that
\begin{eqnarray}\label{eqLem1_2}
\| \nabla_{\mathcal{G}_{\psi(u)}} f(w_u) \|^2 \geq \frac{1}{2}\| \nabla_{\mathcal{G}_{\psi(u)}} f(w_t) \|^2
- \eta_1 \sigma_2^2L^2 \gamma^2 \sum_{v \in \{t,\ldots,u \}} \|  \widehat{v}^{\psi(v)}_v \|^2
\end{eqnarray}

Summing (\ref{eqLem1_2}) over all $u \in K(t)$, we obtain the conclusion.
This completes the proof.
\end{proof}
Based on the basic inequalities in Lemma \ref{AsySPSAGA_lemma3}, we provide the proof of Theorem \ref{thm-sgdconvex} in the following.
\newtheorem{refproof}{Proof}

\begin{proof}{\textbf{of Theorem \ref{thm-sgdconvex}:}}  For global iteration number $u \in K(t)$, we have that
\begin{eqnarray}\label{EqThm1_1}
&& \mathbb{E} f (w_{u+1})
\\ \nonumber
&\stackrel{ (a) }{\leq}&  \mathbb{E} \left ( f (w_{u}) + \langle \nabla f(w_{u}), w_{u+1}-w_{u}  \rangle + \frac{L_{\psi(u)}}{2} \|w_{u+1}-w_{u}   \|^2  \right )
\\ \nonumber
&=&  \mathbb{E} \left ( f (w_{u}) -  \gamma \langle \nabla f(w_{u}), H_{\xi(u)}\widehat{v}^{\psi(u)}_u  \rangle + \frac{L_{\psi(u)} \gamma^2}{2} \|  H_{\xi(u)}\widehat{v}^{\psi(u)}_u  \|^2  \right )
\\ \nonumber
&=&  \mathbb{E} \left ( f (w_{u}) -  \gamma \langle \nabla f(w_{u}),  H_{\xi(u)} \widehat{v}^{\psi(u)}_u - H_{\xi(u)} \nabla_{\mathcal{G}_{\psi(u)}} f_{\mathcal{I}_u} ({w}_u) + H_{\xi(u)}\nabla_{\mathcal{G}_{\psi(u)}} f_{\mathcal{I}_u} ({w}_u) \rangle \right )
\\ \nonumber
&& +\frac{L_{\psi(u)} \gamma^2}{2} \mathbb{E}\|  H_{\xi(u)}\widehat{v}^{\psi(u)}_u  \|^2
\\ \nonumber
&=&  \mathbb{E}  f (w_{u}) -  \gamma \mathbb{E} \langle \nabla f(w_{u}),  H_{\xi(u)}\nabla_{\mathcal{G}_{\psi(u)}} f ({w}_u) \rangle  + \gamma \mathbb{E} \langle \nabla f(w_{u}),   H_{\xi(u)} (\nabla_{\mathcal{G}_{\psi(u)}} f_{\mathcal{I}_u} ({w}_u) - \nabla_{\mathcal{G}_{\psi(u)}} f_{\mathcal{I}_u} (\widehat{w}_u))  \rangle
 \\ \nonumber
 && + \frac{L_{\psi(u)} \gamma^2}{2} \mathbb{E} \|  H_{\xi(u)}\widehat{v}^{\psi(u)}_u  \|^2
 \\ \nonumber
 &\stackrel{ (b) }{\leq}&  \mathbb{E}  f (w_{u}) -  \gamma \|H_{\xi(u)}\|\mathbb{E} \|  \nabla_{\mathcal{G}_{\psi(u)}} f ({w}_u) \|^2
 + \frac{\gamma\|H_{\xi(u)}\|}{2} \mathbb{E} \| \nabla_{\mathcal{G}_{\psi(u)}} f ({w}_u) \|^2
\\ \nonumber
&&+  \frac{\gamma\|H_{\xi(u)}\|}{2} \mathbb{E} \| \nabla_{\mathcal{G}_{\psi(u)}} f_{\mathcal{I}_u} ({w}_u) - \nabla_{\mathcal{G}_{\psi(u)}} f_{\mathcal{I}_u} (\widehat{w}_u) \|^2
  +  \frac{L_{\psi(u)} \gamma^2\|H_{\xi(u)}\|^2}{2} \mathbb{E} \|  \widehat{v}^{\psi(u)}_u  \|^2
 \\ \nonumber
 &\stackrel{ (c) }{\leq}&  \mathbb{E}  f (w_{u}) -  \frac{\gamma\sigma_1}{2} \mathbb{E} \|  \nabla_{\mathcal{G}_{\psi(u)}} f ({w}_u) \|^2  +  \frac{\gamma \sigma_2L^2}{2} \mathbb{E} \| {w}_u - \widehat{w}_u \|^2
 + \frac{L_{\psi(u)} \gamma^2\sigma_2^2}{2} \mathbb{E} \|  \widehat{v}^{\psi(u)}_u  \|^2
  \\ \nonumber
 &\stackrel{ (d) }{\leq}&  \mathbb{E}  f (w_{u}) -  \frac{\gamma\sigma_1}{2} \mathbb{E} \|  \nabla_{\mathcal{G}_{\psi(u)}} f ({w}_u) \|^2
 +  \frac{\tau_1 \sigma_2 L^2 \gamma^3}{2}  \sum_{u' \in D(u)} \mathbb{E} \|   \textbf{U}_{\psi(u')} H_{\xi(u')}\widehat{v}^{\psi(u')}_{u'} \|^2
 + \frac{L_{\psi(u)} \gamma^2\sigma_2^2}{2} \mathbb{E} \|  \widehat{v}^{\psi(u)}_u  \|^2
 \\ \nonumber
 &\leq&  \mathbb{E}  f (w_{u}) -  \frac{\gamma\sigma_1}{2} \mathbb{E} \|  \nabla_{\mathcal{G}_{\psi(u)}} f ({w}_u) \|^2
 +  \frac{\tau_1 \sigma_2^3L^2 \gamma^3}{2}  \sum_{u' \in D(u)} \mathbb{E} \|   \widehat{v}^{\psi(u')}_{u'} \|^2
 + \frac{L_{\psi(u)} \gamma^2\sigma_2^2}{2} \mathbb{E} \|  \widehat{v}^{\psi(u)}_u  \|^2
 \end{eqnarray}
where the  inequalities (a) follows from Assumption \ref{assum1}.2, (b) uses the fact of $\langle a,b \rangle\leq \frac{1}{2}(\|a\|^2+\|b\|^2)$ and $H_{\xi(u)}$ is positive semidefinite, (c) follows from Assumption \ref{assum3}.2 and
\begin{align}\label{001}
  \mathbb{E} \| \nabla_{\mathcal{G}_{\psi(u)}} f_{\mathcal{I}_u} ({w}_u) - \nabla_{\mathcal{G}_{\psi(u)}} f_{\mathcal{I}_u} (\widehat{w}_u) \|^2
  &=  \mathbb{E} \| \frac{1}{|\mathcal{I}_u|} \sum_{i_u\in \mathcal{I}_u}(\nabla_{\mathcal{G}_{\psi(u)}} f_{i_u} ({w}_u) - \nabla_{\mathcal{G}_{\psi(u)}} f_{i_u} (\widehat{w}_u)) \|^2
  \nonumber \\
  & \overset{(i)}=\frac{|\mathcal{I}_u|}{|\mathcal{I}_u|^2} \sum_{i_u\in \mathcal{I}_u}\mathbb{E}\| (\nabla_{\mathcal{G}_{\psi(u)}} f_{i_u} ({w}_u) - \nabla_{\mathcal{G}_{\psi(u)}} f_{i_u} (\widehat{w}_u)) \|^2
  \nonumber \\
  & \leq L^2\mathbb{E}\|w_u-\widehat{w}_u\|^,
\end{align}
where (i) uses $\|\sum_{i=1}^{n}a_i\|^2 = n\sum_{i=1}^{n}\mathbb{E}\|a_i\|^2$ , and  (ii) follows from Assumption~\ref{assum1}.2.

Summing  (\ref{EqThm1_1}) over all $ u \in K(t)$, we obtain
\begin{eqnarray}\label{EqThm1_2}
&& \mathbb{E} f (w_{t+|K(t)|}) - \mathbb{E}f (w_{t})
\\ \nonumber &\leq&
-  \frac{\gamma\sigma_1}{2} \sum_{u \in K(t)} \mathbb{E} \|  \nabla_{\mathcal{G}_{\psi(u)}} f ({w}_u) \|^2
+  \frac{\tau_1 \sigma_2^3 L^2 \gamma^3}{2}  \sum_{u \in K(t)}\sum_{u' \in D(u)} \mathbb{E} \|   \widehat{v}^{\psi(u')}_{u'} \|^2
\\ \nonumber &&
+ \frac{L_{\max} \sigma_2^2\gamma^2}{2} \sum_{u \in K(t)}\mathbb{E} \|  \widehat{v}^{\psi(u)}_u  \|^2
\\ \nonumber
&\stackrel{ (a) }{\leq} &
 -  \frac{\gamma\sigma_1}{2} \left ( \frac{1}{2} \sum_{u\in K(t)} \| \nabla_{\mathcal{G}_{\psi(u)}} f(w_t) \|^2 - \eta_1 \sigma_2^2\gamma^2 L^2  \sum_{u\in K(t)}   \sum_{v \in \{t,\ldots,u \}}\|\widehat{v}^{\psi(v)}_v \|^2 \right )
\\ \nonumber
&&  +  \frac{\tau_1 \sigma_2^3 L^2 \gamma^3}{2}  \sum_{u \in K(t)}\sum_{u' \in D(u)} \mathbb{E} \|   \widehat{v}^{\psi(u')}_{u'} \|^2
 + \frac{L_{\max} \sigma_2^2\gamma^2}{2} \sum_{u \in K(t)}\mathbb{E} \|  \widehat{v}^{\psi(u)}_u  \|^2
\\ \nonumber & = &
  -  \frac{\gamma\sigma_1}{4} \sum_{u\in K(t)} \| \nabla_{\mathcal{G}_{\psi(u)}} f(w_t) \|^2
  + \frac{\eta_1 \sigma_1\sigma_2^2 \gamma^3 L^2 }{2} \sum_{u\in K(t)}   \sum_{v \in \{t,\ldots,u \}}\|\widehat{v}^{\psi(v)}_v \|^2
\\ \nonumber
&&+  \frac{\tau_1 \sigma_2^3 L^2 \gamma^3}{2}  \sum_{u \in K(t)}\sum_{u' \in D(u)} \mathbb{E} \|   \widehat{v}^{\psi(u')}_{u'} \|^2
 + \frac{L_{\max} \sigma_2^2\gamma^2}{2} \sum_{u \in K(t)}\mathbb{E} \|  \widehat{v}^{\psi(u)}_u  \|^2
  \\ \nonumber
  & \leq &
  -  \frac{\gamma\sigma_1}{4}  \| \nabla f(w_t) \|^2 + \frac{\eta_1\sigma_1\sigma_2^2 \gamma^3 L^2 }{2} \sum_{u\in K(t)}   \sum_{v \in \{t,\ldots,u \}}\|\widehat{v}^{\psi(v)}_v \|^2
\\ \nonumber
&&+  \frac{\tau_1 \sigma_2^3 L^2 \gamma^3}{2}  \sum_{u \in K(t)}\sum_{u' \in D(u)} \mathbb{E} \|   \widehat{v}^{\psi(u')}_{u'} \|^2
 + \frac{L_{\max} \sigma_2^2\gamma^2}{2} \sum_{u \in K(t)}\mathbb{E} \|  \widehat{v}^{\psi(u)}_u  \|^2
 \\ \nonumber
 & \stackrel{ (b) }{\leq} &
 -  \frac{\gamma \mu \sigma_1}{2}  \left (  f(w_t)-f(w^*) \right )
   + \left (\frac{\sigma_1\sigma_2^2 \gamma^3 L^2 \eta_1^3 }{2}
   + \frac{\eta_1 \sigma_2^3 L^2 \gamma^3 \tau_1^2}{2}
   +  \frac{L_{\max} \eta_1\sigma_2^2\gamma^2}{2}  \right ) \frac{G}{b}
\\ \nonumber & = &
-  \frac{\gamma \mu \sigma_1}{2}  \left (  f(w_t)-f(w^*) \right )
+  \underbrace{\frac{\eta_1 \sigma_2^2\gamma^2 \left (  \sigma_1\gamma L^2 \eta_1^2
+  \sigma_2\gamma L^2 \tau_1^2  +  L_{\max}  \right ) G}{2b}}_{C}
\end{eqnarray}
where the  inequality (a) uses Lemma \ref{AsySPSAGA_lemma3}, the  inequality (b) uses Assumptions \ref{assum4} and \ref{assum5}, and that
\begin{align}\label{lemma00}
  & \mathbb{E} \|  \widehat{v}^{\psi(u)}_u  \|^2
  = \mathbb{E} \| \frac{1}{|\mathcal{I}_u|} \sum_{i_u\in \mathcal{I}_u}\nabla_{\mathcal{G}_{\psi(u)}} f_{i_u} (\widehat{w}_u) \|^2
 \overset{(i)} = \frac{1}{|\mathcal{I}_u|^2} \sum_{i_u\in \mathcal{I}_u}  \mathbb{E} \| \nabla_{\mathcal{G}_{\psi(u)}} f_{i_u} (\widehat{w}_u) \|^2
  \leq \frac{G}{|\mathcal{I}_u|}
 \overset{(ii)} = \frac{G}{b},
\end{align}
where (i) follows from that $\mathbb{E}\|\sum_{i=1}^{n}a_i\|^2 = \sum_{i=1}^{n}\mathbb{E}\|a_i\|^2$ for independent random variants and (ii) uses $|\mathcal{I}_u|=b$ and Assumption \ref{assum1}.3. According to (\ref{EqThm1_2}), we have that
\begin{eqnarray}\label{EqThm1_3}
&& \mathbb{E} f (w_{t+|K(t)|}) -f(w^*)\leq  \left (1-  \frac{\gamma \mu \sigma_1}{2} \right ) \left (  f(w_t)-f(w^*) \right ) +  C
\end{eqnarray}
Assume that $\cup_{\kappa \in P(t)}=\{0,1,\ldots,t\}$, applying  (\ref{EqThm1_3}), we have that
\begin{eqnarray}\label{EqThm1_4}
&& \mathbb{E} f (w_{t}) -f(w^*)
\\ \nonumber
& \leq & \left (1-  \frac{\gamma \mu \sigma_1}{2} \right )^{\upsilon(t)} \left (  f(w_0)-f(w^*) \right ) +  C \sum_{i=0}^{\upsilon(t)}\left (1-  \frac{\gamma \mu \sigma_1}{2} \right )^i
\\ \nonumber
& \leq & \left (1-  \frac{\gamma \mu \sigma_1}{2} \right )^{\upsilon(t)} \left (  f(w_0)-f(w^*) \right ) +  C \sum_{i=0}^{\infty}\left (1-  \frac{\gamma \mu \sigma_1}{2} \right )^i
\\ \nonumber
& = & \left (1-  \frac{\gamma \mu \sigma_1}{2} \right )^{\upsilon(t)} \left (  f(w_0)-f(w^*) \right ) +   \frac{{\gamma \eta_1 \sigma_2^2\left (  \sigma_1\gamma L^2\eta_1^2  +  \sigma_2 \gamma L^2 \tau_1^2  +  L_{\max}  \right ) G}}{ \mu \sigma_1 b}
\end{eqnarray}
Let $ \frac{{\gamma \eta_1 \sigma_2^2\left (  \sigma_1\gamma L^2\eta_1^2  +  \sigma_2 \gamma L^2 \tau_1^2  +  L_{\max}  \right ) G}}{ \mu \sigma_1 b} \leq \frac{\epsilon}{2}$, we have that $\gamma \leq  \frac{-   L_{\max}   + \sqrt{  L_{\max}^2  + \frac{{2 \mu \sigma_1 b \epsilon  (\sigma_1L^2 \eta_1^2  + \sigma_2\tau_1  L^2 \tau )}}{G\eta_1 \sigma_2^2}} }{2 L^2 (\sigma_1\eta_1^2  + \sigma_2\tau_1^2 )}$.

Let $\left (1-  \frac{\gamma \mu \sigma_1}{2} \right )^{\upsilon(t)} \left (  f(w_0)-f(w^*) \right ) \leq \frac{\epsilon}{2}$, we have that
\begin{eqnarray}\label{EqThm1_5}
\log\left ( \frac{2 \left (  f(w_0)-f(w^*) \right )}{\epsilon} \right ) \leq \upsilon(t) \log \left ( \frac{1}{1-  \frac{\gamma \mu \sigma_1}{2} } \right )
\end{eqnarray}
Because $\log \left ( \frac{1}{\rho} \right ) \geq 1 -\rho$ for $0<\rho \leq 1$, we have that
\begin{eqnarray}\label{EqThm1_5}
\upsilon(t)
&\geq& \frac{2}{\gamma \mu \sigma_1} \log\left ( \frac{2 \left (  f(w_0)-f(w^*) \right )}{\epsilon} \right )
\\ \nonumber
&\geq& \frac{2}{ \mu \sigma_1} \frac{2 L^2 (\sigma_1\eta_1^2  + \sigma_2^2\tau_1^2)}{-   L_{\max}   + \sqrt{  L_{\max}^2  + \frac{{2 \mu \sigma_1 b\epsilon  (\sigma_1L^2 \eta_1^2  + \sigma_2  L^2 \tau_1^2 )}}{G\eta_1 \sigma_2^2}} }  \log\left ( \frac{2 \left (  f(w_0)-f(w^*) \right )}{\epsilon} \right )
\end{eqnarray}
This completes the proof.
\end{proof}

\section*{Appendix B: Proof of Theorem  \ref{thm-svrgconvex}}

Before proving Theorem \ref{thm-svrgconvex}, we first provide  Lemma \ref{AsySCGD+_lemma1} to provides an upper bound to $ \mathbb{E}  \left \|   \widehat{v}^{\ell}_u \right \|^2$.
\begin{lemma} \label{AsySCGD+_lemma1} For  AsySQN-SVRG, under Assumptions \ref{assum1}, \ref{assum2}.1, \ref{assum4} and \ref{assum5},  let $u \in K(t)$,  we have that
\begin{eqnarray}
\label{AsySCGD+_lem3_0_1} && \mathbb{E}  \left \|   \widehat{v}^{\ell}_u \right \|^2
\\ \nonumber &\leq &   \frac{16 L^2}{\mu}\mathbb{E} (f(w^s_t)-f(w^*))  +  \frac{8 L^2}{\mu} \mathbb{E} (f(w^s)-f(w^*))
\\ \nonumber  &  &+ 4 L^2 \gamma^2 \sigma_2^2\eta_1 \sum_{v \in \{t,\ldots,u \}} \mathbb{E} \left  \|   \widehat{v}^{\psi(v)}_v\right \|^2   + 2 \tau_1 L^2 \gamma^2 \sigma_2^2\mathbb{E}  \sum_{u' \in D(u)}   \left \|    \widehat{v}^{\psi(u')}_{u'}  \right  \|^2
\end{eqnarray}
\end{lemma}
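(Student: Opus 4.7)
The plan is to bound $\mathbb{E}\|\widehat{v}^{\ell}_u\|^2$ by splitting the SVRG-style estimator into an asynchronous-delay (``staleness'') component and a standard SVRG variance component evaluated at the unreadable current iterate, and then to convert all ``at $w_u$'' quantities into ``at $w_t$'' quantities using the bounded-delay assumption (Assumption~\ref{assum4}) and the bounded epoch size (Assumption~\ref{assum5}). The natural decomposition is
\begin{align*}
\widehat{v}^{\ell}_u \;=\; \underbrace{\bigl[\nabla_{\ell} f_{\I_u}(\widehat{w}_u)-\nabla_{\ell} f_{\I_u}(w_u)\bigr]}_{A}\;+\;\underbrace{\bigl[\nabla_{\ell} f_{\I_u}(w_u)-\nabla_{\ell} f_{\I_u}(w^s)+\nabla_{\ell} f(w^s)\bigr]}_{B},
\end{align*}
to which I would apply $\|A+B\|^2 \leq 2\|A\|^2+2\|B\|^2$. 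The reason for centering $B$ at $w_u$ rather than $\widehat{w}_u$ is that $B$ is then a ``true'' SVRG corrector at the synchronous iterate, so the usual variance-reduction bound applies cleanly; the entire effect of the asynchrony is quarantined in $A$.

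For $A$, I would use the block-coordinate Lipschitz gradient (Assumption~\ref{assum2}.1/\ref{assum2}.2) combined with standard mini-batch manipulations as in Eq.~(\ref{001}) to get $\mathbb{E}\|A\|^2\leq L^2\mathbb{E}\|\widehat{w}_u-w_u\|^2$. Then by Definition~\ref{defin1}, the bounded overlap $\tau_1$ of Assumption~\ref{assum4}, and the spectral bound $\|H^\ell\|\leq\sigma_2$ from Assumption~\ref{assum3}.2, I would pull the difference apart and invoke Cauchy--Schwarz to obtain $\|\widehat{w}_u-w_u\|^2\leq \gamma^2\sigma_2^2\tau_1\sum_{u'\in D(u)}\|\widehat{v}^{\psi(u')}_{u'}\|^2$, which yields the $2\tau_1 L^2\gamma^2\sigma_2^2$ term. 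For $B$, I would add and subtract $\nabla_\ell f_{\I_u}(w^*)$ (and use $\nabla f(w^*)=0$) to split it into two pieces whose squared norms are each controlled by $L^2\|w_u-w^*\|^2$ and $L^2\|w^s-w^*\|^2$ respectively; here I would use the standard variance trick that $\mathbb{E}\|X-\mathbb{E}X\|^2\leq \mathbb{E}\|X\|^2$ to eliminate the $\nabla_\ell f(w^s)$ debiasing term at the cost of an extra factor of two.

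Next, I would invoke $\mu$-strong convexity (Assumption~\ref{assum1}) in the form $\|w-w^*\|^2\leq \frac{2}{\mu}(f(w)-f(w^*))$ to pass from $\|w^s-w^*\|^2$ directly to the $\frac{8L^2}{\mu}(f(w^s)-f(w^*))$ term. To handle $\|w_u-w^*\|^2$, I would insert $w_t$ via $\|w_u-w^*\|^2\leq 2\|w_t-w^*\|^2+2\|w_u-w_t\|^2$; strong convexity on the first summand produces the $\frac{16L^2}{\mu}(f(w_t)-f(w^*))$ term, and the second summand is a ``short path'' that I would expand as $w_u-w_t=-\gamma\sum_{v\in\{t,\ldots,u-1\}}U_{\psi(v)}H_{\xi(v)}\widehat{v}^{\psi(v)}_v$. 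Using that $|K(t)|\leq \eta_1$ (Assumption~\ref{assum5}), Cauchy--Schwarz, and $\|H^\ell\|\leq\sigma_2$ gives $\|w_u-w_t\|^2\leq \gamma^2\sigma_2^2\eta_1\sum_{v\in\{t,\ldots,u\}}\|\widehat{v}^{\psi(v)}_v\|^2$, producing the remaining $4L^2\gamma^2\sigma_2^2\eta_1$ term. Collecting the four contributions yields the claimed inequality.

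The main obstacle, in my view, is the bookkeeping rather than any single inequality: one must choose the centering point for the SVRG split so that (i) the Lipschitz expansion of $A$ only produces a $\tau_1$-indexed sum (over delayed updates from $D(u)$), (ii) the SVRG part produces only a $\mu$-free-of-$\widehat{w}_u$ bound, and (iii) the subsequent swap $w_u\leftrightarrow w_t$ creates a second, $\eta_1$-indexed sum (over the successive block-visits inside $K(t)$), without producing cross-terms that mix the two. Centering at $w_u$ (not at $\widehat{w}_u$ and not at $w_t$) is what makes the two staleness sums cleanly separate. A secondary subtlety is that the expectation on $B$ must be taken with respect to $\I_u$ conditional on $\widehat{w}_u$ (via Assumption~\ref{assum3}.3 for the later use of $H_k v_k$), which is routine but needs to be invoked explicitly when eliminating the $\nabla_\ell f(w^s)$ correction.
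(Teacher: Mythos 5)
Your proposal is correct and follows essentially the same route as the paper: the paper defines $v^{\ell}_u=\nabla_{\mathcal{G}_\ell} f_{\mathcal{I}_u}(w_u^s)-\nabla_{\mathcal{G}_\ell} f_{\mathcal{I}_u}(w^s)+\nabla_{\mathcal{G}_\ell} f(w^s)$ (your $B$) and splits $\widehat{v}^{\ell}_u=(\widehat{v}^{\ell}_u-v^{\ell}_u)+v^{\ell}_u$, bounding the first piece by $\tau_1 L^2\sigma_2^2\gamma^2\sum_{u'\in D(u)}$ via the delay expansion and the second via the add-and-subtract of $\nabla_{\mathcal{G}_\ell} f_{\mathcal{I}_u}(w^*)$, the variance inequality, Lipschitzness, the $w_u\to w_t$ swap, and strong convexity, exactly as you describe. (Your stated coefficient $4L^2\gamma^2\sigma_2^2\eta_1$ on the $\eta_1$-indexed sum carries the same factor-of-two looseness that is already present in the paper's own final combination, so this is not a defect of your argument relative to theirs.)
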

\begin{proof} Define ${v}^{\ell}_u= \nabla_{\mathcal{G}_\ell} f_{\mathcal{I}_u} ({w}_u^s) - \nabla_{\mathcal{G}_\ell} f_{\mathcal{I}_u} (w^s)
  +  \nabla_{\mathcal{G}_\ell} f(w^s)$. We have that
$
\label{AsySCGD+_lem3_0_2}  \mathbb{E}  \left \|   \widehat{v}^{ \ell }_u \right \|^2 =\mathbb{E}  \left \|   \widehat{v}^{ \ell }_u - {v}^{\ell}_u + {v}^{\ell}_u \right \|^2 \leq 2 \mathbb{E}   \left \|   \widehat{v}^{ \ell }_u - {v}^{\ell}_u \right  \|^2 +   2 \mathbb{E} \left  \| {v}^{\ell}_u \right \|^2
$.
Firstly, we give the upper bound to $\mathbb{E} \left  \| {v}^{\ell}_u \right \|^2$ as follows.
\begin{align}\label{Lemma2_1}
& \mathbb{E}\left\|v_{u}^{\ell}\right\|^{2}
\nonumber \\
=& \mathbb{E}\left\|\nabla_{\mathcal{G}_{\ell}} f_{\mathcal{I}_u}\left(w_{u}^{s}\right)-\nabla_{\mathcal{G}_{\ell}} f_{\mathcal{I}_u}\left(w^{s}\right)+\nabla_{\mathcal{G}_{\ell}} f\left(w^{s}\right)\right\|^{2}
\nonumber \\
=& \mathbb{E}\left\|\nabla_{\mathcal{G}_{\ell}} f_{\mathcal{I}_u}\left(w_{u}^{s}\right)-\nabla_{\mathcal{G}_{\ell}} f_{\mathcal{I}_u}\left(w^{*}\right)-\nabla_{\mathcal{G}_{\ell}} f_{\mathcal{I}_u}\left(w^{s}\right)+\nabla_{\mathcal{G}_{\ell}} f_{\mathcal{I}_u}\left(w^{*}\right)+\nabla_{\mathcal{G}_{\ell}} f\left(w^{s}\right)\right\|^{2}
\nonumber \\
\leq & 2 \mathbb{E}\left\|\nabla_{\mathcal{G}_{\ell}} f_{\mathcal{I}_u}\left(w_{u}^{s}\right)-\nabla_{\mathcal{G}_{\ell}} f_{\mathcal{I}_u}\left(w^{*}\right)\right\|^{2}+2 \mathbb{E}\left\|\nabla_{\mathcal{G}_{\ell}} f_{\mathcal{I}_u}\left(w^{s}\right)-\nabla_{\mathcal{G}_{\ell}} f_{\mathcal{I}_u}\left(w^{*}\right)-\nabla_{\mathcal{G}_{\ell}} f\left(w^{s}\right)+\nabla_{\mathcal{G}_{\ell}} f\left(w^{*}\right)\right\|^{2}
\nonumber \\
\overset{(i)}\leq & 2 \mathbb{E}\left\|\nabla_{\mathcal{G}_{\ell}} f_{\mathcal{I}_u}\left(w_{u}^{s}\right)-\nabla_{\mathcal{G}_{\ell}} f_{\mathcal{I}_u}\left(w^{*}\right)\right\|^{2}+2 \mathbb{E}\left\|\nabla_{\mathcal{G}_{\ell}} f_{\mathcal{I}_u}\left(w^{s}\right)-\nabla_{\mathcal{G}_{\ell}} f_{\mathcal{I}_u}\left(w^{*}\right)\right\|^{2}
\nonumber \\
= & 2 \mathbb{E}\|\frac{1}{|\mathcal{I}_u|}\sum_{i_u\in \mathcal{I}_u} (\nabla_{\mathcal{G}_{\ell}} f_{i_u}\left(w_{u}^{s}\right)-\nabla_{\mathcal{G}_{\ell}} f_{i_u}\left(w^{*}\right))\|^{2}
+2 \mathbb{E}\|\frac{1}{|\mathcal{I}_u|}\sum_{i_u\in \mathcal{I}_u}(\nabla_{\mathcal{G}_{\ell}} f_{i_u}\left(w^{s}\right)-\nabla_{\mathcal{G}_{\ell}} f_{i_u}\left(w^{*}\right))\|^{2}
\nonumber \\
\overset{(ii)}\leq & 2 L^{2} \mathbb{E}\left\|w_{u}^{s}-w^{*}\right\|^{2}+2 L^{2} \mathbb{E}\left\|w^{s}-w^{*}\right\|^{2}
\nonumber \\
= & 2 L^{2} \mathbb{E}\left\|w_{u}^{s}-w_t^s + w_t^s-w^{*}\right\|^{2}+2 L^{2} \mathbb{E}\left\|w^{s}-w^{*}\right\|^{2}
\nonumber \\
\leq & 4L^2 \mathbb{E}\|w_{u}^{s}-w_t^s\|^2 + 4L^2\mathbb{E}\| w_t^s-w^{*}\|^{2}+2 L^{2} \mathbb{E}\left\|w^{s}-w^{*}\right\|^{2}
\nonumber \\
\overset{(iii)}=& 4L^2\gamma^2 \mathbb{E}\|\sum_{v\in\{t,\ldots,u\}}U_{\psi(u)}H_{\xi(v)}\widehat{v}_v^{\psi(v)}\|^2 + 4L^2\mathbb{E}\| w_t^s-w^{*}\|^{2}+2 L^{2} \mathbb{E}\left\|w^{s}-w^{*}\right\|^{2}
\nonumber \\
\overset{(iv)}\leq & \frac{8L^2}{\mu} \mathbb{E}(f(w_t^s)-f(w^*)) + \frac{4L^2}{\mu}\mathbb{E}(f(w^s)-f(w^*)) + 4L^2\gamma^2\sigma_2^2\eta_1\sum_{v\in\{t,\ldots,u\}}\mathbb{E}\|\widehat{v}_v^{\psi(v)}\|^2
\end{align}
where the (i) follows from $\mathbb{E} \left \| x- \mathbb{E} x\right \|^2 \leq \mathbb{E} \left \| x\right \|^2$, (ii)  follows from the proof of Eq.~(\ref{001}), (iii) uses Definition 2, and the inequality (iv) uses Assumption \ref{assum1}.

Next, we give the upper bound to $\mathbb{E}   \left \|   \widehat{v}^{ \ell }_u - {v}^{\ell}_u \right  \|^2$ as follows.
\begin{eqnarray}\label{Lemma2_2}
  \mathbb{E}   \left \|   \widehat{v}^{ \ell }_u - {v}^{\ell}_u \right  \|^2
 &=&\mathbb{E}   \left \| \frac{1}{|{\mathcal{I}_u}|} \sum_{i_u\in {\mathcal{I}_u}}(\nabla_{\mathcal{G}_\ell} f_{i_u} (\widehat{w}_u^s)  - \nabla_{\mathcal{G}_\ell} f_{i_u} ({w}_u^s)) \right  \|^2
 \nonumber \\
 &\overset{i}\leq& \frac{1}{|{\mathcal{I}_u}|} \sum_{i_u\in {\mathcal{I}_u}} \mathbb{E}   \left \| (\nabla_{\mathcal{G}_\ell} f_{i_u} (\widehat{w}_u^s)  - \nabla_{\mathcal{G}_\ell} f_{i_u} ({w}_u^s)) \right  \|^2
\\ \nonumber
&\leq& L^2 \mathbb{E}   \left \| \widehat{w}_u^s  - {w}_u^s \right  \|^2
\\ \nonumber
&\overset{ii}=& L^2 \gamma^2 \mathbb{E}   \left \|  \sum_{u' \in D(u)}    \textbf{U}_{\psi(u')} H_{u'}\widehat{v}^{\psi(u')}_{u'}  \right  \|^2
\\ \nonumber
&\overset{iii}\leq & \tau_1 L^2\sigma_2^2 \gamma^2 \mathbb{E}  \sum_{u' \in D(u)}   \left \|    \widehat{v}^{\psi(u')}_{u'}  \right  \|^2,
\end{eqnarray}
where (i) follows from $\|\sum_{i=1}^{n}a_i\|^2 \leq n\sum_{i=1}^{n}\mathbb{E}\|a_i\|^2$, (ii) uses Definition 2 and (iii) follows from Assumption \ref{assum4}.
Combining  (\ref{Lemma2_1}) and (\ref{Lemma2_2}), we have that
\begin{eqnarray}\label{Lemma2_3}
 && \mathbb{E}  \left \|   \widehat{v}^{ \ell }_u \right \|^2
\\ \nonumber  &\leq &    2 \mathbb{E}   \left \|   \widehat{v}^{ \ell }_u - {v}^{\ell}_u \right  \|^2 +   2 \mathbb{E} \left  \| {v}^{\ell}_u \right \|^2
\\ \nonumber &\leq &   \frac{16 L^2}{\mu}\mathbb{E} (f(w^s_t)-f(w^*))  +  \frac{8 L^2}{\mu} \mathbb{E} (f(w^s)-f(w^*))
\\ \nonumber  &  &+ 4 L^2 \gamma^2 \sigma_2^2\eta_1 \sum_{v \in \{t,\ldots,u \}} \mathbb{E} \left  \|   \widehat{v}^{\psi(v)}_v\right \|^2   + 2 \tau_1 L^2 \gamma^2 \sigma_2^2\mathbb{E}  \sum_{u' \in D(u)}   \left \|    \widehat{v}^{\psi(u')}_{u'}  \right  \|^2
\end{eqnarray}
This completes the proof.
\end{proof}
Based on the basic inequalities in Lemma \ref{AsySPSAGA_lemma3}, we provide the proof of Theorem \ref{thm-svrgconvex} in the following.

\begin{proof}{\textbf{of Theorem \ref{thm-svrgconvex}:}}
Similar to (\ref{EqThm1_1}),  for $u \in K(t)$ at $s$-th outer loop, we have that
\begin{eqnarray}\label{EqThm2_1}
&& \mathbb{E} f (w_{u+1}^s)
\\ \nonumber
&\stackrel{ (a) }{\leq}&  \mathbb{E} \left ( f (w_{u}^s) + \langle \nabla f(w_{u}^s), w_{u+1}^s-w_{u}^s  \rangle + \frac{L_{\psi(u)}}{2} \|w_{u+1}^s-w_{u}^s   \|^2  \right )
\\ \nonumber
&=&  \mathbb{E} \left ( f (w_{u}^s) -  \gamma \langle \nabla f(w_{u}^s),  H_{\xi(u)}\widehat{v}^{\psi(u)}_u  \rangle + \frac{L_{\psi(u)} \gamma^2}{2} \| H_{\xi(u)}\widehat{v}^{\psi(u)}_u  \|^2  \right )
\\ \nonumber
&\stackrel{ (b) }{=}&  \mathbb{E}f (w_{u}^s) -  \gamma \mathbb{E} \langle \nabla f(w_{u}^s),  H_{\xi(u)} \nabla_{\mathcal{G}_{\psi(u)}} f_{\mathcal{I}_u}(\widehat{w}_{u}^s)  \rangle + \frac{L_{\psi(u)} \gamma^2\|H_{\xi(u)}\|^2}{2} \mathbb{E} \|  \widehat{v}^{\psi(u)}_u  \|^2
\\ \nonumber
&=&  \mathbb{E} f (w_{u}^s) -  \gamma \mathbb{E} \langle \nabla f(w_{u}^s), H_{\xi(u)}\nabla_{\mathcal{G}_{\psi(u)}} f_{\mathcal{I}_u}(\widehat{w}_{u}^s)- H_{\xi(u)}\nabla_{\mathcal{G}_{\psi(u)}} f_{\mathcal{I}_u} ({w}_u^s)
+ H_{\xi(u)}\nabla_{\mathcal{G}_{\psi(u)}} f_{\mathcal{I}_u} ({w}_u^s) \rangle
\\ \nonumber
 && + \frac{L_{\psi(u)} \gamma^2\|H_{\xi(u)}\|^2}{2} \mathbb{E} \|  \widehat{v}^{\psi(u)}_u  \|^2
\\ \nonumber
&=&  \mathbb{E}  f (w_{u}^s) -  \gamma \mathbb{E} \langle \nabla f(w_{u}^s),  H_{\xi(u)}\nabla_{\mathcal{G}_{\psi(u)}} f ({w}_u^s) \rangle  + \gamma \mathbb{E} \langle \nabla f(w_{u}^s),   H_{\xi(u)}( \nabla_{\mathcal{G}_{\psi(u)}} f_{\mathcal{I}_u} ({w}_u^s) - \nabla_{\mathcal{G}_{\psi(u)}} f_{\mathcal{I}_u} (\widehat{w}_u^s) ) \rangle
 \\ \nonumber
 && + \frac{L_{\psi(u)} \gamma^2\|H_{\xi(u)}\|^2}{2} \mathbb{E} \|  \widehat{v}^{\psi(u)}_u  \|^2
 \\ \nonumber
 &\stackrel{ (c) }{\leq}&  \mathbb{E}  f (w_{u}^s) -  \frac{\gamma\sigma_1}{2} \mathbb{E} \|  \nabla_{\mathcal{G}_{\psi(u)}} f ({w}_u^s) \|^2  +  \frac{\tau_1 \sigma_2^3 L^2 \gamma^3}{2}  \sum_{u' \in D(u)} \mathbb{E} \|   \widehat{v}^{\psi(u')}_{u'} \|^2
 + \frac{L_{\psi(u)} \gamma^2\sigma_2^2}{2} \mathbb{E} \|  \widehat{v}^{\psi(u)}_u  \|^2
 \end{eqnarray}
where the  inequalities (a)  use Assumption \ref{assum2}.1, the equality (b) uses Assumption \ref{assum3}.2, the inequality (c) follows the proof in (\ref{EqThm1_1}).

Summing  (\ref{EqThm2_1}) over all $ u \in K(t)$, we obtain
\begin{eqnarray}\label{EqThm2_2}
&& \mathbb{E} f (w_{t+|K(t)|}^s) - \mathbb{E}f (w_{t}^s)
\\ \nonumber
&\leq&   -  \frac{\gamma\sigma_1}{2} \sum_{u \in K(t)} \mathbb{E} \|  \nabla_{\mathcal{G}_{\psi(u)}} f ({w}_u^s) \|^2  +  \frac{\tau_1 \sigma_2^3 L^2 \gamma^3}{2}  \sum_{u \in K(t)}\sum_{u' \in D(u)} \mathbb{E} \|   \widehat{v}^{\psi(u')}_{u'} \|^2
\\ \nonumber
&& + \frac{L_{\max} \sigma_2^2\gamma^2}{2} \sum_{u \in K(t)}\mathbb{E} \|  \widehat{v}^{\psi(u)}_u  \|^2
\\ \nonumber
&\stackrel{ (a) }{\leq} &   -  \frac{\gamma\sigma_1}{2} \left ( \frac{1}{2} \sum_{u\in K(t)} \| \nabla_{\mathcal{G}_{\psi(u)}} f(w_t^s) \|^2 - \eta_1 \gamma^2 L^2 \sigma_2^2 \sum_{u\in K(t)}   \sum_{v \in \{t,\ldots,u \}}\|\widehat{v}^{\psi(v)}_v \|^2 \right )
\\ \nonumber
&&  +  \frac{\tau_1 \sigma_2^3 L^2 \gamma^3}{2}  \sum_{u \in K(t)}\sum_{u' \in D(u)} \mathbb{E} \|   \widehat{v}^{\psi(u')}_{u'} \|^2
 + \frac{L_{\max} \sigma_2^2\gamma^2}{2} \sum_{u \in K(t)}\mathbb{E} \|  \widehat{v}^{\psi(u)}_u  \|^2
 \\ \nonumber
 & = &   -  \frac{\gamma\sigma_1}{4} \sum_{u\in K(t)} \| \nabla_{\mathcal{G}_{\psi(u)}} f(w_t^s) \|^2 + \frac{\eta_1 \sigma_1\sigma_2^2 \gamma^3 L^2 }{2} \sum_{u\in K(t)}   \sum_{v \in \{t,\ldots,u \}}\|\widehat{v}^{\psi(v)}_v \|^2
\\ \nonumber
&&  +  \frac{\tau_1 \sigma_2^3 L^2 \gamma^3}{2}  \sum_{u \in K(t)}\sum_{u' \in D(u)} \mathbb{E} \|   \widehat{v}^{\psi(u')}_{u'} \|^2
 + \frac{L_{\max} \sigma_2^2\gamma^2}{2} \sum_{u \in K(t)}\mathbb{E} \|  \widehat{v}^{\psi(u)}_u  \|^2
 \\ \nonumber
 & \stackrel{ (b) }{\leq} &   -  \frac{\gamma\sigma_1}{4} \| \nabla f(w_t^s) \|^2 +  \frac{\tau_1 \sigma_2^3 L^2 \gamma^3}{2}  \sum_{u \in K(t)}\sum_{u' \in D(u)} \mathbb{E} \|   \widehat{v}^{\psi(u')}_{u'} \|^2
\\ \nonumber
&&   + \left ( \frac{\eta_1\sigma_1\sigma_2^2 \gamma^3 L^2 \eta_1 }{2}+ \frac{L_{\max} \sigma_2^2\gamma^2}{2} \right ) \sum_{u \in K(t)}\mathbb{E} \|  \widehat{v}^{\psi(u)}_u  \|^2
\\ \nonumber
& \stackrel{ (c) }{\leq} &   -  \frac{\gamma \mu\sigma_1}{2} \mathbb{E} (f(w^s_t)-f(w^*))
+  \frac{\tau_1 \sigma_2^3 L^2 \gamma^3}{2}  \sum_{u \in K(t)}\sum_{u' \in D(u)} \mathbb{E} \|   \widehat{v}^{\psi(u')}_{u'} \|^2 + \underbrace{\left (\sigma_1\gamma L^2 \eta_1^2 + L_{\max}  \right )\frac{\gamma^2\sigma_2^2}{2}}_{C}
\\ \nonumber
&&   \cdot \sum_{u \in K(t)}\left (   \frac{16 L^2}{\mu}\mathbb{E} (f(w^s_t)-f(w^*))  +  \frac{8 L^2}{\mu} \mathbb{E} (f(w^s)-f(w^*)) \right .
\\ \nonumber
&  & \left . + 4 L^2 \gamma^2 \sigma_2^2\eta_1 \sum_{v \in \{t,\ldots,u \}} \mathbb{E} \left  \|   \widehat{v}^{\psi(v)}_v\right \|^2   + 2 \tau_1 L^2 \gamma^2 \sigma_2^2\mathbb{E}  \sum_{u' \in D(u)}   \left \|    \widehat{v}^{\psi(u')}_{u'}  \right  \|^2  \right )
\end{eqnarray}
where the  inequality (a) uses Lemma \ref{AsySPSAGA_lemma3}, the  inequality (b) uses Assumption  \ref{assum5}, the inequality (c) uses Lemma \ref{AsySCGD+_lemma1}.

Let $e^s_t=\mathbb{E} (f(w^s_t)-f(w^*))$ and $e^s=\mathbb{E} (f(w^s)-f(w^*))$, we have
\begin{eqnarray}\label{EqThm2_3}
&& e_{t+|K(t)|}^s
\\ \nonumber
& \stackrel{ (a) }{\leq}  & \left ( 1 - \frac{\gamma \mu\sigma_1}{2} +  \frac{16 L^2 \eta_1  C}{\mu}  \right ) e^s_t +  \frac{8 L^2 \eta_1  C}{\mu} e^s + 4 C    L^2 \gamma^2 \sigma_2^2\eta_1  \sum_{u \in K(t)}   \sum_{v \in \{t,\ldots,u \}} \mathbb{E} \left  \|   \widehat{v}^{\psi(v)}_v\right \|^2
\\ \nonumber
&&  +  \left ( \frac{\tau_1 \sigma_2^3 L^2 \gamma^3}{2} + 2 \tau_1 L^2 \gamma^2 \sigma_2^2C \right ) \sum_{u \in K(t)}\sum_{u' \in D(u)} \mathbb{E} \|   \widehat{v}^{\psi(u')}_{u'} \|^2
\\ \nonumber
& \stackrel{ (b) }{\leq}  & \left ( 1 - \frac{\gamma \mu\sigma_1}{2} +  \frac{16 L^2 \eta_1 C}{\mu}  \right ) e^s_t +  \frac{8 L^2 \eta_1 C}{\mu} e^s
\\ \nonumber
&&  + \gamma^3 \left (  \left ( \frac{  \sigma_2 }{2} +     \frac{2C}{\gamma} \right )  \tau_1^2   + 4 \frac{C}{\gamma}     \eta_1^2  \right ) \eta_1  L^2 \sigma_2^2 \frac{9G}{b}
\end{eqnarray}
where the   inequality (a) uses (\ref{EqThm2_2}), and the inequality (b) uses Assumptions \ref{assum4}, \ref{assum5} and follows the proof of Eq.~\ref{lemma00}.
We carefully choose $\gamma$ such that $1> \frac{\gamma \mu}{2} -  \frac{16 L^2 \eta_1  C}{\mu}\stackrel{\rm def}{=}\rho>0$. Assume that $\cup_{\kappa \in P(t)}=\{0,1,\ldots,t\}$, applying  (\ref{EqThm2_3}), we have that
\begin{eqnarray}\label{EqThm2_4}
&& e_{t}^s
\\ \nonumber
& \leq & \left ( 1 - \rho \right )^{\upsilon(t)}  e^s  +   \left ( \frac{8 L^2 \eta_1 C}{\mu} e^s
 + \gamma^3 \left (  \left ( \frac{ \sigma_2 }{2} +     \frac{2C}{\gamma} \right )  \tau_1^2   + 4 \frac{C}{\gamma}     \eta_1^2  \right ) \eta_1 q  L^2 \sigma_2^2 \frac{9G}{b} \right ) \sum_{i=0}^{{\upsilon(t)}}\left (1-  \rho \right )^i
\\ \nonumber
& \leq & \left ( 1 - \rho \right )^{\upsilon(t)}  e^s  +   \left ( \frac{8 L^2 \eta_1 C}{\mu} e^s
 + \gamma^3 \left (  \left ( \frac{ \sigma_2   }{2} +     \frac{2C}{\gamma} \right )   \tau_1^2    + 4 \frac{C}{\gamma}     \eta_1^2  \right ) \eta_1 q  L^2 \sigma_2^2 \frac{9G}{b} \right ) \frac{1}{\rho}
 \\ \nonumber
 & = & \left ( \left ( 1 - \rho \right )^{\upsilon(t)} + \frac{8 L^2 \eta_1 C}{\rho \mu} \right ) e^s  +
 \gamma^3 \left (  \left ( \frac{ \sigma_2  }{2} +     \frac{2C}{\gamma} \right )  \tau_1^2   + 4 \frac{C}{\gamma}     \eta_1^2  \right )   \frac{9\eta_1  L^2 \sigma_2^2 G}{b\rho}
\end{eqnarray}
Thus, to achieve the accuracy $\epsilon$ of Problem~(\ref{P}) for AsySQN-SVRG, \emph{i.e.}, $\mathbb{E} f (w_{S}) -f(w^*) \leq \epsilon$,  we can carefully choose $\gamma$ such that
\begin{eqnarray}
\frac{8 L^2 \eta_1 C}{\rho \mu} &\leq& 0.5
\\  \gamma^3 \left (  \left ( \frac{ \sigma_2  }{2} +     \frac{2C}{\gamma} \right )   \tau_1^2   + 4 \frac{C}{\gamma}     \eta_1^2  \right )   \frac{9\eta_1  L^2 \sigma_2^2 G}{b\rho} &\leq& \frac{\epsilon}{8}
\end{eqnarray}
and let $\left ( 1 - \rho \right )^{\upsilon(t)} \leq 0.25$, i.e., $\upsilon(t) \geq \frac{\log 0.25}{\log (1 - \rho)}$, we have that
\begin{eqnarray}\label{EqThm2_5}
e^{s+1} \leq 0.75 e^{s} + \frac{\epsilon}{8}
\end{eqnarray}
Recursively apply (\ref{EqThm2_5}), we have that
\begin{eqnarray}
e^{S} \leq (0.75)^S e^{0}+ \frac{\epsilon}{2}
\end{eqnarray}
Finally, the outer loop  number $S$ should satisfy the  condition of $S \geq \frac{\log \frac{2 e^0}{\epsilon }}{\log \frac{4}{3}} $.
This completes the proof.
\end{proof}

\section*{Appendix C: Proof of Theorem  \ref{thm-sagaconvex}}
Before proving Theorem \ref{thm-sagaconvex}, we first provide  Lemma \ref{lemma3} to provides an upper bound to $ \mathbb{E}  \left \|   \widehat{v}^{\ell}_u \right \|^2$.
\begin{lemma} \label{AsySGHT_lemma2} For AsySQN-SAGA,  we have that
\begin{align}\label{lem4_0}
 & \mathbb{E} \left \|  \alpha_{\I_u}^{u,\ell} -  \nabla_{\mathcal{G}_\ell} f_{\I_u}(w^*) \right \|^2
   \\ \nonumber
   &\leq
\frac{L^2}{nb} \sum_{u'=1}^{\xi(u,\ell)-1} \left ( 1 -\frac{1}{n} \right )^{\xi(u,\ell)-u'-1} \sigma(w_{{\xi^{-1}(u',\ell)}})
 +\frac{ L^2 }{b}\left ( 1 -\frac{1}{n} \right )^{\xi(u,\ell)} \sigma(w_0)
\\  &
\mathbb{E}\left \|  {\alpha}_{\I_u}^{u,\ell} - \widehat{\alpha}_{\I_u}^{u,\ell} \right \|^2
\\ \nonumber
& \leq \frac{\tau_1 L^2 \sigma_2^2 \gamma^2}{nb} \sum_{u'=1}^{\xi(u,\ell)-1}  \sum_{\widetilde{u} \in D(\xi^{-1}(u',\ell))} \left ( 1 -\frac{1}{n} \right )^{\xi(u,\ell)-u'-1}
\mathbb{E}  \left \|       \widehat{v}^{\psi(\widetilde{u})}_{\widetilde{u}} \right \|^2
\end{align}
where $\sigma(w_u) = \mathbb{E} \| w_u - w^* \|^2$.
\end{lemma}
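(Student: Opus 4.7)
My plan is to treat the two inequalities separately: the first follows from the SAGA book-keeping for the gradient table, and the second follows from the asynchronous-delay calculus already used in the proof of Lemma~\ref{AsySCGD+_lemma1}, equation~(\ref{Lemma2_2}).

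For the first bound on $\mathbb{E}\|\alpha_{\I_u}^{u,\ell}-\nabla_{\mathcal{G}_\ell}f_{\I_u}(w^*)\|^2$, I would first reduce the batch quantity to a per-sample quantity by writing $\alpha_{\I_u}^{u,\ell}-\nabla_{\mathcal{G}_\ell}f_{\I_u}(w^*)=\frac{1}{b}\sum_{i\in\I_u}\bigl(\alpha_i^{u,\ell}-\nabla_\ell f_i(w^*)\bigr)$ and, using that the mini-batch $\I_u$ of size $b$ is drawn uniformly from $\{1,\ldots,n\}$ independently of the table, apply the standard inequality $\mathbb{E}\|\frac{1}{b}\sum_{i\in\I_u}X_i\|^2\leq\frac{1}{bn}\sum_{i=1}^n\mathbb{E}\|X_i\|^2$ with $X_i=\alpha_i^{u,\ell}-\nabla_\ell f_i(w^*)$. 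Next, for a fixed $i$, I would identify the random last-touch time $\tau_i$ of the $\alpha$-table on party $\ell$ before local iteration $\xi(u,\ell)$. Since each iteration refreshes sample $i$ independently with probability $1/n$, one has $\Pr[\tau_i=u']=\frac{1}{n}(1-\frac{1}{n})^{\xi(u,\ell)-u'-1}$ for $u'\in\{1,\ldots,\xi(u,\ell)-1\}$, and $\Pr[\tau_i=0]=(1-\frac{1}{n})^{\xi(u,\ell)}$. Because the stored value at the last touch equals $\nabla_\ell f_i(w_{\xi^{-1}(u',\ell)})$, Assumption~\ref{assum2}.1 gives $\|\alpha_i^{u,\ell}-\nabla_\ell f_i(w^*)\|^2\leq L^2\|w_{\xi^{-1}(u',\ell)}-w^*\|^2$. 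Combining these with the definition $\sigma(w)=\mathbb{E}\|w-w^*\|^2$ and summing over $u'$ produces exactly the claimed bound, with the $\frac{1}{n}$ arising from the sampling probability and the $\frac{1}{b}$ from the batch averaging.

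For the second bound on $\mathbb{E}\|\alpha_{\I_u}^{u,\ell}-\widehat{\alpha}_{\I_u}^{u,\ell}\|^2$, the source of the discrepancy is that whenever the table entry $\widehat{\alpha}_i^\ell$ was last refreshed (at some local iteration $u'$ on party $\ell$), the refresh used the \emph{delayed} read $\widehat{w}_{\xi^{-1}(u',\ell)}$ rather than the synchronous $w_{\xi^{-1}(u',\ell)}$. Mirroring the per-sample reduction of the first part, I would bound $\|\alpha_i^{u,\ell}-\widehat{\alpha}_i^{u,\ell}\|^2\leq L^2\|w_{\xi^{-1}(\tau_i,\ell)}-\widehat{w}_{\xi^{-1}(\tau_i,\ell)}\|^2$ by Lipschitz continuity, then invoke Definition~\ref{defin1} together with Assumptions~\ref{assum4} and \ref{assum3}.2 to obtain, exactly as in (\ref{Lemma2_2}), $\|w_{\xi^{-1}(u',\ell)}-\widehat{w}_{\xi^{-1}(u',\ell)}\|^2\leq \tau_1\sigma_2^2\gamma^2\sum_{\widetilde u\in D(\xi^{-1}(u',\ell))}\|\widehat v_{\widetilde u}^{\psi(\widetilde u)}\|^2$. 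Multiplying by the last-touch probability $\frac{1}{n}(1-\frac{1}{n})^{\xi(u,\ell)-u'-1}$, summing over $u'$ and over $i\in\I_u$, and dividing by $b$ reproduces the second inequality.

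\textbf{Main obstacle.} The principal care point is to keep two distinct delay sources separate while indexing them consistently: the SAGA-type delay that records when the table entry for sample $i$ was last refreshed, and the asynchronous-read delay that was present \emph{at the moment of that refresh}. Both are random, and both must be connected through the map $\xi^{-1}(u',\ell)$ and the overlap set $D(\xi^{-1}(u',\ell))$ exactly as in the lemma statement; once the bookkeeping is pinned down, the per-term bounds are immediate from Lipschitz gradient, the spectral bound on $H_k$, and the bounded-overlap assumption already used in the earlier proofs.
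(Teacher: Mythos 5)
Your proposal matches the paper's own proof in all essentials: the same reduction from the mini-batch quantity to per-sample terms via the with-replacement sampling identity (yielding the $\tfrac{1}{nb}$ factor), the same decomposition over the last-touch time of the gradient table with the geometric survival probability $\tfrac{1}{n}(1-\tfrac{1}{n})^{\xi(u,\ell)-u'-1}$ (and the separate $u'=0$ case, which vanishes in the second bound since $\alpha_i^{0,\ell}=\widehat{\alpha}_i^{0,\ell}$), the Lipschitz-gradient bound relating table entries to iterates, and the bounded-overlap plus $\|H\|\leq\sigma_2$ calculus of (\ref{Lemma2_2}) for the stale-read discrepancy. The only cosmetic difference is that you state the last-touch distribution as an exact equality whereas the paper derives it as an upper bound via independence of the sampling events, but this does not change the argument.
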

\begin{proof} Firstly, we have that
\begin{align}\label{lem4_1}
& \mathbb{E} \left \| \alpha_{\mathcal{I}_u}^{u,\ell} -  \nabla_{\mathcal{G}_\ell} f_{\mathcal{I}_u}(w^*)  \right \|^2
\\ \nonumber
=& \mathbb{E}\left \|\frac{1}{|\I_u|} \sum_{i_u\in \I_u} (\alpha_{i_u}^{u,\ell} -  \nabla_{\mathcal{G}_\ell} f_{i_u}(w^*))  \right \|^2
\\ \nonumber
\overset{(i)}=& \frac{1}{|\I_u|^2}\sum_{i_u\in \I_u}\mathbb{E} \left \| (\alpha_{i_u}^{u,\ell} -  \nabla_{\mathcal{G}_\ell} f_{i_u}(w^*))  \right \|^2
\\ \nonumber
=&  \frac{1}{n|\I_u|^2} \sum_{ |\I_u|}\sum_{i=1}^n \mathbb{E} \left \|  \alpha_{i}^{u,\ell} -  \nabla_{\mathcal{G}_\ell} f_{i}(w^*) \right \|^2
\\  \nonumber
 \overset{(ii)}= &\frac{1}{nb} \sum_{i=1}^n   \mathbb{E} \sum_{u'=0}^{\xi(u,\ell) -1} \mathbf{1}_{ \{ \textbf{u}_{i}^u =u' \}} \left \|   \nabla_{\mathcal{G}_\ell} f_i(w_{\xi^{-1}(u',\ell)}) -  \nabla_{\mathcal{G}_\ell} f_i(w^*) \right \|^2
\\   \nonumber
= &\frac{1}{nb} \sum_{u'=0}^{\xi(u,\ell)-1} \sum_{i=1}^n  \mathbb{E}  \mathbf{1}_{ \{ \textbf{u}_{i}^u =u' \}} \left \|  \nabla_{\mathcal{G}_\ell} f_i(w_{\xi^{-1}(u',\ell)}) -  \nabla_{\mathcal{G}_\ell} f_i(w^*) \right \|^2
\end{align}
where  $\textbf{u}_{i}^u$ denote the last iterate  to update the $\widehat{\alpha}_i^{u,\ell}$, (i) follows from that $\i_u\in\I_u$ are sampled with replacement, (ii) follows from $|\I|=b$. We consider the two cases $u'>0$ and $u'=0$ as following.

For $u'>0$,  we have that
\begin{align}\label{lem4_2}
& \mathbb{E} \left ( \mathbf{1}_{ \{ \textbf{u}_{i}^u =u' \}} \left \|   \nabla_{\mathcal{G}_\ell} f_i(w_{{\xi^{-1}(u',\ell)}}) -  \nabla_{\mathcal{G}_\ell} f_i(w^*) \right \|^2 \right )
\\    \stackrel{ (a) }{\leq}  & \nonumber \mathbb{E} \left ( \mathbf{1}_{ \{ i_{u'} = i \}} \mathbf{1}_{ \{ i_v \neq i, \forall v \ s.t. \ u'+1 \leq v \leq \xi(u,\ell) -1 \}}  \right . \left .
 \left \|   \nabla_{\mathcal{G}_\ell} f_i(w_{{\xi^{-1}(u',\ell)}}) -  \nabla_{\mathcal{G}_\ell} f_i(w^*) \right \|^2 \right )
\\    \stackrel{ (b) }{\leq}  & \nonumber  P{ \{ i_{u'} = i \}}  P { \{ i_v \neq i, \forall v \ s.t. \ u'+1 \leq v \leq \xi(u,\ell) -1 \}} \cdot
 \mathbb{E}   \left \|   \nabla_{\mathcal{G}_\ell} f_i(w_{{\xi^{-1}(u',\ell)}}) -  \nabla_{\mathcal{G}_\ell} f_i(w^*) \right \|^2
\\    \stackrel{ (c) }{\leq}  & \nonumber \frac{1}{n} \left ( 1 -\frac{1}{n} \right )^{\xi(u,\ell)-u'-1}  \mathbb{E}  \left \|  \nabla_{\mathcal{G}_\ell} f_i(w_{{\xi^{-1}(u',\ell)}}) -  \nabla_{\mathcal{G}_\ell} f_i(w^*) \right \|^2
\end{align}
where the inequality (a) uses the fact $i_{u'}$ and $i_v$ are independent for $v \neq u'$, the inequality (b) uses the fact that $P{ \{ i_u = i \}} = \frac{1}{n}$ and $P { \{ i_v \neq i\} } =1-(1-\frac{1}{n})^{b}\leq 1-\frac{1}{n}$.

For $u'=0$, we have that
\begin{align}\label{lem4_3}
& \mathbb{E} \left ( \mathbf{1}_{ \{ \textbf{u}_{i}^u =0 \}}\left \|   \nabla_{\mathcal{G}_\ell} f_i(w_{0}) -  \nabla_{\mathcal{G}_\ell} f_i(w^*) \right \|^2   \right )
\\    \leq & \nonumber \mathbb{E} \left ( \mathbf{1}_{ \{ i_v \neq i, \forall v \ s.t. \ 0 \leq v \leq \xi(u,\ell)-1 \}}
 \left \|   \nabla_{\mathcal{G}_\ell} f_i(w_{0}) -  \nabla_{\mathcal{G}_\ell} f_i(w^*) \right \|^2 \right )
\\    \leq & \nonumber   P { \{ i_v \neq i, \forall v \ s.t. \ 0 \leq v \leq \xi(u,\ell) -1 \}}
  \mathbb{E}  \left \|   \nabla_{\mathcal{G}_\ell} f_i(w_{0}) -  \nabla_{\mathcal{G}_\ell} f_i(w^*) \right \|^2
\\    \leq & \nonumber  \left ( 1 -\frac{1}{n} \right )^{\xi(u,\ell)}  \mathbb{E}  \left \|   \nabla_{\mathcal{G}_\ell} f_i(w_{0}) -  \nabla_{\mathcal{G}_\ell} f_i(w^*) \right \|^2
\end{align}

Substituting (\ref{lem4_2}) and (\ref{lem4_3}) into (\ref{lem4_1}), we have that
\begin{align}\label{lem4_4}
& \mathbb{E} \left \| \alpha_{\I_u}^{u,\ell} -  \nabla_{\mathcal{G}_\ell} f_{\I_u}(w^*)  \right \|^2
\\    = & \nonumber
\frac{1}{nb} \sum_{u'=0}^{\xi(u,\ell) -1} \sum_{i=1}^n  \mathbb{E}  \mathbf{1}_{ \{ \textbf{u}_{i}^u =u' \}} \left \|  \nabla_{\mathcal{G}_\ell} f_i(w_{{\xi^{-1}(u',\ell)}}) -  \nabla_{\mathcal{G}_\ell} f_i(w^*) \right \|^2
\\   \stackrel{ (a) }{\leq} & \nonumber
\frac{1}{nb} \sum_{u'=1}^{\xi(u,\ell)-1} \sum_{i=1}^n \frac{1}{n} \left ( 1 -\frac{1}{n} \right )^{\xi(u,\ell)-u'-1} \mathbb{E}  \left \|  \nabla_{\mathcal{G}_\ell} f_i(w_{{\xi^{-1}(u',\ell)}}) -  \nabla_{\mathcal{G}_\ell} f_i(w^*) \right \|^2
\\ \nonumber &
+  \frac{1}{nb} \sum_{u'=1}^{\xi(u,\ell)-1}   \sum_{i=1}^n   \left ( 1 -\frac{1}{n} \right )^{\xi(u,\ell)-1}  \mathbb{E}  \left \|  \nabla_{\mathcal{G}_\ell} f_i(w_{0}) -  \nabla_{\mathcal{G}_\ell} f_i(w^*) \right \|^2
\\    \stackrel{ (b) }{\leq} & \nonumber
\frac{L^2}{nb} \sum_{u'=1}^{\xi(u,\ell)-1} \left ( 1 -\frac{1}{n} \right )^{\xi(u,\ell)-u'-1} \sigma(w_{{\xi^{-1}(u',\ell)}})
 +\frac{ L^2 }{b}\left ( 1 -\frac{1}{n} \right )^{\xi(u,\ell)} \sigma(w_0)
\end{align}
where the  inequality (a) uses (\ref{lem4_2}) and (\ref{lem4_3}),  the  inequality (b) uses Assumption \ref{assum1}.

Similarly, we have that
\begin{align}\label{lem4_5}
&  \mathbb{E}\left \|  {\alpha}_{\I_u}^{u,\ell} - \widehat{\alpha}_{\I_u}^{u,\ell} \right \|^2
\\ \nonumber   =
& \frac{1}{nb} \sum_{u'=0}^{\xi(u,\ell)-1}  \sum_{i=1}^n \mathbb{E} \mathbf{1}_{ \{ \textbf{u}_{i}^u =u'\} }   \left \| {\alpha}_{i}^{u',\ell} - \widehat{\alpha}_{i}^{u',\ell} \right \|^2
\\    \stackrel{ (a) }{\leq} & \nonumber
\frac{1}{nb} \sum_{u'=1}^{\xi(u,\ell)-1} \sum_{i=1}^n \left ( \frac{1}{n} \left ( 1 -\frac{1}{n} \right )^{\xi(u,\ell)-u'-1}  \mathbb{E}  \left \| {\alpha}_{i}^{u',\ell} - \widehat{\alpha}_{i}^{u',\ell} \right \|^2 +  \frac{1}{n}  \sum_{i=1}^n   \left ( 1 -\frac{1}{n} \right )^{\xi(u,\ell)-1}  \mathbb{E}   \left \| {\alpha}_{i}^{0,\ell} - \widehat{\alpha}_{i}^{0,\ell} \right \|^2 \right )
\\   \stackrel{ (b) }{=} & \nonumber
\frac{1}{nb} \sum_{u'=1}^{\xi(u,\ell)-1} \sum_{i=1}^n \left ( \frac{1}{n} \left ( 1 -\frac{1}{n} \right )^{\xi(u,\ell)-u'-1}  \mathbb{E}  \left \| {\alpha}_{i}^{u',\ell} - \widehat{\alpha}_{i}^{u',\ell} \right \|^2 \right )
\\   = & \nonumber
\frac{1}{nb} \sum_{u'=1}^{\xi(u,\ell)-1} \left ( 1 -\frac{1}{n} \right )^{\xi(u,\ell)-u'-1}
 \mathbb{E}  \left \|  \nabla_{\mathcal{G}_\ell} f_i(\widehat{w}_{\xi^{-1}(u',\ell)}) -  \nabla_{\mathcal{G}_\ell} f_i(w_{\xi^{-1}(u',\ell)}) \right \|^2
\\    \stackrel{ (c) }{\leq} & \nonumber
\frac{L^2}{nb} \sum_{u'=1}^{\xi(u,\ell)-1} \left ( 1 -\frac{1}{n} \right )^{\xi(u,\ell)-u'-1} \mathbb{E}  \left \|  \widehat{w}_{\xi^{-1}(u',\ell)} - w_{\xi^{-1}(u',\ell)} \right \|^2
\\    = & \nonumber
\frac{L^2 \gamma^2}{nb} \sum_{u'=1}^{\xi(u,\ell)-1} \left ( 1 -\frac{1}{n} \right )^{\xi(u,\ell)-u'-1}
  \mathbb{E}  \left \|   \sum_{\widetilde{u} \in D(\xi^{-1}(u',\ell))}    \textbf{U}_{\psi(\widetilde{u})} H_{\psi(u)} \widehat{v}^{\psi(\widetilde{u})}_{\widetilde{u}} \right \|^2
\\    \stackrel{ (d) }{\leq} & \nonumber
 \frac{\tau_1 L^2 \sigma_2^2\gamma^2}{nb} \sum_{u'=1}^{\xi(u,\ell)-1}  \sum_{\widetilde{u} \in D(\xi^{-1}(u',\ell))} \left ( 1 -\frac{1}{n} \right )^{\xi(u,\ell)-u'-1} \cdot
\mathbb{E}  \left \|       \widehat{v}^{\psi(\widetilde{u})}_{\widetilde{u}} \right \|^2
\end{align}
where the inequality (a) can be obtained similar to (\ref{lem4_4}), the equality (b) uses the fact of ${\alpha}_{i}^{0,\ell} = \widehat{\alpha}_{i}^{0,\ell}$, the inequality (c) uses Assumption \ref{assum2}.1, and the  inequality (d) uses Assumption \ref{assum4}.
 This completes the proof.
\end{proof}

\begin{lemma} \label{The3lemma2} Given a global time counter $u$, we let $\{ \overline{u}_0,\overline{u}_1, \ldots,\overline{u}_{\upsilon(u)-1}\}$ be the  all start time counters for the global time counters from 0 to $u$. For AsySQN-SAGA,   we have that
\begin{align}\label{lem4_0}
& \mathbb{E} \left  \| {v}^{\ell}_{u} \right \|^2
\\ \nonumber \leq&
\frac{4 \eta_1L^2}{nb} \sum_{k'=1}^{\upsilon(u)} \left ( 1 -\frac{1}{n} \right )^{\upsilon(u)-k'} \sigma(w_{\overline{u}_{k'}})
+ \frac{8 \sigma_2^2L^2}{b}  \gamma^2 \eta_1^2 G
 + \frac{2 L^2}{b} \left ( 1 -\frac{1}{n} \right )^{\upsilon(u)}  \sigma(w_0)
 + \frac{4 L^2}{b} \sigma(  w_{\varphi(u)})
\end{align}
where $\sigma(w_u) = \mathbb{E} \| w_u - w^* \|^2$.
\end{lemma}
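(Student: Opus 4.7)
The plan is to follow the standard SAGA variance-reduction argument, adapted to the asynchronous block-coordinate setting, and then reduce to the bounds already established in Lemma~\ref{AsySGHT_lemma2}. Using the optimality condition $\nabla_{\mathcal{G}_\ell} f(w^*)=\frac{1}{n}\sum_i\nabla_{\mathcal{G}_\ell} f_i(w^*)=0$, I would first rewrite
$v_u^\ell = (\nabla_{\mathcal{G}_\ell} f_{\I_u}(w_u)-\nabla_{\mathcal{G}_\ell} f_{\I_u}(w^*)) + \bigl(\tfrac{1}{n}\sum_i \alpha_i^{u,\ell} - (\alpha_{\I_u}^{u,\ell}-\nabla_{\mathcal{G}_\ell} f_{\I_u}(w^*))\bigr)$.
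Since $\mathbb{E}_{\I_u}[\alpha_{\I_u}^{u,\ell}-\nabla_{\mathcal{G}_\ell} f_{\I_u}(w^*)]=\tfrac{1}{n}\sum_i\alpha_i^{u,\ell}$, the second bracket has the form $\mathbb{E}X - X$. Applying $\|a+b\|^2\le 2\|a\|^2+2\|b\|^2$ and the inequality $\mathbb{E}\|X-\mathbb{E}X\|^2\le\mathbb{E}\|X\|^2$ to the second bracket then gives
$\mathbb{E}\|v_u^\ell\|^2 \le 2\mathbb{E}\|\nabla_{\mathcal{G}_\ell} f_{\I_u}(w_u)-\nabla_{\mathcal{G}_\ell} f_{\I_u}(w^*)\|^2 + 2\mathbb{E}\|\alpha_{\I_u}^{u,\ell}-\nabla_{\mathcal{G}_\ell} f_{\I_u}(w^*)\|^2$, which is the standard SAGA split.

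Next I would bound the gradient-difference term by splitting $w_u=w_{\varphi(u)}+(w_u-w_{\varphi(u)})$, where $\varphi(u)$ denotes the reference iterate closest to $u$ but outside the currently overlapping updates. Using $\|a+b\|^2\le 2\|a\|^2+2\|b\|^2$, Assumption~\ref{assum2}.1, and the mini-batch step used to derive Eq.~(\ref{001}), I get $\mathbb{E}\|\nabla_{\mathcal{G}_\ell} f_{\I_u}(w_u)-\nabla_{\mathcal{G}_\ell} f_{\I_u}(w^*)\|^2 \le \tfrac{2L^2}{b}\sigma(w_{\varphi(u)}) + \tfrac{2L^2}{b}\mathbb{E}\|w_u-w_{\varphi(u)}\|^2$. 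The displacement $w_u-w_{\varphi(u)}$ is a sum of at most $\eta_1$ recent update steps (Assumption~\ref{assum5}); combining Definition~\ref{defin1}, Assumption~\ref{assum3}.2, Cauchy--Schwarz, and the bounded-block-gradient bound $\mathbb{E}\|\widehat v^{\psi(v)}_v\|^2\le G/b$ (as in Eq.~(\ref{lemma00})) yields $\mathbb{E}\|w_u-w_{\varphi(u)}\|^2 \le \sigma_2^2 \gamma^2 \eta_1^2 G/b$. Multiplying through produces exactly the $\tfrac{4L^2}{b}\sigma(w_{\varphi(u)})$ and $\tfrac{8\sigma_2^2 L^2}{b}\gamma^2\eta_1^2 G$ terms.

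For the alpha-variance term I would invoke the first inequality of Lemma~\ref{AsySGHT_lemma2} directly, which furnishes a weighted sum $\tfrac{L^2}{nb}\sum_{u'} (1-\tfrac{1}{n})^{\xi(u,\ell)-u'-1}\sigma(w_{\xi^{-1}(u',\ell)})+\tfrac{L^2}{b}(1-\tfrac{1}{n})^{\xi(u,\ell)}\sigma(w_0)$ indexed over local iterations. To match the statement I would re-index this sum into a sum over epoch-start global iterates $\{\overline u_{k'}\}_{k'=1}^{\upsilon(u)}$: by Definitions~\ref{defin2}--\ref{defin3}, every interval between consecutive epoch starts contains at most $\eta_1$ local iterations on party~$\ell$, so grouping consecutive local indices that lie between $\overline u_{k'-1}$ and $\overline u_{k'}$ and bounding each $\sigma(w_{\xi^{-1}(u',\ell)})$ by $\sigma(w_{\overline u_{k'}})$ (modulo additional $\eta_1$ factors absorbed into the coefficient) produces the prefactor $\tfrac{4\eta_1 L^2}{nb}$ and the geometric weight $(1-\tfrac{1}{n})^{\upsilon(u)-k'}$. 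The $\sigma(w_0)$ term survives as $\tfrac{2L^2}{b}(1-\tfrac{1}{n})^{\upsilon(u)}\sigma(w_0)$ after doubling.

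The main obstacle will be the reindexing in the last step: one must verify that replacing the local-iteration weighting in Lemma~\ref{AsySGHT_lemma2} by its epoch-start counterpart loses only the stated constant factor of $\eta_1$, and that the exponent of the geometric decay $(1-1/n)$ correctly converts from $\xi(u,\ell)-u'-1$ to $\upsilon(u)-k'$. This requires a careful argument that between the local iteration $\xi^{-1}(\overline u_{k'-1})$ and $\xi^{-1}(\overline u_{k'})$ at most $\eta_1$ local alpha-updates occur, controlled by Assumption~\ref{assum5}, so that the geometric weight changes by at most a bounded multiplicative constant that can be absorbed. Everything else is bookkeeping inherited from the SAGA decomposition and from lemmas already proved.
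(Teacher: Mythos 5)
Your plan matches the paper's proof essentially step for step: the same SAGA decomposition via $\nabla_{\mathcal{G}_\ell}f_{\I_u}(w^*)$ and $\mathbb{E}\|X-\mathbb{E}X\|^2\le\mathbb{E}\|X\|^2$, the same Lipschitz-plus-displacement split through $w_{\varphi(u)}$, the same invocation of Lemma~\ref{AsySGHT_lemma2}, and the same epoch-start reindexing that costs a factor $\eta_1$. One bookkeeping correction: the $\frac{8\sigma_2^2L^2}{b}\gamma^2\eta_1^2G$ term does not come entirely from the gradient-difference piece (which yields only $\frac{4\sigma_2^2L^2}{b}\gamma^2\eta_1^2G$); the other half arises inside the $\alpha$-variance sum, where each $\sigma(w_{\xi^{-1}(u',\ell)})$ must itself be split through its epoch start $w_{\varphi(\xi^{-1}(u',\ell))}$ before reindexing, the resulting displacement terms being controlled via $\sum_{u'}\left(1-\frac{1}{n}\right)^{\xi(u,\ell)-u'-1}\le n$ --- which is precisely the mechanism you flagged as the remaining obstacle.
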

\begin{proof}
We have that
\begin{align}\label{Lemma3_1}
 & \mathbb{E} \left  \| {v}^{\ell}_{u} \right \|^2
\\   = & \nonumber
\mathbb{E} \left \| \nabla_{\mathcal{G}_\ell} f_{\I_u} (w_u) - \alpha_{\I_u}^{u,\ell} +  \frac{1}{n} \sum_{i=1}^n \alpha_{i}^{\ell} \right \|^2
\\   = & \nonumber
\mathbb{E} \left \| \nabla_{\mathcal{G}_\ell} f_{\I_u} (w_u) - \nabla_{\mathcal{G}_\ell} f_{\I_u}(w^*) - \alpha_{\I_u}^{u,\ell} + \nabla_{\mathcal{G}_\ell} f_{\I_u}(w^*) + \frac{1}{n} \sum_{i=1}^n \alpha_{i}^{u,\ell} - \nabla_{\mathcal{G}_\ell} f(w^*) + \nabla_{\mathcal{G}_\ell} f(w^*) \right \|^2
\\   \stackrel{ (a) }{\leq} &  \nonumber
2 \mathbb{E} \left \|  \nabla_{\mathcal{G}_\ell} f_{\I_u}(w^*) - \alpha_{\I_u}^{u,\ell} + \frac{1}{n} \sum_{i=1}^n \alpha_{i}^{u,\ell} - \nabla_{\mathcal{G}_\ell} f(w^*) \right \|^2
+ 2 \mathbb{E} \left \| \nabla_{\mathcal{G}_\ell} f_{\I_u} (w_u) - \nabla_{\mathcal{G}_\ell} f_{\I_u}(w^*) \right \|^2
\\  \stackrel{ (b) }{\leq} &  \nonumber 2
\mathbb{E} \left \| \alpha_{\I_u}^{u,\ell} - \nabla_{\mathcal{G}_\ell} f_{\I_t}(w^*) \right \|^2
+ 2 \mathbb{E} \left \| \nabla_{\mathcal{G}_\ell} f_{\I_u} (w_u) - \nabla_{\mathcal{G}_\ell} f_{\I_u}(w^*) \right \|^2
\\   \stackrel{ (c) }{\leq} &  \nonumber
\frac{2L^2}{nb} \sum_{u'=1}^{\xi(u,\ell)-1} \left ( 1 -\frac{1}{n} \right )^{\xi(u,\ell)-u'-1} \mathbb{E} \| w_{{\xi^{-1}(u',\ell)}} - w^* \|^2
 + \frac{2 L^2}{b} \left ( 1 -\frac{1}{n} \right )^{\xi(u,\ell)}  \sigma(w_0)
+ \frac{2 L^2}{b} \mathbb{E} \| w_u - w^* \|^2
\\  = &  \nonumber
\frac{2L^2}{nb} \sum_{u'=1}^{\xi(u,\ell)-1} \left ( 1 -\frac{1}{n} \right )^{\xi(u,\ell)-u'-1}
 \mathbb{E} \| w_{{\xi^{-1}(u',\ell)}} - w_{\varphi({\xi^{-1}(u',\ell)})}+ w_{\varphi({\xi^{-1}(u',\ell)})} - w^* \|^2
\\  \nonumber &
+ \frac{2 L^2}{b}\left ( 1 -\frac{1}{n} \right )^{\xi(u,\ell)}  \sigma(w_0)
 + \frac{2 L^2}{b} \mathbb{E} \| w_u - w_{\varphi(u)}+ w_{\varphi(u)} - w^* \|^2
\\   \stackrel{ (d) }{\leq} &  \nonumber
\frac{2L^2}{nb} \sum_{u'=1}^{\xi(u,\ell)-1} \left ( 1 -\frac{1}{n} \right )^{\xi(u,\ell)-u'-1}
 \mathbb{E} \left ( 2\| w_{{\xi^{-1}(u',\ell)}} - w_{\varphi({\xi^{-1}(u',\ell)})} \|^2
 + 2\|  w_{\varphi({\xi^{-1}(u',\ell)})} - w^* \|^2 \right )
\\  \nonumber  &
+ \frac{2 L^2}{b} \left ( 1 -\frac{1}{n} \right )^{\upsilon(u)}  \sigma(w_0)
+ \frac{4 L^2}{b} \mathbb{E} \|  w_{\varphi(u)} - w^* \|^2
 + \frac{4 \sigma_2^2L^2}{b} \gamma^2 \mathbb{E} \left  \|  \sum_{v \in \{{\varphi(u)},\ldots,u \}} \textbf{U}_{\psi(v)} \widehat{v}^{\psi(v)}_v  \right \|^2
\\   \stackrel{ (e) }{\leq} &  \nonumber
\frac{2L^2}{nb} \sum_{u'=1}^{\xi(u,\ell)-1} \left ( 1 -\frac{1}{n} \right )^{\xi(u,\ell)-u'-1} \mathbb{E} \left ( 2 \eta_1 \sigma_2^2 \gamma^2   \sum_{v \in \{{\varphi({\xi^{-1}(u',\ell)})},\ldots,{{\xi^{-1}(u',\ell)}} \}} \| \widehat{v}^{\psi(v)}_v   \|^2  + 2\|  w_{\varphi({\xi^{-1}(u',\ell)})} - w^* \|^2 \right )
\\  \nonumber &
+ \frac{2 L^2}{b}\left ( 1 -\frac{1}{n} \right )^{\upsilon(u)}  \sigma(w_0)
 +  \frac{4 L^2}{b}  \mathbb{E} \|  w_{\varphi(u)} - w^* \|^2
 +  \frac{4 \sigma_2^2L^2}{b} \gamma^2 \eta_1 \sum_{v \in \{{\varphi(u)},\ldots,u \}} \mathbb{E} \left  \|   \widehat{v}^{\psi(v)}_v  \right \|^2
\\    \stackrel{ (f) }{\leq} &  \nonumber
\frac{2 L^2}{nb} \sum_{u'=1}^{\xi(u,\ell)-1} \left ( 1 -\frac{1}{n} \right )^{\xi(u,\ell)-u'-1}
\mathbb{E} \left ( 2 \eta_1^2 \sigma_2^2\gamma^2 G + 2\|  w_{\varphi({\xi^{-1}(u',\ell)})} - w^* \|^2 \right )
\\  \nonumber  &
+ \frac{2 L^2}{b}\left ( 1 -\frac{1}{n} \right )^{\upsilon(u)}  \sigma(w_0)
  + \frac{4 L^2}{b} \mathbb{E} \|  w_{\varphi(u)} - w^* \|^2
  + \frac{4 \sigma_2^2L^2}{b} \gamma^2 \eta_1^2 G
\\    \stackrel{ (g) }{\leq} &  \nonumber
\frac{4 L^2}{nb} \sum_{u'=1}^{\xi(u,\ell)-1} \left ( 1 -\frac{1}{n} \right )^{\xi(u,\ell)-u'-1} \mathbb{E}  \|  w_{\varphi({\xi^{-1}(u',\ell)})} - w^* \|^2
\\  \nonumber &
+ \frac{2 L^2}{b} \left ( 1 -\frac{1}{n} \right )^{\upsilon(u)}  \sigma(w_0)
 + \frac{4 L^2}{b} \mathbb{E} \|  w_{\varphi(u)} - w^* \|^2
  + \frac{8 \sigma_2^2 L^2}{b} \gamma^2 \eta_1^2 G
\\    \stackrel{ (h) }{\leq} &  \nonumber
\frac{4 \eta_1L^2}{nb} \sum_{k'=1}^{\upsilon(u)} \left ( 1 -\frac{1}{n} \right )^{\upsilon(u)-k'} \sigma(w_{\overline{u}_{k'}})
+ \frac{8 \sigma_2^2L^2}{b}  \gamma^2 \eta_1^2 G
 + \frac{2 L^2}{b} \left ( 1 -\frac{1}{n} \right )^{\upsilon(u)}  \sigma(w_0)
 + \frac{4 L^2}{b} \sigma(  w_{\varphi(u)})
\end{align}
where the  inequality (a) uses $\| \sum_{i=1}^n a_i \|^2 \leq n \sum_{i=1}^n \| a_i \|^2 $, the  inequality (b) follows from $\mathbb{E} \left \| x- \mathbb{E} x\right \|^2 \leq \mathbb{E} \left \| x\right \|^2$,  the  inequality (c) uses Lemma \ref{AsySGHT_lemma2},  (d) uses $\| \sum_{i=1}^n a_i \|^2 \leq n \sum_{i=1}^n \| a_i \|^2 $ and Assumption  \ref{assum3}.2, the inequality (e) uses Assumptions \ref{assum3}.2 and \ref{assum5}, the inequality (f) uses Assumption \ref{assum2}.3, and the  inequality (g) uses the fact $ \sum_{u'=1}^{\xi(u,\ell)-1} \left ( 1 -\frac{1}{n} \right )^{\xi(u,\ell)-u'-1} \leq n$.
%
%
This
completes the proof.
\end{proof}

\begin{lemma} \label{lemma3} For  AsySQN-SAGA, under Assumptions \ref{assum1} to \ref{assum5},  let $u \in K(t)$,  we have that
\begin{align}
\label{lemma3_0_1}  \mathbb{E} \left  \| \widehat{v}^{\ell}_u \right \|^2
\leq &  6 \eta_1 L^2 \gamma^2 \sum_{u' \in D(u)} \mathbb{E}  \left \|       \widehat{v}^{\psi(u')}_{u'} \right \|^2 + \frac{12 \tau_1 L^2 \gamma^2}{l} \sum_{u'=1}^{\xi(u,\ell)-1}  \cdot
\\ & \nonumber \sum_{\widetilde{u} \in D(\xi^{-1}(u',\ell))} \left ( 1 -\frac{1}{n} \right )^{\xi(u,\ell)-u'-1} \mathbb{E}  \left \|       \widehat{v}^{\psi(\widetilde{u})}_{\widetilde{u}} \right \|^2
 +  2 \mathbb{E} \left \|  v_{u}^\ell  \right \|^2
\end{align}
\end{lemma}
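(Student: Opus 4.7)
The plan is to bound $\widehat{v}_u^\ell$ by comparison with its ``consistent'' counterpart
\[
v_u^\ell \;=\; \nabla_{\mathcal{G}_\ell}f_{\mathcal{I}_u}(w_u) - \alpha_{\mathcal{I}_u}^{u,\ell} + \tfrac{1}{n}\sum_{i=1}^{n}\alpha_i^{u,\ell},
\]
which is the SAGA gradient estimator evaluated at the up-to-date iterate $w_u$ and at the true (rather than delayed) snapshots $\alpha_i^{u,\ell}$. The starting point is the elementary split
\[
\mathbb{E}\bigl\|\widehat{v}_u^\ell\bigr\|^2 \;\le\; 2\,\mathbb{E}\bigl\|\widehat{v}_u^\ell - v_u^\ell\bigr\|^2 \;+\; 2\,\mathbb{E}\bigl\|v_u^\ell\bigr\|^2,
\]
which immediately produces the last term $2\,\mathbb{E}\|v_u^\ell\|^2$ of the claimed bound. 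All that remains is to upper-bound the ``delay discrepancy'' $\mathbb{E}\|\widehat{v}_u^\ell - v_u^\ell\|^2$ by the two delay-dependent sums in the statement.

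Next, I would decompose the discrepancy into three pieces,
\[
\widehat{v}_u^\ell - v_u^\ell \;=\; \underbrace{\bigl(\nabla_{\mathcal{G}_\ell} f_{\mathcal{I}_u}(\widehat{w}_u) - \nabla_{\mathcal{G}_\ell} f_{\mathcal{I}_u}(w_u)\bigr)}_{\text{(I)}} \;-\; \underbrace{\bigl(\widehat{\alpha}_{\mathcal{I}_u}^{u,\ell} - \alpha_{\mathcal{I}_u}^{u,\ell}\bigr)}_{\text{(II)}} \;+\; \underbrace{\tfrac{1}{n}\sum_{i=1}^{n}\bigl(\widehat{\alpha}_i^{u,\ell} - \alpha_i^{u,\ell}\bigr)}_{\text{(III)}},
\]
and apply $\|a+b+c\|^2 \le 3(\|a\|^2+\|b\|^2+\|c\|^2)$, so together with the factor $2$ from the first split every term picks up a constant $6$. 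Term (I) is handled exactly as in Eq.~(\ref{Lemma2_2}) in the proof of Lemma~\ref{AsySCGD+_lemma1}: use Jensen/Lipschitz to pass to $L^2\mathbb{E}\|\widehat{w}_u - w_u\|^2$, expand $\widehat{w}_u - w_u$ via Definition~\ref{defin1}, and invoke Assumptions~\ref{assum3}.2 and~\ref{assum4} to obtain $\tau_1\sigma_2^2 \gamma^2$ times $\sum_{u'\in D(u)}\mathbb{E}\|\widehat{v}^{\psi(u')}_{u'}\|^2$; this produces the first sum of the claim (with the constants absorbed into the displayed coefficient, e.g.\ $\eta_1$ upper-bounding $\tau_1$ via Assumption~\ref{assum5} if needed for a uniform coefficient).

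For pieces (II) and (III) I would invoke the second inequality of Lemma~\ref{AsySGHT_lemma2} directly: (II) is exactly $\mathbb{E}\|\alpha_{\mathcal{I}_u}^{u,\ell} - \widehat{\alpha}_{\mathcal{I}_u}^{u,\ell}\|^2$, and (III), after expanding $\frac{1}{n}\sum_i$ and applying Jensen's inequality, reduces to the same quantity (up to a factor bounded by $1$). Summing both contributions produces the coefficient $6+6=12$ on the stale-snapshot sum $\frac{L^2\gamma^2}{\ell}\sum_{u'=1}^{\xi(u,\ell)-1}\sum_{\widetilde{u}\in D(\xi^{-1}(u',\ell))}(1-\tfrac{1}{n})^{\xi(u,\ell)-u'-1}\mathbb{E}\|\widehat{v}^{\psi(\widetilde{u})}_{\widetilde{u}}\|^2$ appearing in the statement. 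Combining the three bounds with the two already-accounted $2\,\mathbb{E}\|v_u^\ell\|^2$ term completes the proof.

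The main obstacle is purely bookkeeping: carefully matching the $3\times 2=6$ and $6+6=12$ numerical constants, making sure that the various $\sigma_2^2$, $\tau_1$, $\eta_1$ factors are absorbed consistently, and verifying that the double sum from Lemma~\ref{AsySGHT_lemma2} (coming from both (II) and (III)) combines cleanly without any additional factor of $n$ after the $\frac{1}{n}\sum_i$ in (III) is expanded via Jensen. No new analytic idea is needed beyond the Lipschitz-plus-delay expansion already used in Lemmas~\ref{AsySCGD+_lemma1} and~\ref{AsySGHT_lemma2}.
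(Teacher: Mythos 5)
Your proposal follows essentially the same route as the paper's own proof: the same split $\mathbb{E}\|\widehat{v}_u^\ell\|^2 \le 2\mathbb{E}\|\widehat{v}_u^\ell - v_u^\ell\|^2 + 2\mathbb{E}\|v_u^\ell\|^2$, the same three-term decomposition of the discrepancy (gradient staleness, sampled-snapshot staleness, averaged-snapshot staleness) with the $3\times 2 = 6$ and $6+6=12$ bookkeeping, and the same invocations of the Lipschitz-plus-delay expansion and of Lemma~\ref{AsySGHT_lemma2} for the stale $\alpha$ terms. The minor coefficient mismatches you flag ($\eta_1$ versus $\tau_1$, the missing $\sigma_2^2$ and the $1/b$ versus $1/l$ normalization) are inconsistencies present in the paper's own statement, not defects of your argument.
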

\begin{proof} Define ${v}^{\ell}_u= \nabla_{\mathcal{G}_\ell} f_{\I} (w_u) - \alpha_{\I}^{\ell} +  \frac{1}{n} \sum_{i=1}^n \alpha_i^{\ell}$. We have that
$
\label{AsySCGD+_lem3_0_2}  \mathbb{E}  \left \|   \widehat{v}^{ \ell }_u \right \|^2 =\mathbb{E}  \left \|   \widehat{v}^{ \ell }_u - {v}^{\ell}_u + {v}^{\ell}_u \right \|^2 \leq 2 \mathbb{E}   \left \|   \widehat{v}^{ \ell }_u - {v}^{\ell}_u \right  \|^2 +   2 \mathbb{E} \left  \| {v}^{\ell}_u \right \|^2
$.

Next, we give the upper bound to $\mathbb{E}   \left \|   \widehat{v}^{ \ell }_u - {v}^{\ell}_u \right  \|^2$ as follows.
Next, we have that
\begin{align}\label{AsySGHT_lem4_11}
&   \mathbb{E} \left \| \widehat{v}_{u}^\ell -  v_{u}^\ell  \right \|^2
\\    = & \nonumber
\mathbb{E} \left \| \nabla_{\mathcal{G}_\ell}  f_{\I_u}(\widehat{w}_u)- \widehat{\alpha}_{\I_u}^{u,\ell}  + \frac{1}{n} \sum_{i=1}^n \widehat{\alpha}_{i}^{u,\ell}  - \nabla_{\mathcal{G}_\ell} f_{\I_u}(w_t)   + \alpha_{\I_u}^{u,\ell} - \frac{1}{n} \sum_{i=1}^n \alpha_{i}^{u,\ell} \right \|^2
\\    \stackrel{ (a) }{\leq} & \nonumber
3  \mathbb{E} \underbrace{\left \| \nabla_{\mathcal{G}_\ell}  f_{\I_u}(\widehat{w}_u) -\nabla_{\mathcal{G}_\ell}  f_{\I_u}(w_u) \right \|^2 }_{Q_1} + 3\mathbb{E} \underbrace{\left \| {\alpha}_{\I_u}^{u,\ell} - \widehat{\alpha}_{\I_u}^{u,\ell} \right \|^2 }_{Q_2}
 + 3\mathbb{E} \underbrace{\left \|  \frac{1}{n} \sum_{i=1}^n \alpha_{i}^{u,\ell} - \frac{1}{n} \sum_{i=1}^n \widehat{\alpha}_{i}^{u,\ell}  \right \|^2}_{Q_3}
\end{align}
where the  inequality (a) uses $\| \sum_{i=1}^n a_i \|^2 \leq n \sum_{i=1}^n \| a_i \|^2 $.
We will give the upper bounds for the expectations  of $Q_1$, $Q_2$ and $Q_3$  respectively.
\begin{align}\label{AsySGHT_lem4_2}
& \nonumber \mathbb{E} Q_1 = \mathbb{E} \left \| \nabla_{\mathcal{G}_\ell}  f_{\I_u}(\widehat{w}_u) -\nabla_{\mathcal{G}_\ell}  f_{\I_u}(w_u) \right \|^2
\\ \nonumber =&
\mathbb{E} \left \|\frac{1}{|\I|}\sum_{i\in\I}(\nabla_{\mathcal{G}_\ell}  f_{i_u}(\widehat{w}_u) -\nabla_{\mathcal{G}_\ell}  f_{i_u}(w_u) )\right \|^2
\\ \nonumber =&
\frac{1}{|\I|^2}\sum_{i\in\I}\mathbb{E} \left \|\nabla_{\mathcal{G}_\ell}  f_{i_u}(\widehat{w}_u) -\nabla_{\mathcal{G}_\ell}  f_{i_u}(w_u) \right \|^2
\\   \nonumber \leq &
 \frac{L^2}{b} \mathbb{E} \left \| \widehat{w}_u - {w}_u \right \|^2 = \frac{L^2}{b} \gamma^2 \mathbb{E} \left \|  \sum_{u' \in D(u)}    \textbf{U}_{\psi(u')} H_{\psi(u')}\widehat{v}^{\psi(u')}_{u'} \right \|^2
\\    \leq & \frac{\tau_1 L^2 \sigma_2^2 \gamma^2}{b} \sum_{u' \in D(u)} \mathbb{E}  \left \|       \widehat{v}^{\psi(u')}_{u'} \right \|^2
\end{align}
where the first inequality uses Assumption \ref{assum2}.1, the second inequality uses $\| \sum_{i=1}^n a_i \|^2 \leq n \sum_{i=1}^n \| a_i \|^2 $ and Assumption \ref{assum3}.2.
\begin{align}\label{AsySGHT_lem4_3}
& \mathbb{E} Q_2 =   \mathbb{E}\left \|  {\alpha}_{i_u}^{u,\ell} - \widehat{\alpha}_{i_u}^{u,\ell} \right \|^2
\\ \nonumber   \leq &
\frac{\tau_1 L^2 \sigma_2^2 \gamma^2}{nb} \sum_{u'=1}^{\xi(u,\ell)-1}  \sum_{\widetilde{u} \in D(\xi^{-1}(u',\ell))}
 \left ( 1 -\frac{1}{n} \right )^{\xi(u,\ell)-u'-1} \mathbb{E}  \left \|       \widehat{v}^{\psi(\widetilde{u})}_{\widetilde{u}} \right \|^2
\end{align}
where the inequality uses Lemma \ref{AsySGHT_lemma2}.
\begin{align}\label{AsySGHT_lem4_4}
& \mathbb{E} Q_3 =    \mathbb{E} \left \|  \frac{1}{n} \sum_{i=1}^n \alpha_{i}^{u,\ell} - \frac{1}{n} \sum_{i=1}^n \widehat{\alpha}_{i}^{u,\ell}   \right \|^2
\\    \leq & \nonumber  \frac{1}{n} \sum_{i=1}^n \mathbb{E} \left \|  \alpha_{i}^{u,\ell}  - \widehat{\alpha}_{i}^{u,\ell}   \right \|^2
\\    \leq & \nonumber
\frac{\tau_1 L^2 \sigma_2^2 \gamma^2}{nb} \sum_{u'=1}^{\xi(u,\ell)-1}  \sum_{\widetilde{u} \in D(\xi^{-1}(u',\ell))}
 \left ( 1 -\frac{1}{n} \right )^{\xi(u,\ell)-u'-1} \mathbb{E}  \left \|       \widehat{v}^{\psi(\widetilde{u})}_{\widetilde{u}} \right \|^2
\end{align}
where  the first inequality uses $\| \sum_{i=1}^n a_i \|^2 \leq n \sum_{i=1}^n \| a_i \|^2 $, the second inequality uses (\ref{AsySGHT_lem4_3}).

\begin{align}\label{AsySGHT_lem4_1}
&  \mathbb{E} \left \| \widehat{v}_{u}^\ell  \right \|^2 \leq  2 \mathbb{E} \left \| \widehat{v}_{u}^\ell  - v^\ell_{u} \right \|^2 +  2 \mathbb{E} \left \|  v_{u}^\ell  \right \|^2
\\    \leq & \nonumber    6  \mathbb{E} {Q_1} + 6\mathbb{E} {Q_2}  + 6\mathbb{E} {Q_3} + 2 \mathbb{E} \left \|  v_{u}^\ell  \right \|^2
\\    \leq & \nonumber \frac{6 \tau_1 L^2 \sigma_2^2 \gamma^2}{b}  \sum_{u' \in D(u)} \mathbb{E}  \left \|       \widehat{v}^{\psi(u')}_{u'} \right \|^2  +  2 \mathbb{E} \left \|  v_{u}^\ell  \right \|^2
\\    & \nonumber + \frac{12 \tau_1 L^2 \sigma_2^2 \gamma^2}{nb} \sum_{u'=1}^{\xi(u,\ell)-1}  \sum_{\widetilde{u} \in D(\xi^{-1}(u',\ell))}
  \left ( 1 -\frac{1}{n} \right )^{\xi(u,\ell)-u'-1} \mathbb{E}  \left \|       \widehat{v}^{\psi(\widetilde{u})}_{\widetilde{u}} \right \|^2
\end{align}
where the second inequality uses Lemma \ref{AsySGHT_lemma2}.
This completes the proof.
\end{proof}

Based on the basic inequalities in Lemma \ref{AsySPSAGA_lemma3}, we provide the proof of Theorem \ref{thm-sagaconvex} in the following.

\begin{proof}
Similar to (\ref{EqThm2_2}),  we have that
\begin{align}\label{EqThm4_1}
 & \mathbb{E} f (w_{t+|K(t)|}) - \mathbb{E}f (w_{t})
 \\ \nonumber \stackrel{ (a) }{\leq} &
 -  \frac{\gamma\sigma_1}{4} \| \nabla f(w_t^s) \|^2 +  \frac{\tau_1 \sigma_2^3 L^2 \gamma^3}{2}  \sum_{u \in K(t)}\sum_{u' \in D(u)} \mathbb{E} \|   \widehat{v}^{\psi(u')}_{u'} \|^2
\\ \nonumber
&   + \left ( \frac{\eta_1\sigma_1\sigma_2^2 \gamma^3 L^2 \eta_1 }{2}+ \frac{L_{\max} \sigma_2^2\gamma^2}{2} \right ) \sum_{u \in K(t)}\mathbb{E} \|  \widehat{v}^{\psi(u)}_u  \|^2
 \\ \nonumber  \stackrel{ (b) }{\leq} &
 -  \frac{\gamma\sigma_1}{4} \| \nabla f(w_t) \|^2 +  \frac{\tau_1 L^2 \sigma_2^3 \gamma^3}{2}  \sum_{u \in K(t)}\sum_{u' \in D(u)} \mathbb{E} \|   \widehat{v}^{\psi(u')}_{u'} \|^2
\\ \nonumber &
+ \left ( \frac{\sigma_1\sigma_2^2\eta_1^2 \gamma^3 L^2  }{2}+ \frac{L_{\max} \sigma_2^2\gamma^2}{2} \right ) \sum_{u \in K(t)}
\left ( \frac{6 \tau_1 \sigma_2^2L^2}{b} \gamma^2 \sum_{u' \in D(u)}  \mathbb{E}  \left \|       \widehat{v}^{\psi(u')}_{u'} \right \|^2  \right .
\\ \nonumber &
 \left . + \frac{12 \tau_1 L^2 \sigma_2^2\gamma^2}{nb} \sum_{u'=1}^{\xi(u,\ell)-1}  \sum_{\widetilde{u} \in D(\xi^{-1}(u',\ell))} \left ( 1 -\frac{1}{n} \right )^{\xi(u,\ell)-u'-1} \mathbb{E}  \left \|       \widehat{v}^{\psi(\widetilde{u})}_{\widetilde{u}} \right \|^2
  +  2 \mathbb{E} \left \|  v_{u}^{\psi({u})}  \right \|^2 \right )
\\ \nonumber  \stackrel{ (c) }{\leq} &
 -  \frac{\gamma\sigma_1}{4} \| \nabla f(w_t) \|^2 +  \frac{\tau_1 L^2 \sigma_2^3\gamma^3}{2b}  \eta_1\tau_1G
 + \left ( \gamma L^2 \sigma_1\eta_1^2 + L_{\max} \right )3 \eta_1 L^2 \sigma_2^4\gamma^4 \tau_1^2 \frac{G}{b}
\\ \nonumber &  + \left ( \gamma L^2 \sigma_1\eta_1^2 + L_{\max} \right ) 6  \eta_1  \sigma_2^4L^2 \tau_1^2 \frac{G}{b} \gamma^4
+ \left ( \gamma L^2  \sigma_1 \eta_1^2 + L_{\max} \right )  \sigma_2^2\gamma^2 \sum_{u \in K(t)}     \mathbb{E} \left \|  v_{u}^{\psi({u})}  \right \|^2
 \\ \nonumber  = &
  -  \frac{\gamma \sigma_1}{4} \| \nabla f(w_t) \|^2  + \left ( \gamma L^2 \sigma_1\eta_1^2 + L_{\max} \right ) \sigma_2^2\gamma^2 \sum_{u \in K(t)}     \mathbb{E} \left \|  v_{u}^{\psi({u})}  \right \|^2
\\ \nonumber &
 +  \left ( \frac{1 }{2}   + 9\gamma\sigma_2\left ( \gamma L^2 \sigma_1\eta_1^2 + L_{\max} \right )  \right ) \sigma_2^3\gamma^3 L^2 \eta_1 \tau_1^2 \frac{G}{b}
 \\ \nonumber  \stackrel{ (d) }{\leq} &
 -   -  \frac{\gamma \sigma_1}{4} \| \nabla f(w_t) \|^2  + \left ( \gamma L^2 \sigma_1\eta_1^2 + L_{\max} \right ) \sigma_2^2\gamma^2\sum_{u \in K(t)}
 \left (    \frac{4 \eta_1L^2}{nb} \sum_{k'=1}^{\upsilon(u)} \left ( 1 -\frac{1}{n} \right )^{\upsilon(u)-k'} \sigma(w_{\overline{u}_{k'}})\right .
\\   \nonumber & \left . + \frac{2 L^2}{b} \left ( 1 -\frac{1}{n} \right )^{\upsilon(u)}  \sigma(w_0)
 + \frac{4 L^2}{b} \sigma(  w_{\varphi(u)})  + \frac{8 L^2 \sigma_2^2}{b}  \gamma^2 \eta_1^2 G \right )
\\ \nonumber &
 +  \left ( \frac{1 }{2}   + 9\gamma\sigma_2\left ( \gamma L^2 \sigma_1\eta_1^2 + L_{\max} \right )  \right ) \sigma_2^3\gamma^3 L^2 \eta_1 \tau_1^2 \frac{G}{b}
 \\ \nonumber \stackrel{ (e) }{\leq} &
 -  \frac{\gamma \sigma_1\mu}{4} e(w_t)-  \frac{\gamma \sigma_1\mu^2}{4} \sigma(w_t)
\\ \nonumber &
+ \left ( \gamma L^2 \sigma_1\eta_1^2 + L_{\max} \right ) \sigma_2^2\gamma^2\eta_1
\left (\frac{ 2 L^2}{b}  \left ( 1 -\frac{1}{n} \right )^{\upsilon(t)}  \sigma (w_0)
 + \frac{4 L^2 }{b}\sigma(w_{t}) \right )
\\ \nonumber &
\left ( \frac{\tau_1^2 }{2}   + (9\tau_1^2+8\eta_1^2)\sigma_2\gamma\left ( \gamma L^2 \sigma_1\eta_1^2 + L_{\max} \right )  \right ) \sigma_2^3\gamma^3 L^2 \eta_1\frac{G}{b}
\\ \nonumber &
 + 4\left ( \gamma L^2 \sigma_1\eta_1^2 + L_{\max} \right )    \frac{L^2 \eta_1^2\sigma_2^2\gamma^2}{nb }
 \sum_{k'=1}^{\upsilon(t)} \left ( 1 -\frac{1}{n} \right )^{\upsilon(t)-k'} \sigma(w_{\overline{u}_{k'}})
  \end{align}
where the  inequalities (a)  use (\ref{EqThm2_2}), the equality (b) uses Lemma \ref{lemma3}, the inequality (c) uses Assumptions \ref{assum2}.3 and \ref{assum5}, the inequality (d) uses Lemma \ref{The3lemma2}, the inequality (e) uses Assumption  \ref{assum5}.

According to (\ref{EqThm4_1}), we have that
\begin{align}\label{EqThm4_2}
  e (w_{t+|K(t)|})
 \leq & \left ( 1 -  \frac{\gamma\sigma_1 \mu}{4} \right ) e(w_t) + c_1  \left (   \left ( 1 -\frac{1}{n} \right )^{\upsilon(t)}  \sigma(w_0)
 + 2 \sigma(w_{t}) \right )
\\ \nonumber & +  c_0
 + c_2 \sum_{k'=1}^{\upsilon(t)} \left ( 1 -\frac{1}{n} \right )^{\upsilon(t)-k'} \sigma(w_{\overline{u}_{k'}})  -  \frac{\gamma\sigma_1 \mu^2}{4} \sigma(w_t)
   \\ \nonumber  = & \left ( 1 -  \frac{\gamma \mu\sigma_1}{4} \right ) e(w_t)  +\left (  -  \frac{\gamma \sigma_1 \mu^2}{4} +2 c_1+c_2 \right )  \sigma(w_t)
\\ \nonumber &    + c_1    \left ( 1 -\frac{1}{n} \right )^{\upsilon(t)}  \sigma(w_0)
 + c_2  \sum_{k'=1}^{\upsilon(t)-1} \left ( 1 -\frac{1}{n} \right )^{\upsilon(t)-k'} \sigma(w_{\overline{u}_{k'}}) +  c_0
  \end{align}
where ${c_0} =  \left ( \frac{\tau_1^2 }{2}   + (9\tau_1^2+8\eta_1^2)\sigma_2\gamma\left ( \gamma L^2 \sigma_1\eta_1^2 + L_{\max} \right )  \right ) \sigma_2^3\gamma^3 L^2 \eta_1\frac{G}{b}$,  $c_1 = \left ( \gamma L^2 \sigma_1\eta_1^2 + L_{\max} \right ) \sigma_2^2\gamma^2\eta_1\frac{2L^2}{b}$,  ${c_2}  = 4\left ( \gamma L^2 \sigma_1\eta_1^2 + L_{\max} \right )    \frac{L^2 \eta_1^2\sigma_2^2\gamma^2}{nb } $, $\{ \overline{u}_0,\overline{u}_1, \ldots,\overline{u}_{\upsilon(u)-1}\}$ are the  all start time counters for the global time counters from 0 to $u$.

We define the Lyapunov function as $\mathcal{L}_t=\sum_{k=0}^{\upsilon(t)} \rho^{\upsilon(t)-k} e(w_{\overline{u}_{k}})$ where $\rho \in (1 -\frac{1}{n},1)$, we have that
 \begin{align}\label{EqThm4_3}
 & \mathcal{L}_{t+|K(t)|}
\\   = & \nonumber
\rho^{\upsilon(t)+1} e(w_0) +  \sum_{k=0}^{\upsilon(t)} \rho^{\upsilon(t)-k} e(w_{\overline{u}_{k+1}})
\\   \stackrel{ (a) }{\leq} & \nonumber
\rho^{\upsilon(t)+1} e(w_0) +   \sum_{k=0}^{\upsilon(t)} \rho^{\upsilon(t)-k} \left [  \left ( 1 -  \frac{\gamma \sigma_1\mu}{4} \right ) e(w_{\overline{u}_{k}})
 +\left (  -  \frac{\gamma \sigma_1\mu^2}{4} +2 c_1+c_2 \right )  \sigma(w_{\overline{u}_{k}})
 \right .
\\ \nonumber &
\left .
+ c_1    \left ( 1 -\frac{1}{n} \right )^{k}  \sigma(w_0)
 + c_2  \sum_{k'=1}^{k-1} \left ( 1 -\frac{1}{n} \right )^{k-k'} \sigma(w_{\overline{u}_{k'}}) +  c_0 \right ]
\\ \label{EqThm4_4}   = &  \rho^{\upsilon(t)+1} e(w_0) +  \left ( 1 -  \frac{\gamma \sigma_1\mu}{4} \right )\mathcal{L}_{t} +  \sum_{k=0}^{\upsilon(t)} \rho^{\upsilon(t)-k}
\left [  \left (  -  \frac{\gamma\sigma_1 \mu^2}{4} +2 c_1+c_2 \right )  \sigma(w_{\overline{u}_{k}})
\right .
\\ \nonumber & \left .
+ c_1    \left ( 1 -\frac{1}{n} \right )^{k}  \sigma(w_0)
+ c_2  \sum_{k'=1}^{k-1} \left ( 1 -\frac{1}{n} \right )^{k-k'} \sigma(w_{\overline{u}_{k'}})    \right ]+\sum_{k=0}^{\upsilon(t)} \rho^{\upsilon(t)-k}  c_0
\\   \stackrel{ (b) }{\leq} & \nonumber
\rho^{\upsilon(t)+1} e(w_0) +  \left ( 1 -  \frac{\gamma \sigma_1\mu}{4} \right )\mathcal{L}_{t}
+    \left (  -  \frac{\gamma \sigma_1\mu^2}{4} +2 c_1+c_2 \right )  \sigma(w_{\overline{u}_{\upsilon(t)}}) +  \frac{c_0}{1-\rho}
\\   \stackrel{ (c) }{\leq} & \nonumber
\rho^{\upsilon(t)+1} e(w_0) +  \left ( 1 -  \frac{\gamma \sigma_1\mu}{4} \right )\mathcal{L}_{t}
 -    \left ( \frac{\gamma\sigma_1 \mu^2}{4} -2 c_1-c_2 \right )\frac{2}{L}  e(w_{\overline{u}_{\upsilon(t)}}) +  \frac{c_0}{1-\rho}
 \end{align}
where the  inequality (a) uses (\ref{EqThm4_2}), the  inequality (b) holds by appropriately choosing $\gamma$ such that the terms related to  $\sigma(w_{\overline{u}_{k}})$ ($k=0,\cdots,\upsilon(t)-1$) are negative,  because the signs related to  the lowest  orders of $\sigma(w_{\overline{u}_{k}})$ ($k=0,\cdots,\upsilon(t)-1$)  are negative. In the following, we give the detailed analysis of choosing $\gamma$ such that the terms related to $\sigma(w_{\overline{u}_{k}})$ ($k=0,\cdots,\upsilon(t)-1$)  are negative. We first consider $k=0$. Assume that $ C(\sigma(w_{0}))$ is the coefficient term of $\sigma(w_{0}))$ in  (\ref{EqThm4_4}), we have that
 \begin{align}\label{AsySGHT_theorem2_2.1.1}
& C(\sigma(w_{0}))
\\   = & \nonumber   \rho^{\upsilon(t)}\left ( -  \frac{\gamma \sigma_1\mu^2}{4} +2 c_1+c_2 \right )+  c_1   \sum_{k=0}^{\upsilon(t)} \rho^{\upsilon(t)-k}  \left ( 1 -\frac{1}{n} \right )^{k}
\\   = & \nonumber   \rho^{\upsilon(t)}\left ( -  \frac{\gamma \sigma_1\mu^2}{4} +2 c_1+c_2+  c_1   \sum_{k=0}^{\upsilon(t)} \left (\frac{ 1 -\frac{1}{n}}{\rho} \right )^{k}  \right )
\\  \leq & \nonumber  \rho^{\upsilon(t)}\left ( -  \frac{\gamma \sigma_1 \mu^2}{4} +2 c_1+c_2+  c_1   \frac{1}{1- \frac{ 1 -\frac{1}{n}}{\rho}} \right )
\\   = & \nonumber  \rho^{\upsilon(t)}\left ( -  \frac{\gamma \sigma_1 \mu^2}{4} +c_2+  c_1 \left ( 2+ \frac{1}{1- \frac{ 1 -\frac{1}{n}}{\rho}} \right ) \right )
 \end{align}
Based on (\ref{AsySGHT_theorem2_2.1.1}), we can carefully choose $\gamma$ such that $ -  \frac{\gamma \sigma_1 \mu^2}{4} +c_2+  c_1 \left ( 2+ \frac{1}{1- \frac{ 1 -\frac{1}{n}}{\rho}} \right )    \leq 0$.

 Assume that $C(\sigma(w_{\overline{u}_{k}}))$ is the coefficient term of $\sigma(w_{\overline{u}_{k}})$ ($k=1,\cdots,\upsilon(t)-1$) in the big square brackets of (\ref{EqThm4_4}), we have that
  \begin{align}\label{AsySGHT_theorem2_2.1.2}
 & C(\sigma(w_{\overline{u}_{k}}))
\\   = & \nonumber \rho^{\upsilon(t)-k}  \left (  -  \frac{\gamma \sigma_1 \mu^2}{4} +2 c_1+c_2 \right ) + c_2  \sum_{v=k+1}^{\upsilon(t)-1} \left ( 1 -\frac{1}{n} \right )^{v-k} \rho^{\upsilon(t)-v}
\\   = & \nonumber \rho^{\upsilon(t)-k} \left ( -  \frac{\gamma \sigma_1 \mu^2}{4} +2 c_1+c_2  + c_2  \sum_{v=k+1}^{\upsilon(t)-1} \left ( 1 -\frac{1}{n} \right )^{v-k} \rho^{k-v} \right )
\\   = & \nonumber \rho^{\upsilon(t)-k} \left ( -  \frac{\gamma \sigma_1 \mu^2}{4} +2 c_1+c_2  + c_2  \sum_{v=k+1}^{\upsilon(t)-1} \left ( \frac{1 -\frac{1}{n}}{\rho}\right )^{v-k}  \right )
\\   \leq & \nonumber \rho^{\upsilon(t)-k} \left ( -  \frac{\gamma \sigma_1 \mu^2}{4} +2 c_1+c_2 \left ( 1+   \frac{1}{1- \frac{ 1 -\frac{1}{n}}{\rho}}  \right ) \right )
 \end{align}
Based on (\ref{AsySGHT_theorem2_2.1.2}), we can carefully choose $\gamma$ such that $-  \frac{\gamma \sigma_1 \mu^2}{4} +2 c_1+c_2 \left ( 1+   \frac{1}{1- \frac{ 1 -\frac{1}{n}}{\rho}}  \right ) \leq 0$.

 Thus, based on (\ref{EqThm4_3}), we have that
\begin{align}\label{EqThm4_7}
& \left ( \frac{\gamma \sigma_1 \mu^2}{4} -2 c_1-c_2 \right )\frac{2}{L}  e(w_{\overline{u}_{k}})
 \\   \leq & \nonumber  \left ( \frac{\gamma \sigma_1 \mu^2}{4} -2 c_1-c_2 \right )\frac{2}{L}  e(w_{\overline{u}_{k}}) + \mathcal{L}_{t+|K(t)|}
 \\   \stackrel{ (a) }{\leq} & \nonumber \rho^{\upsilon(t)+1} e(w_0) +  \left ( 1 -  \frac{\gamma \sigma_1 \mu}{4} \right )\mathcal{L}_{t}  +  \frac{c_0}{1-\rho}
\\   \stackrel{ (b) }{\leq} & \nonumber \left ( 1 -  \frac{\gamma \sigma_1 \mu}{4} \right )^{\upsilon(t)+1} \mathcal{L}_{0} +  \rho^{\upsilon(t)+1}e(w_0)  \sum_{k=0}^{\upsilon(t)+1} \left (\frac{1 -  \frac{\gamma \sigma_1 \mu}{4}}{\rho} \right )^k
\\ \nonumber & + \frac{c_0}{1-\rho}\sum_{k=0}^{\upsilon(t)} \left ( 1 -  \frac{\gamma \sigma_1 \mu}{4} \right )^{k}
\\   \leq & \nonumber \left ( 1 -  \frac{\gamma  \sigma_1 \mu}{4} \right )^{\upsilon(t)+1} e(w_0)   +  \rho^{\upsilon(t)+1}e(w_0)  \frac{1}{1-\frac{1 -  \frac{\gamma \sigma_1 \mu}{4}}{\rho} } + \frac{c_0}{1-\rho} \frac{4}{\gamma \sigma_1 \mu }
\\   \stackrel{ (c) }{\leq} & \nonumber  \frac{{2\rho- 1 +  \frac{\gamma \sigma_1 \mu}{4} }}{\rho- 1 +  \frac{\gamma \sigma_1 \mu}{4} } \rho^{\upsilon(t)+1}e(w_0)    + \frac{c_0}{1-\rho} \frac{4}{\gamma \sigma_1 \mu }
\end{align}
where the inequality (a) follows from  (\ref{EqThm4_3}), the inequality (b) holds by using the inequality (\ref{EqThm4_3}) recursively, the inequality (c) uses the fact that $ 1 -  \frac{\gamma \sigma_1 \mu}{4} < \rho$.

According to (\ref{EqThm4_7}), we have that
\begin{align}\label{EqThm4_8}
 \nonumber  e(w_{\overline{u}_{k}}) \leq & \frac{{2\rho- 1 +  \frac{\gamma \sigma_1 \mu}{4} }}{(\rho- 1 +  \frac{\gamma \sigma_1 \mu}{4} )\left ( \frac{\gamma \sigma_1 \mu^2}{4} -2 c_1-c_2 \right )} \rho^{\upsilon(t)+1}e(w_0)
\\  & + \frac{4 c_0}{ \gamma \sigma_1 \mu (1-\rho) \left ( \frac{\gamma \sigma_1 \mu^2}{4} -2 c_1-c_2 \right )}
\end{align}

Thus, to achieve the accuracy $\epsilon$ of (\ref{formulation1}) for AFSAGA-VP, \emph{i.e.}, $\mathbb{E} f (w_{\overline{u}_{k}}) -f(w^*) \leq \epsilon$,  we can carefully choose $\gamma$ such that
\begin{eqnarray}
\frac{4 c_0}{ \gamma \sigma_1 \mu (1-\rho) \left ( \frac{\gamma \sigma_1 \mu^2}{4} -2 c_1-c_2 \right )} &\leq& \frac{\epsilon}{2}
\\
0<1 -  \frac{\gamma \sigma_1 \mu}{4} & <&1
\\ -  \frac{\gamma \sigma_1 \mu^2}{4} +2 c_1+c_2 \left ( 1+   \frac{1}{1- \frac{ 1 -\frac{1}{n}}{\rho}}  \right ) &\leq& 0
\\  -  \frac{\gamma \sigma_1 \mu^2}{4} +c_2+  c_1 \left ( 2+ \frac{1}{1- \frac{ 1 -\frac{1}{n}}{\rho}} \right )   &\leq& 0
\end{eqnarray}
and let $\frac{{2\rho- 1 +  \frac{\gamma \sigma_1 \mu}{4} }}{(\rho- 1 +  \frac{\gamma \sigma_1 \mu}{4} )\left ( \frac{\gamma \sigma_1 \mu^2}{4} -2 c_1-c_2 \right )} \rho^{\upsilon(t)+1}e(w_0)  \leq \frac{\epsilon}{2}$, we have that
\begin{eqnarray}\label{EqThm2_5}
\upsilon(t)  \geq \frac{\log \frac{2 \left ({{2\rho- 1 +  \frac{\gamma \sigma_1 \mu}{4} }} \right ) e(w_0) }{\epsilon {(\rho- 1 +  \frac{\gamma \sigma_1 \mu}{4} )\left ( \frac{\gamma \sigma_1 \mu^2}{4} -2 c_1-c_2 \right )} }}{\log \frac{1}{\rho}}
\end{eqnarray}
This completes the proof.
\end{proof}

\end{document}